\theoremstyle{plain}
\newtheorem{theorem}{Theorem}[section]
\newtheorem{lemma}[theorem]{Lemma}
\newtheorem{corollary}[theorem]{Corollary}
\newtheorem{property}[theorem]{Property}
\theoremstyle{definition}
\newtheorem{definition}[theorem]{Definition}
\theoremstyle{remark}
\newtheorem{remark}[theorem]{Remark}
\newcommand{\myeat}[1]{}
\newcommand{\ep}{\epsilon}
\newcommand{\lam}{\lambda}
\newcommand{\Lam}{\Lambda}
\newcommand{\sig}{\sigma}
\newcommand{\vD}{\vDash}
\newcommand{\sse}{\subseteq}
\newcommand\lra\leftrightarrow
\newcommand\Lra\Leftrightarrow
\newcommand\dsb\doublesqbracket
\newcommand{\erdosrenyi}{\text{Erd\H{o}s-R{\'e}nyi}}
\newcommand\stepl{{\left(\ell\right)}}
\newcommand\steplplus{{\left(\ell + 1\right)}}
\newcommand{\meangnn}{\textsc{MeanGNN}\xspace}
\def\eqref#1{equation~(\ref{#1})}
\def\Eqref#1{Equation~(\ref{#1})}
\def\floor#1{\lfloor #1 \rfloor}
\def\1{\bm{1}}
\def\va{{\bm{a}}}
\def\vb{{\bm{b}}}
\def\vc{{\bm{c}}}
\def\vh{{\bm{h}}}
\def\vp{{\bm{p}}}
\def\vr{{\bm{r}}}
\def\vz{{\bm{z}}}
\def\mP{{\bm{P}}}
\def\mW{{\bm{W}}}
\DeclareMathAlphabet{\mathsfit}{\encodingdefault}{\sfdefault}{m}{sl}
\SetMathAlphabet{\mathsfit}{bold}{\encodingdefault}{\sfdefault}{bx}{n}
\def\sR{{\mathbb{R}}}
\DeclareMathOperator*\WMean{\mathrm{WMean}}
\newcommand\Aggo[1]{\textsc{Agg}{[}#1]}
\newcommand{\RW}{{\textsc{RW}}}
\newcommand{\GCNop}{{\textsc{GCN}}}
\DeclareMathOperator{\Arrange}{\mathrm{Arrange}}
\newcommand{\update}{\small \textsc{Upd}\xspace}
\newcommand{\aggregate}{\small\textsc{Agg}\xspace}
\newcommand{\attention}{\small \textsc{Att}\xspace}
\newcommand{\mean}{\small \textsc{Mean}\xspace}
\newcommand{\scale}{\ensuremath{\mathrm{scale}}\xspace}
\newcommand{\score}{\ensuremath{\mathrm{score}}\xspace}
\newcommand{\feature}{\mathrm H\xspace}
\newcommand{\randomwalk}{ \text{rw}\xspace}
\newcommand{\wmeanop}{\small \textsc{Wmean}}
\newcommand\GrTp{\mathrm{GrTp}}
\newcommand\ComTp{\mathrm{ComTp}}
\newcommand\ComTpSp{\mathsf{ComTp}}
\newcommand\GrTpSp{\mathsf{GrTp}}
\DeclareMathOperator\Ne{\mathcal N}
\renewcommand{\Pr}{\ensuremath{\mathbb{P}}}
\DeclareMathOperator*\Ex{\mathbb{E}}
\newcommand{\ER}{\ensuremath{\mathrm{ER}}}
\newcommand{\SBM}{\ensuremath{\mathrm{SBM}}}
\newcommand{\BA}{\ensuremath{\mathrm{BA}}}
\newcommand{\regularER}{\ER(n, p(n)=0.1)}
\newcommand{\lognER}{\ER(n, p(n)=\frac{\log{n}}{n})}
\newcommand{\sparseER}{\ER(n, p(n)=\frac{1}{50n})}
\newcommand{\BAm}{\BA(n, m=5)}
\newcommand{\Fe}{\ensuremath{\mathbb{F}}}
\newcommand{\R}{\ensuremath{\mathbb{R}}}
\newcommand{\reals}{\R}
\newcommand{\N}{\ensuremath{\mathbb{N}}}
\DeclarePairedDelimiter{\doublesqbracket}{\llbracket}{\rrbracket}
\DeclarePairedDelimiter{\abs}{|}{|}
\DeclarePairedDelimiter{\norm}{\lVert}{\rVert}
\newcommand\Ext{\mathsf{Ext}}
\DeclareMathOperator\Reach{\mathrm{Reach}}
\setlist{leftmargin=15pt, itemsep=1pt, topsep=1pt}
\title{Almost Surely Asymptotically Constant Graph Neural Networks}
\author{
Sam Adam-Day\thanks{Corresponding author. Email: \texttt{me@samadamday.com}}
\qquad
Michael Benedikt
\qquad 
{\.I}smail {\.I}lkan Ceylan
\qquad
Ben Finkelshtein\\
Department of Computer Science\\
University of Oxford\\
Oxford, UK
}
\begin{document}

\maketitle

\begin{abstract}
     We present a new angle on the expressive power of graph neural networks (GNNs) by studying how the predictions of real-valued GNN classifiers, such as those classifying graphs probabilistically, evolve as we apply them on larger graphs drawn from some random graph model. We show that the output converges to a \emph{constant} function, which upper-bounds what these classifiers can uniformly express. This strong convergence phenomenon applies to a very wide class of GNNs, including state of the art models, with aggregates including mean and the attention-based mechanism of graph transformers. Our results apply to a broad class of random graph models, including sparse and dense variants of the Erdős–Rényi model, the stochastic block model, and the Barabási-Albert model. We empirically validate these findings, observing that the convergence phenomenon appears not only on random graphs but also on some real-world graphs.
\end{abstract}

\section{Introduction}
\begin{wrapfigure}{r}{0.51\textwidth}
    \centering
    \resizebox{0.49\textwidth}{!}{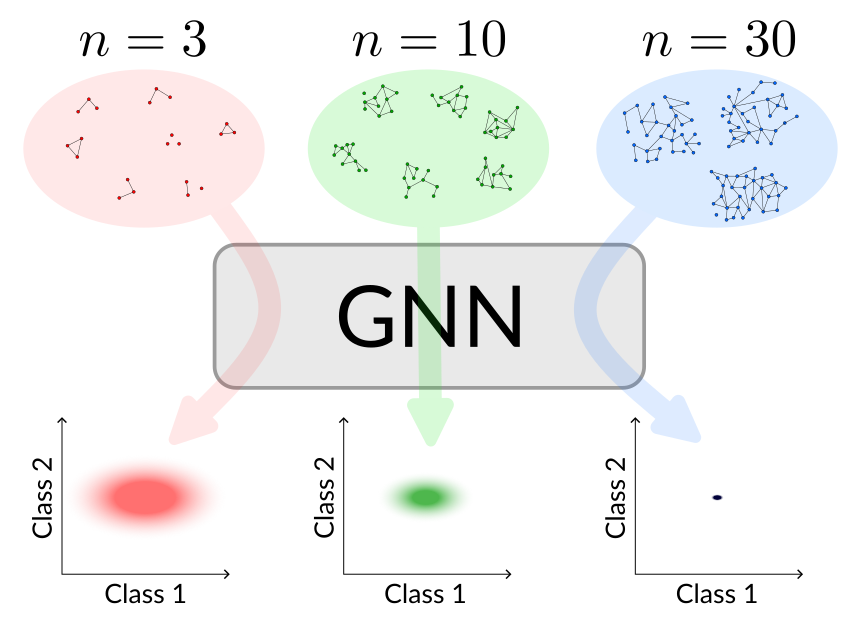}
    \caption{The output of the considered GNNs eventually become constant as the graph sizes increase.}
    \label{fig:converge}
      \vspace{-1.2em}
\end{wrapfigure}
Graph neural networks (GNNs) \cite{FrancoMCMG09,GoriMS05} have become prominent for graph machine learning with applications in domains such as life sciences~\cite{ShlomiBV21,DuvenaudMIBHGA15,KearnesCBPR16,ZitnikAL18}. Their empirical success motivated  work investigating their theoretical properties, pertaining to their expressive power~\cite{Xu18,MorrisAAAI19,logicalexpressiveness,chen2020can,surprisingpower,SatoSDM2020,Grohe23descriptive,geerts2022,Zha+2023}, generalization capabilities~\cite{Gar+2020,Lia+2021,maskeygeneralization,Sca+2018,wlvc}, and convergence properties~\cite{kerivenuniversality, kerivengcnn, maskeygeneralization,graphonlevie2023,adamday2023zeroone}. 

We consider GNNs outputting real-valued vectors, such as those which classify graphs probabilistically, and ask: \emph{how do the outputs of these GNNs evolve as we apply them on larger graphs drawn from a random graph model?}

Our study provides a surprising answer to this question: the output of many  GNNs eventually become \emph{independent} of their inputs -- each model eventually outputs the same values on all graphs -- as graph sizes increase (\Cref{fig:converge}). This ``almost sure convergence'' to a \emph{constant distribution} is much stronger than convergence to \emph{some} limit object~\cite{kerivenuniversality, kerivengcnn,graphonlevie2023}. The immediate consequence of this strong convergence phenomenon is to upper bound the uniform expressiveness of the considered model architectures: \emph{these architectures can uniformly express only the classifiers that are almost surely asymptotically constant}. In other words, our results provide impossibility results for what tasks are \emph{in principle} learnable by GNNs. While our top-level results are for graph classification, in the process we provide strong limitations on what node- and edge-classification can be performed by GNNs on random graphs: see, for example Theorem \ref{thm:inductivenonsparse}.

\textbf{Scope of the result.} The core approach in graph machine learning is based on iteratively updating node representations of an input graph by an \emph{aggregate} of messages flowing from the node's neighbours~\cite{GilmerSRVD17}. This approach can be extended to  global aggregates~\cite{BattagliaGraphNetworks}. Our main result holds for all architectures that use \emph{weighted mean} as an aggregation function and it is extremely robust in the following two dimensions:

\begin{enumerate}
\item \emph{Model architectures}: Our result is very general and abstracts away from low-level architectural design choices. To achieve this, we introduce an \emph{aggregate term language} using weighted mean aggregation and provide an ``almost sure optimization'' result for this language: our result states that every term in the language can be simplified to Lipschitz functions for most inputs. Thus, any architecture that can be expressed in this language follows the same convergence law. This includes \emph{graph attention networks} (GATs)~\cite{Velickovic2018}, as well as popular (graph) transformers, such as the \emph{General, Powerful, Scalable Graph Transformer} (GPS) with random walk encodings \cite{Rampavsek2022}. The term language can seamlessly capture common design choices, such as skip and jumping knowledge connections~\citep{Xu2018}, or global aggregation schemes \citep{BattagliaGraphNetworks}.

\item \emph{Random graph models}: All results apply to a wide class of random graph models, including Erd\H{o}s-R{\'e}nyi  models of various sparsity levels, the Barabási-Albert preferential attachment model, and the stochastic block model. The sparse models are more realistic than their dense counterparts which makes it typically harder to obtain results for them. This is also reflected in our study, as the results for sparse and dense models require very different proofs. 
\end{enumerate}

\textbf{Contributions.} The key contributions of this paper are as follows:

\begin{itemize}
\item We introduce a flexible aggregate term language with attractive closure properties (\Cref{sec:termlanguage}) and prove an ``almost sure convergence'' result for this language relative to a wide class of random graph models (\Cref{sec:convergence}). This result is of independent interest since it pushes the envelope for convergence results from classical logical languages \cite{faginzeroone,lynchlinearsparse} to include aggregation. 
\item We show that a diverse class of architectures acting as real-valued graph classifiers can be expressed in the term language (\Cref{sec:termlanguage}). In \Cref{sec:convergence} we present ``almost sure convergence'' results for our term language, from which we derive results about GNNs (\Cref{cor:asymptoticallyconstant}). The results are robust to many practical architectural design choices and even hold for architectures using mixtures of layers from different architectures. We also show strong convergence results for real-valued node classifiers in many graph models.
\item We validate these results empirically, showing the convergence of these  graph classifiers in practice (\Cref{sec:experiments}) on graphs drawn from the random models studied. In addition, we probe the real-world significance of our results by testing for convergence on a dataset with varying size dataset splits. Across all experiments we observe rapid convergence to a constant distribution. Interestingly, we note some distinctions between the convergence of the sparse and non-sparse Erd\H{o}s-R{\'e}nyi model, which we can relate to the proof strategies for our convergence laws.
\end{itemize}
\section{Related work}
\textbf{Uniform expressiveness.} The expressive power of MPNNs is studied from different angles, including their power in terms of graph distinguishability~\cite{Xu18,MorrisAAAI19}.
The seminal results of~\citet{Xu18,MorrisAAAI19} show that MPNNs are upper bounded by the \emph{1-dimensional Weisfeiler Leman graph isomorphism test (1-WL)} in terms of graph distinguishability. 
 WL-style expressiveness results are inherently non-uniform, i.e., the model construction is dependent on the graph size.
There are also recent studies that focus on uniform expressiveness~\cite{logicalexpressiveness, somemight,adamday2023zeroone}. In particular, \citet{adamday2023zeroone} investigate the uniform expressive power of GNNs with randomized node features, which are known to be more expressive in the non-uniform setting~\cite{surprisingpower,SatoSDM2020}.
They show that for classical Erd\H{o}s-R{\'e}nyi graphs, GNN binary classifiers display a zero-one law, assuming certain restrictions on GNN weights and the random graph model. We  focus on real-valued classifiers, where their results do not apply, while dealing with a wider class of random graph models, subsuming popular architectures such as graph transformers.

\textbf{Convergence laws for languages.} Our work situates GNNs within a rich term language built up from graph and node primitives via real-valued functions and aggregates. Thus it relates to  convergence laws for logic-based languages on random structures, dating back to the zero-one law of \citet{faginzeroone}, including \cite{randomwords,shelahspencersparse,albertosparselimits}. We are not aware of \emph{any} prior convergence laws for languages with aggregates; the only work on numerical term languages is by \citet{zereonelawsemiring}, which deals with a variant of first-order logic in general semi-rings.

\textbf{Other notions of convergence on random graphs.} The works of \citet{keriven23converge, kerivenuniversality, kerivengcnn, maskeygeneralization,graphonlevie2023} consider convergence to continuous analogues of GNNs, often working within metrics on a function space. The results often focus on dense random graph models, such as graphons \cite{graphonoriginal}. Our approach is fundamentally different in that we can use the standard notion of asymptotic convergence in Euclidean space, comparable to traditional language-based convergence results outlined above, such as those by \citet{faginzeroone} and \citet{lynchlinearsparse}. 
The key point is that a.a.s.\@ constancy is a \emph{very} strong notion of convergence and it does not follow from convergence in the senses above. In fact, obtaining such a strong convergence result depends heavily on the details of the term language, as well as the parameters that control the random graph: see \cref{sec:conc} for further discussion of the line between a.a.s.\@ convergence and divergence. Our study gives particular emphasis to sparse random graph models, like Barabási-Albert, which are closer to  graphs arising in practice.

\section{Preliminaries} \label{sec:prelims}

\subsection{Featured random graphs and convergence}

\textbf{Random graphs.} We consider simple, undirected graphs $G=(V_G,E_G,H_G)$ where each node is associated with a vector of \emph{node features} given by $H_G: V_G \to \sR^d$.  We refer to this as a \emph{featured graph}. 
We are interested in \emph{random graph models}, specifying for each number $n$ a distribution $\mu_n$ on graphs with $n$ nodes, along with \emph{random graph feature models}, where we have a distribution $\mu_n$ on featured graphs with $n$ nodes.
Given a random graph model and a distribution $\nu$ over $\reals^d$, we get a random graph feature model
 by letting the node features be chosen independently of the graph structure via $\nu$. 

\textbf{Erd\H{o}s-R{\'e}nyi and the stochastic block model.} The most basic random graph model we deal with is the \emph{ Erd\H{o}s-R{\'e}nyi distribution} $\ER(n, p(n))$, where an edge is included in the graph with $n$ nodes with probability $p(n)$ \cite{er-model}. The classical case is when $p(n)$ is independent of the graph size $n$, which we refer as the \emph{dense} ER distribution.
We also consider the \emph{stochastic block model} $\SBM(n_1,\ldots, n_M, P)$, which contains $m$ communities of sizes $n_1,\ldots, n_M$ and an edge probability matrix between communities $\mP\in \sR^{M\times M}$. A community $i$ is sampled from the Erd\H{o}s-R{\'e}nyi distribution $\ER(n, p(n)=\mP_{i,i})$ and an edge to a node in another community $j$ is included with probability $\mP_{i,j}$. 

\textbf{The Barabási-Albert preferential attachment model.} Many graphs encountered in the real world obey a \emph{power law}, in which a few vertices are far more connected than the rest \cite{alma991004952639707026}. The \emph{Barabási-Albert distribution} was developed  to model this phenomenon \cite{ba-model}. It is parametrised by a single integer $m$, and the $n$-vertex graph $\BA(n, m)$ is generated sequentially, beginning with a fully connected $m$-vertex graph. Nodes are added one at a time and get connected via $m$ new edges to previous nodes, where the probability of attaching to a node is proportional to its degree.

\textbf{Almost sure convergence.} Given any function $F$ from featured graphs to real vectors and a random featured graph model $(\mu_n)_{n \in \N}$, we say $F$ \emph{converges asymptotically almost surely} (converges a.a.s.) to a vector $\vz$ with respect to $\bar \mu$ if for all $\ep, \theta > 0$ there is $N \in \N$ such that for all $n \geq N$, with probability at least $1-\theta$ when drawing featured graphs $G$ from $\mu_n$, we have that $\norm{F(G) - \vz} < \ep$.

\subsection{Graph neural networks and graph transformers}

We first briefly introduce \emph{message passing neural networks}~(MPNNs) ~\citep{GilmerSRVD17}, which include the vast majority of graph neural networks, as well as (graph) transformers.

\textbf{Message passing neural networks.} Given a featured graph $G=(V_G,E_G,H_G)$, an MPNN sets the initial features $\vh_v^{(0)}=H_G(v)$ and iteratively updates the feature $\vh_{v}^{(\ell)}$ of each node $v$, for $0 \leq \ell \leq L - 1$, based on the node's state and the state of its neighbors $\mathcal{N}(v)$ by defining $\vh_v^\steplplus$ as:
\begin{align*}
    \update^{(\ell)}  \left(
	\vh_v^\stepl, 
	\aggregate^{(\ell)}\Big(\vh_v^\stepl, \{\!\!\{\vh_u^\stepl\mid u\in\mathcal{N}(v)\}\!\!\}\Big)\right),
\end{align*}
where $\{\!\!\{\cdot\}\!\!\}$ denotes a multiset and $\update^{(\ell)}$ and $\aggregate^{(\ell)}$ are differentiable \emph{update} and \emph{aggregation} functions, respectively. 
The final node embeddings are pooled to form a graph embedding vector $\vz_G^{\left(L\right)}$ to predict properties of entire graphs. A \meangnn is an MPNN where the aggregate is mean.

\textbf{GATs.} One class of MPNNs are graph attention networks (GATs) \cite{Velickovic2018}, where each node is updated with a weighted average of its neighbours' representations, letting $\vh_v^\steplplus$ be:
\begin{equation*}
\sum_{u\in \mathcal{N}(v)}
\frac{
{\exp\left(\score \left(\vh_v^{(\ell)}, \vh_u^{(\ell)}\right) \right)}}
{\sum\limits_{w\in \mathcal{N}(v)}\exp\left(\score \left(\vh_v^{(\ell)}, \vh_w^{(\ell)}\right) \right)}
\mW^{(\ell)} \vh_u^{(\ell)},
\end{equation*}
where $\score$ is a certain learnable Lipschitz function.

\textbf{Graph transformers.} Beyond traditional MPNNs, \emph{graph transformers} extend the well-known transformer architecture to the graph domain. The key ingredient in transformers is the self-attention mechanism. Given a featured graph, a single attention head computes a new representation for every (query) node $v$, in every layer $\ell>0$ as follows: 
\begin{align*}
\attention(v) = 
\sum_{u\in V_G}
\frac{
{\exp\left(\scale \left(\vh_v^{(\ell)}, \vh_u^{(\ell)}\right) \right)}}
{\sum\limits_{w\in V_G}\exp\left(\scale \left(\vh_v^{(\ell)}, \vh_w^{(\ell)}\right) \right)}
\mW^{(\ell)} \vh_u^{(\ell)}
\end{align*}
where $\scale$ is another learnable Lipschitz function (the scaled dot-product).

The vanilla transformer architecture ignores the graph structure. Graph transformer architectures~\cite{Yin+2021,Rampavsek2022,GRIT} address this by explicitly encoding graph inductive biases, most typically in the form of positional encodings (PEs). In their simplest form, these encodings are additional features $\vp_v$ for every node $v$ that encode a node property (e.g., node degree) which are concatenated to the node features $\vh_v$. The random walk positional encoding (RW) \citep{dwivedi2021graph} of each node $v$ is given by:
\begin{equation*}
    \randomwalk_v = [\randomwalk_{v,1}, \randomwalk_{v,2},\ldots \randomwalk_{v,k}],
\end{equation*}
where $\randomwalk_{v,i}$ is the probability of an $i$-length random walk that starts at $v$ to end at $v$.

The GPS architecture \cite{Rampavsek2022} is a representative graph transformer, which applies a parallel computation in every layer: a transformer layer (with or without PEs) and an MPNN layer are applied in parallel and their outputs are summed to yield the node representations. By including a standard MPNN in this way, a GPS layer can take advantage of the graph topology even when there is no positional encoding. In the context of this paper, we  write GPS to refer to a GPS architecture that uses an MPNN with \emph{mean} aggregation, and GPS+RW if the architecture additionally uses a random-walk PE. 

\textbf{Probabilistic classifiers.}
We are looking at models that produce a vector of reals on each graph.
All of these models can be used as probabilistic graph classifiers. We only need to ensure that the final layer is a softmax or sigmoid applied to pooled representations.

\section{Model architectures via term languages} \label{sec:termlanguage}

We demonstrate the robustness and generality of the convergence phenomenon by defining a term language consisting of compositions of operators on graphs. Terms are formal sequences of symbols which when interpreted in a given graph yield a real-valued function on the graph nodes.

\begin{definition}[Term language] 
    $\Aggo{\wmeanop}$ is a term language which contains node variables $x, y, z, \ldots$ and terms defined inductively:\footnote{In the body we reserve $x, y, z$ for free variables and $u, v, w$ for concrete nodes in a graph.}
    \begin{itemize}
        \item  The \emph{basic terms} are of the form $\mathrm H(x)$,  representing the features  of the node $x$, and constants $\vc$.
        \item  Let $h \colon \R^d \to (0, \infty)^d$ be a function which is Lipschitz continuous on every compact domain. Given terms $\tau$ and $\pi$, the \emph{local $h$-weighted mean} for node variable $x$ is:
        \begin{equation*}
            \sum_{y \in \mathcal N(x)} \tau(y) \star h(\pi(y))
        \end{equation*}
        The interpretation of $\star$  will be defined below. The \emph{global $h$-weighted mean} is the term:
        \begin{equation*}
            \sum_y \tau(y) \star h(\pi(y))
        \end{equation*}     
        \item Terms are closed under applying a  function symbol for each Lipschitz continuous 
            $F \colon \R^{d \times k} \to \R^d$ for any $k \in \N^+$.
    \end{itemize}
\end{definition}
The weighted mean operator takes a weighted average of the values returned by $\tau$. It uses $\pi$ to perform a weighting, normalizing the values of $\pi$ using $h$ to ensure that we are not dividing by zero (see below for the precise definition).

To avoid notational clutter, we keep the dimension of each term fixed at $d$. It is possible to simulate terms with different dimensions by letting $d$ be the maximum dimension and padding the vectors with zeros, noting that the padding operation is Lipschitz continuous.

We make the interpretation of the terms precise as follows. See \cref{fig:term eval} for a graphical example of evaluating a term on a graph.
\begin{definition}
    \label{def:term language interpretation}
    Let $G=(V_G,E_G,H_G)$ be a featured graph. Let $\tau$ be a term with free variables $x_1, \ldots, x_k$ and $\bar u = (u_1, \ldots, u_k)$ a tuple of nodes. The \emph{interpretation} $\dsb{\tau(\bar u)}_{G}$ of term $\tau$ graph $G$ for tuple $\bar u$ is defined recursively:
    \begin{itemize}
        \item $\dsb{\vc(\bar u)}_{G} = \vc$ for any constant $\vc$.
        \item $\dsb{\mathrm H(x_{i})(\bar u)}_{G}  = H_G(u_i)$ for the $i$\textsuperscript{th} node's features.
        \item $\dsb{F(\tau_1, \ldots, \tau_k)(\bar u)}_{G}  = F(\dsb{\tau_1(\bar u)}_G, \ldots, \dsb{\tau_k(\bar u)}_G)$ for any function symbol $F$. 
           \item For any term composed using $\star$:
        \begin{equation*}
            \dsb* {\sum_{y \in \mathcal N(x_i)} \tau \star h(\pi)(\bar u)}_{G} = \left\{ \begin{array}{cc}
             \frac 
                 {\sum\limits_{v \in \mathcal N(u_i)} \dsb {\tau(\bar u, v)}_{G}  h(\dsb {\pi(\bar u, v)}_{G})} 
                 {\sum\limits_{v \in \mathcal N(u_i)} h(\dsb {\pi(\bar u, v)}_{G})}
             & \mbox{if $\Ne(u_i)\neq \emptyset$};\\
            {\bf 0} & \mbox{if $\Ne(u_i)= \emptyset$}.\end{array} \right. 
        \end{equation*}
    \end{itemize}
The semantics of global weighted mean is defined analogously and omitted for brevity.
\end{definition}

\begin{figure}[h]
    \centering
    \includegraphics[width=.7\linewidth]{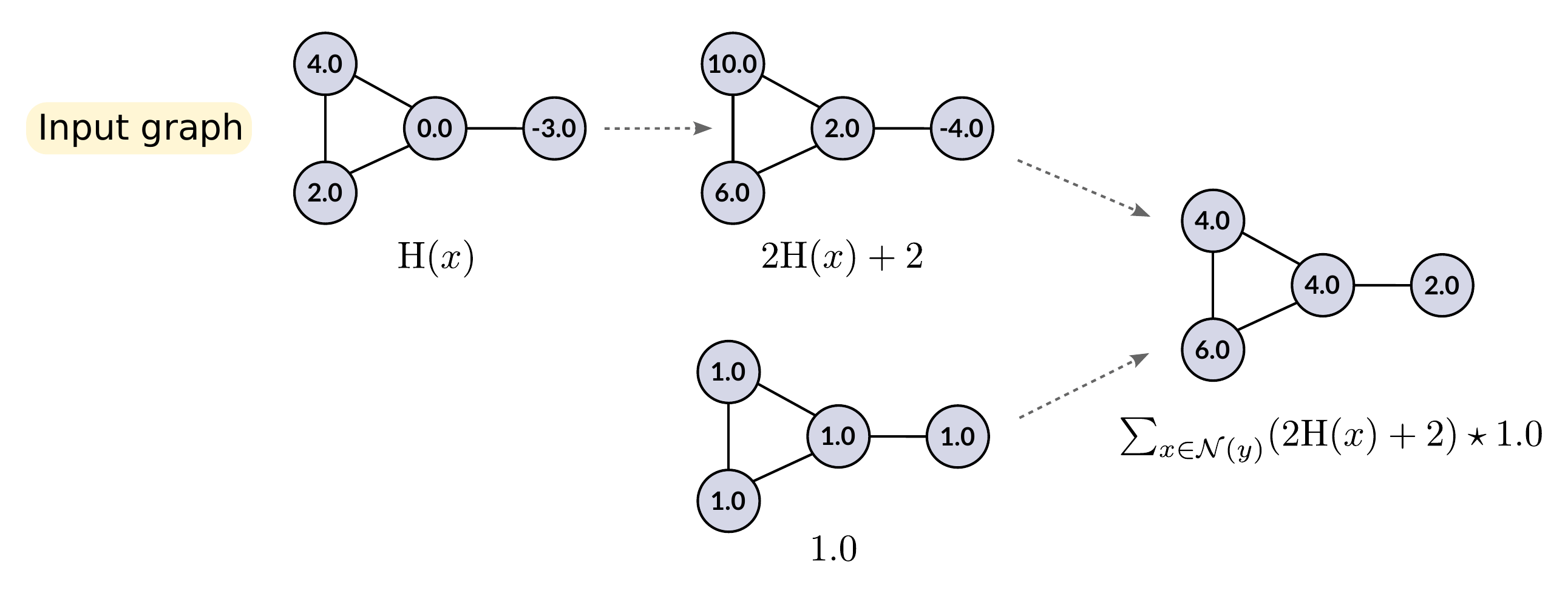}
    \caption{Evaluation of the term $\sum_{x \in \mathcal N(y)} (2\mathrm H(x) + 2) \star 1.0$ on a small graph with scalar features. The term computes the mean of $z \mapsto 2z + 2$ on each of a node's neighbours. As each sub-term has one free variable, we can represent the intermediate results as scalar values for each node.}
    \label{fig:term eval}
\end{figure}

A \emph{closed} term has all node variables bound by a weighted mean operator: so the implicit input is just a featured graph.

\begin{definition}
    We augment the term language to $\Aggo{\wmeanop, \RW}$ by adding the random walk operator $\randomwalk(x)$. The interpretation of $\randomwalk(x_i)$ given a graph $G$ and a tuple of nodes $\bar u$, is:
    \begin{equation*}
        \dsb{\randomwalk(x_i)(\bar u)}_G = [\randomwalk_{u_i, 1}, \ldots, \randomwalk_{u_i, d}]
    \end{equation*}
\end{definition}

\subsection{How powerful is the term language?}
Various architectures can be described using this term language. The core idea is always the same: we show that all basic building blocks of the architecture can be captured in the term language and applying this inductively yields the desired result. Let us first note that all linear functions and all commonly used activation functions are Lipschitz continuous, and therefore included in the language.

\textbf{MPNNs with mean aggregation.} Consider an $L$-layer MPNN with mean aggregation, update functions $\update^{(\ell)}$ consisting of an activation function applied to a linear transformation, with mean pooling at the end. First, note that mean aggregation can be expressed as:
\begin{equation*}
    \mean_y \pi(y) \coloneqq
        \sum_{y} \pi(y) \star 1.
\end{equation*}
For each layer $0\leq \ell < L$, we define a term $\tau^{(\ell)}(x)$ which will compute the representation of a node at layer $\ell$ of the MPNN:
\begin{itemize}
    \item \textbf{Initialization}. Let $\tau^{(0)}(x) \coloneqq \feature(x)$. Then the value $\dsb{\tau^{(0)}(x)(u)}_G$ at a node $u$ is the initial node representation $H(u)$.
    \item \textbf{Layers}. For $1 \leq \ell < L$, define
        $\tau^{(\ell+1)}(x) \coloneqq \update^{(\ell)} 
        \left(x,\mean_{y \in \mathcal N(x)} \tau^{(\ell)}(y) \right)$. Then the value at node $u$ is the following, which conforms with the inductive construction of the MPNN:
        \begin{equation*}
            \dsb{\tau^{(\ell+1)}(x)(u)}_G = 
            \update^{(\ell)} \left(
                \dsb{\tau^{(\ell)}(x)(u)}_G, 
                \frac 1 {\abs{\mathcal N(u)}}
                \sum_{v \in \mathcal N(u)} \dsb{\tau^{(\ell)}(x)(v)}_G 
            \right)
        \end{equation*}
    \item \textbf{Final mean pooling}. The final graph representation is computed as $\tau \coloneqq \mean_x \tau^{(L)}(x)$.
\end{itemize}

The idea is similar for the other architectures, where the difference lies in the aggregation functions. Thus below we only present how the term language captures their respective aggregation functions.

\textbf{Graph transformers.} We can express the self-attention mechanism of transformers using the following aggregator:
\begin{equation*}
    \attention_y \pi(y) \coloneqq \sum_{y} \pi(y) \star \exp(\scale(\pi(x), \pi(y)))
\end{equation*}
The function $\scale(\pi(x), \pi(y))$ is a term in the language, since scaled dot product attention is a Lipschitz function. 
To see how graph transformer architectures such as GPS can be expressed, it suffices to note that we can express both self-attention layers and MPNN layers with mean aggregation, since the term language is closed under addition. The random walk positional encoding can also be expressed using the $\randomwalk$ operator.

\textbf{Graph attention networks.}
The attention mechanism of GAT is local to a node's neighbours
and can be expressed in our term language using similar ideas, except using the local aggregate terms.

\textbf{Additional architectural features.}
Because the term language allows the arbitrary combination of graph operations, it can robustly capture many common architectural choices used in graph learning architectures. For example,
  a \emph{skip connection} or \emph{residual connection} from layer $\ell_1$ to layer $\ell_2$ can be expressed by including a copy of the term for layer $\ell_1$ in the term for the layer $\ell_2$ \citep{Xu2018}.
    \emph{Global readout} can be captured using a global mean aggregation \citep{BattagliaGraphNetworks}.
    Attention  \emph{conditioned on computed node or node-pair representations} can be captured by including the term which computes these representations in the mean weight \citep{GRIT}.

\textbf{Capturing probabilistic classification.}
Our term language defines bounded vector-valued functions over graphs. Standard normalization functions, like softmax and sigmoid,  are easily expressible in our term language, so probabilistic classifiers are subsumed.

\textbf{Graph convolutional networks.} In \cref{sec:gcn} we show how to extend the term language to incorporate graph convolutional networks (GCNs) \cite{Kipf16} by adding a new aggregator.

\section{Convergence theorems}
\label{sec:convergence}

We start by presenting the convergence theorem.

\begin{theorem}
    \label{thm:main result}
    Consider $(\mu_n)_{n \in \N}$ sampling a graph $G$ from any of the following models and node features independently from i.i.d. bounded distributions on $d$ features.
    \begin{enumerate}
        \item The Erdős-Rényi distribution $\ER(n, p(n))$ where $p$ satisfies any of the following properties.
        \begin{compactitem}
            \item
            \textbf{Density}. $p$ converges to $\tilde p > 0$.
            \item
            \textbf{Root growth}. For some $K >0$ and $0 < \beta < 1$ we have:
                $p(n) = Kn^{-\beta}$.
            \item
            \textbf{Logarithmic growth}. For some $K > 0$ we have:
                $p(n) = K \frac{\log(n)} n$.
            \item
            \textbf{Sparsity}. For some $K > 0$ we have: $p(n) = Kn^{-1}$.
        \end{compactitem} 
        
        \item The Barabási-Albert model $\BA(n, m)$ for any $m \geq 1$.

        \item The stochastic block model $\SBM(n_1(n), \ldots, n_m(n), \mP)$ where $n_1, \ldots, n_m \colon \N \to \N$ are such that $n_1(n) + \cdots + n_m(n) = n$ and each $\frac{n_i} n$ converges, and $\mP$ is any symmetric $m \times m$ edge probability matrix.
    \end{enumerate}
    Then every $\Aggo {\wmeanop, \RW}$ term converges a.a.s.\@ to a constant with respect to $(\mu_n)_{n \in \N}$.\footnote{The appendix includes additional results for the GCN aggregator.}
\end{theorem}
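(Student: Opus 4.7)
The plan is to proceed by structural induction on terms in $\Aggo{\wmeanop, \RW}$. A key difficulty is that for open terms the naive statement ``term value converges to a constant'' fails: the value of $\mathrm H(x)$ at a single node $u$ is simply the random feature $H_G(u)$, which does not concentrate. So I would strengthen the induction hypothesis: for each term $\tau(\bar x)$ with $k$ free variables there exists a Lipschitz function $f_\tau \colon (\sR^d)^k \to \sR^d$ such that, with probability tending to $1$ over the graph and features, the fraction of tuples $\bar u$ for which $\|\dsb{\tau(\bar u)}_G - f_\tau(H_G(\bar u))\|$ exceeds $\ep$ is smaller than $\theta$. For closed terms, $k=0$ and $f_\tau$ is a constant, yielding the theorem.

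The base cases are immediate: constants are their own $f$, and $f_{\mathrm H(x)}$ is the identity. The Lipschitz function closure follows since compositions of Lipschitz functions are Lipschitz. The main work is the weighted mean case. For the global form $\sum_y \tau(y) \star h(\pi(y))$, by the i.i.d.\@ feature distribution and the inductive hypothesis, the numerator and denominator are, up to small error, empirical averages of $f_\tau(H) h(f_\pi(H))$ and $h(f_\pi(H))$; the law of large numbers then yields a constant $f$. For the local form, the strategy splits by regime. In the dense and moderately sparse cases (constant, root-growth, and logarithmic $p$), degrees grow with $n$, so concentration inside each neighborhood shows the local mean converges to the same constant as the global mean for almost all choices of center node. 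In the constant-degree regimes (sparse ER, Barabási-Albert), I would invoke local weak convergence: neighborhoods converge in distribution to a limit object (a Galton-Watson tree for sparse ER, and a suitable randomly-grown limit for BA), and the local weighted mean around a typical node can then be expressed as an expectation over this limit, which together with i.i.d.\@ features gives a Lipschitz $f_\tau$ in the center node's features. Finally, the $\RW$ operator is handled by first proving concentration of the random-walk return probabilities $\randomwalk_{v,i}$ in each model (these depend, in the limit, only on the local neighborhood structure), so $\randomwalk(x)$ enters the induction as an additional feature-like primitive whose $f$ is determined by the limit neighborhood distribution. The stochastic block model is reduced to the ER case by partitioning into communities and running the argument inside each block.

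The hardest step is the local weighted mean in the constant-degree regimes. There, the node-level values of terms with local aggregates are genuinely random and depend on the $O(1)$-sized neighborhood around the evaluation point, so there is no concentration at the level of individual nodes; all the work has to be done through local weak convergence and careful bookkeeping to show that, when later wrapped in a further aggregation, the resulting ensemble still produces a concentrated average. Secondary obstacles are (i) controlling the edge correlations and heavy-tailed degree distribution of Barabási-Albert (likely requiring a coupling to a branching-process-style limit), (ii) uniformizing the induction so that the ``small fraction of bad tuples'' error does not blow up under nested aggregations, which requires choosing the $\ep,\theta$ budgets carefully inside each inductive step, and (iii) showing that the $\randomwalk$ encoding concentrates in sparse regimes, where random walks mix slowly and the dependence on local structure is most pronounced.
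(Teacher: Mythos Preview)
Your overall architecture is right: structural induction on terms, a strengthened invariant for open terms, a dense/sparse split, and local weak convergence for the constant-degree regimes. This matches the paper. But the specific invariant you propose is too weak, and this is a genuine gap rather than a detail.

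In the sparse $\ER$ and Barab\'asi--Albert cases, your controller $f_\tau \colon (\sR^d)^k \to \sR^d$ depends only on the features $H_G(\bar u)$ of the free-variable nodes. That cannot work. Take $\tau(x) = \sum_{y \in \mathcal N(x)} \mathrm H(y) \star 1$: its value at $u$ is the mean of the features of the $O(1)$ neighbours of $u$, which is not a function of $H_G(u)$ at all, and does not concentrate. You acknowledge this (``no concentration at the level of individual nodes''), but then say the fix is ``careful bookkeeping''. The paper's fix is structural: the controllers in the sparse regime are indexed by the \emph{isomorphism type $T$ of the $\ell$-neighbourhood} of $\bar u$ (where $\ell$ is the local-aggregate nesting depth of the term), and take as input the features of \emph{all} nodes in that neighbourhood, not just the roots. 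With that invariant, local weighted mean becomes a fixed Lipschitz function of the neighbourhood features (for each fixed $T$), and only the \emph{global} mean step needs weak local convergence, where it is used to show that the proportion of nodes with each neighbourhood type converges. Your treatment of $\randomwalk$ in the sparse case has the same issue: $\randomwalk(x)$ at a single node is determined by its neighbourhood, so again the controller must be per neighbourhood type, not a single feature-level function.

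There is a second, subtler gap in the non-sparse case. Your invariant says ``for most $(k{+}1)$-tuples $(\bar u, v)$ the approximation holds''. For a local aggregate $\sum_{y \in \mathcal N(x_i)} \cdots$ you need this for most $(\bar u, v)$ \emph{with $v \in \mathcal N(u_i)$}, which is a vanishingly small slice of all tuples in the sub-dense regimes. The paper strengthens the invariant so that it holds relative to any conjunction of edge constraints (what they call a $\wedge$-description), and proves a regularity lemma that transfers ``most tuples satisfying $\phi(\bar u) \wedge u_i E v$'' to ``for most $\bar u$ satisfying $\phi$, most neighbours $v$ of $u_i$''. Without that conditioning (or an equivalent symmetry argument you have not supplied), the induction step for local weighted mean does not go through.
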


Concretely, this result shows that for any probabilistic classifier, or other real-valued classifier, which can be
expressed within the term language, when drawing graphs from any of these
distributions, eventually the output of the classifier will be the same regardless of the
input graph, asymptotically almost surely. Thus, the only probabilistic classifiers which can be expressed by such models are those which are asymptotically constant.

\begin{corollary} 
    \label{cor:asymptoticallyconstant} 
    For any of the random graph featured models above, for any MeanGNN, GAT, or GPS + RW, there is a distribution $\bar p$ on the classes such that the class probabilities converge asymptotically almost surely to $\bar p$. 
\end{corollary}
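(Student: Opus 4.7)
The plan is to derive \Cref{cor:asymptoticallyconstant} directly from \Cref{thm:main result} by exhibiting, for each of the three architectures, a closed term in $\Aggo{\wmeanop,\RW}$ that computes the real-valued vector output of the classifier on every featured graph. Once this translation is in place, the convergence of the term's interpretation to a constant vector (guaranteed by \Cref{thm:main result} for all the listed random graph models) immediately transfers to the classifier's output, and the corollary follows.

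First I would set up the translations, reusing the constructions already sketched in \Cref{sec:termlanguage}. For a \meangnn with $L$ layers and a final mean-pooling readout, I would build terms $\tau^{(0)}, \ldots, \tau^{(L)}$ inductively as in the excerpt, starting from $\tau^{(0)}(x) \coloneqq \feature(x)$ and iterating $\tau^{(\ell+1)}(x) \coloneqq \update^{(\ell)}\bigl(x,\mean_{y \in \mathcal N(x)} \tau^{(\ell)}(y)\bigr)$, followed by $\tau \coloneqq \mean_x \tau^{(L)}(x)$. Here the layer-wise update is a Lipschitz continuous function (a linear map composed with a standard activation) and so is admitted as a function symbol in the language. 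For a GAT, the same scheme applies but with the local weighted-mean aggregator $\attention$ from \Cref{sec:termlanguage} taking the place of $\mean$; the weight $h$ is $\exp(\scale(\cdot,\cdot))$, which is Lipschitz on compact domains and strictly positive, as required. For GPS + RW, each layer is the sum of an MPNN-mean sub-layer and a self-attention sub-layer applied globally (using the global weighted-mean aggregator), with the random-walk positional encoding $\randomwalk(x)$ concatenated to $\feature(x)$ via a Lipschitz projection; closure of the term language under Lipschitz functions and sums ensures that the whole layer can be expressed.

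Second, I would add the softmax or sigmoid final layer. Both are Lipschitz on $\R^d$, so the probabilistic classifier is the composition of the closed term above with such a Lipschitz function symbol, and hence itself a closed term $\tau_{\mathrm{class}}$ in $\Aggo{\wmeanop,\RW}$. Applying \Cref{thm:main result}, we obtain a constant vector $\bar p \in \R^d$ such that $\dsb{\tau_{\mathrm{class}}}_G \to \bar p$ a.a.s.\@ under $(\mu_n)_{n \in \N}$. Because $\dsb{\tau_{\mathrm{class}}}_G$ is by construction always a probability vector (softmax output) or a vector of independent probabilities (sigmoid output), and the set of such vectors is closed in $\R^d$, the limit $\bar p$ lies in the same set; so $\bar p$ is a bona fide distribution on the classes.

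The step that in principle requires the most care is checking that every non-linearity and weighting function used in the architectures lies in the admissible class of the term language, i.e., is Lipschitz continuous on every compact set and (for the weighting function $h$) strictly positive. I expect no real obstacle here: linear maps, ReLU, sigmoid, softmax, scaled dot-product, and $\exp$ all satisfy these conditions, and closure under Lipschitz composition takes care of stacked layers, skip connections, and the sum combining MPNN and attention branches in GPS. With these routine checks in hand, the corollary reduces entirely to one application of \Cref{thm:main result} per architecture.
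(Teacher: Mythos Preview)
Your proposal is correct and follows essentially the same approach as the paper: the corollary is derived directly from \Cref{thm:main result} using the translations of MeanGNN, GAT, and GPS+RW into closed $\Aggo{\wmeanop,\RW}$ terms already sketched in \Cref{sec:termlanguage}, together with the observation that the final softmax/sigmoid is Lipschitz. The paper does not spell out a separate proof of the corollary beyond those ingredients, so your write-up is simply a more explicit version of the intended argument.
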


We now discuss briefly how the results are proven. The cases divide into two groups: the denser cases (the first three $\ER$ distributions and the $\SBM$) and the sparser cases (the fourth $\ER$ distribution and the $\BA$ model). Each is proved with a different  strategy.

\subsection{Overview of the technical constructions for the denser cases}

While the theorem is about closed terms, naturally we need to  prove it inductively on the term language, which
requires consideration of terms with free variables. We show that each open term in some sense
degenerates to a Lipschitz function almost surely. The only caveat is that we may need to distinguish based on the ``type'' of the node -- for example, nodes $u_1, u_2, u_3$ that form a triangle may require a different function from nodes that do not. Formally, for node variables $\bar x$, an \emph{$\bar x$ graph type} is  a conjunction of expressions $E(x_i, x_j)$ and their negations.  The \emph{graph type of tuple $\bar u$ in a graph}, denoted $\GrTp(\bar u)$ is the set of all edge relations and their negations that hold between 
elements of $\bar u$.  A \emph{$(\bar x,d)$ feature-type controller} is a Lipschitz function taking as input pairs consisting of
$d$-dimensional real vector and an $\bar x$ graph type.

The key theorem below shows that to each term $\pi$ we can associate a feature-type controller $e_\pi$ which captures the asymptotic behaviour of $\pi$, in the sense that with high probability, for most of the tuples $\bar u$ the value of $e_\pi(H_G(\bar u), \GrTp(\bar u))$ is close to $\dsb{\pi(\bar u)}$.

\begin{theorem}[Aggregate Elimination for Non-Sparse Graphs]
    \label{thm:inductivenonsparse}
    For all terms $\pi(\bar x)$ over featured graphs with $d$ features, there is a $(\bar x, d)$ feature-type controller $e_\pi$ such that for every $\ep, \delta, \theta > 0$, there is $N \in \N$ such that for all $n \geq N$, with probability at least $1- \theta$ in the space of graphs of size $n$, out of all the tuples $\bar u$ at least $1- \delta$ satisfy that
       $ \norm*{
            e_\pi(H_G(\bar u), \GrTp(\bar u))
            -
            \dsb{\pi(\bar u)}
        }
        < \ep$.
\end{theorem}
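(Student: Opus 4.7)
I would prove \Cref{thm:inductivenonsparse} by structural induction on the term $\pi$. The base cases are immediate: for a constant $\vc$ set $e_{\vc} \equiv \vc$; for the feature term $\mathrm H(x_i)$ set $e_{\mathrm H(x_i)}(\bar h, T) = h_i$, the $i$\textsuperscript{th} coordinate. For a Lipschitz function symbol $\pi = F(\tau_1,\ldots,\tau_k)$, take $e_\pi \coloneqq F(e_{\tau_1},\ldots,e_{\tau_k})$; since $F$ is Lipschitz on the compact codomain pinned down by the bounded feature distribution, pointwise errors blow up by at most the Lipschitz constant $L_F$, so I apply the IH to each $\tau_j$ with a rescaled $\ep' = \ep/(k L_F)$ and combine the ``fraction-good'' statements by a union bound. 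All the real work is in the weighted-mean case; I handle the local case first, since the global mean is essentially identical but easier (its denominator has $n$ summands regardless of density).

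For the local case $\pi(\bar x) = \sum_{y \in \Ne(x_i)} \tau(y) \star h(\rho(y))$, apply the IH to $\tau$ and $\rho$ to obtain $e_\tau, e_\rho$ and \emph{define}
\[
    e_\pi(\bar h, T) \;\coloneqq\; \frac{\Ex_{H \sim \nu,\, T' \sim \gamma(T)}\!\bigl[\, e_\tau(\bar h, H, T, T')\, h(e_\rho(\bar h, H, T, T'))\,\bigr]}{\Ex_{H \sim \nu,\, T' \sim \gamma(T)}\!\bigl[\, h(e_\rho(\bar h, H, T, T'))\,\bigr]},
\]
where $\gamma(T)$ is the limiting distribution, determined by the random graph model, over extensions of $T$ by the edges connecting a fresh neighbour $v$ of $u_i$ to each $u_j$, and the $e_\tau, e_\rho$ arguments are to be read as the combined feature tuple and combined graph type. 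For dense ER, $\gamma$ is i.i.d.\ Bernoulli$(\tilde p)$ on the unforced edges; for logarithmic- and root-growth ER it concentrates on the empty extension since $p(n) \to 0$ while $np(n) \to \infty$; for SBM it becomes independent entry-dependent Bernoullis once community membership is encoded as an additional discrete feature coordinate. Lipschitzness of $e_\pi$ follows from Lipschitzness of the integrand on the compact domain fixed by the bounded features, and the denominator is bounded away from zero because $h$ is strictly positive and continuous there.

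The crux is concentration of the empirical numerator and denominator around these limits. A Chernoff bound restricts attention to tuples whose degree $\abs{\Ne(u_i)}$ lies in the typical range $np(n)(1 \pm o(1))$, which is informative because $np(n) \to \infty$ in all four denser regimes. By the IH, at least a $(1 - \delta')$-fraction of extended tuples $(\bar u, v)$ are simultaneously ``IH-good'' for both $\tau$ and $\rho$; a Markov-style averaging over $\bar u$ converts this into the statement that for a $(1 - \sqrt{\delta'})$-fraction of $\bar u$, at least a $(1 - \sqrt{\delta'})$-fraction of its neighbours are IH-good. For such $\bar u$, substituting $e_\tau$ and $e_\rho$ for $\dsb{\tau}$ and $\dsb{\rho}$ in the sums introduces only a controlled additive error, after which a Hoeffding-type inequality on the residual near-i.i.d.\ summands (the features and type extensions of distinct neighbours being independent, conditional on the neighbourhood event) concentrates the empirical average around the defining expectation. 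The ratio inherits this concentration via the uniform lower bound on the denominator.

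The principal obstacle I expect is coordinated bookkeeping of the three scales of randomness — the failure probability $\theta$, the fraction $\delta$ of bad tuples, and the additive error $\ep$ — each of which has to be rescaled at every inductive step as a function of the current Lipschitz constants and the infimum of $h$, while carefully tracking the dependence between $\dsb{\tau(\bar u, v)}$ and $\dsb{\rho(\bar u, v)}$ induced by the shared coordinate $\bar u$ and the shared graph edges. A secondary subtlety is ensuring that $\GrTp(\bar u, v)$ reflects $E(u_i, v) = \True$, which is baked into $\gamma$ by conditioning on this edge being present. Once the local case is settled, the global-mean case, SBM, and the remaining denser ER regimes follow by adjusting $\gamma$ appropriately and, for SBM, moving community labels into the feature vector so that the i.i.d.\ feature framework applies verbatim.
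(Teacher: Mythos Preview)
Your overall strategy---structural induction, with the controller for a weighted mean defined as a ratio of expectations over the limiting extension-type distribution---matches the paper's, and your formula for $e_\pi$ is essentially the one the paper uses. The Lipschitz and feature/constant base cases go through as you say (you omit the $\randomwalk(x)$ base case, which the paper handles via a separate lemma showing $\randomwalk_k(v)\to \bm 0$ a.a.s.\ in all non-sparse regimes, but this is minor).

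The genuine gap is in your Markov-averaging step for the local weighted mean. You apply the IH to obtain that a $(1-\delta')$-fraction of \emph{all} extended tuples $(\bar u,v)$ are IH-good, and then claim that averaging over $\bar u$ yields a $(1-\sqrt{\delta'})$-fraction of $\bar u$ with a $(1-\sqrt{\delta'})$-fraction of \emph{neighbours} good. But the neighbours of $u_i$ form only a $p(n)$-fraction of all $v$, and in the root- and log-growth regimes $p(n)\to 0$; the $\delta'$-fraction of bad tuples, which must be fixed \emph{before} $n$, could in principle swallow the entire set $\{(\bar u,v): v\in\Ne(u_i)\}$. No $n$-independent choice of $\delta'$ rescues this, and IH-goodness is genuinely correlated with the edge $u_iEv$ (since both $\GrTp(\bar u,v)$ and $\dsb{\tau(\bar u,v)}$ depend on it), so independence of edges in ER does not help either. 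The paper closes this by proving a \emph{strengthened} inductive invariant: the fraction-good statement must hold not just over all tuples but over all tuples satisfying an arbitrary conjunction of edge atoms (a ``$\wedge$-description''). In the local-mean step one then invokes the IH for $\tau,\rho$ with the description $\psi(\bar u)\wedge u_iEv$, so that ``most tuples'' already means ``most tuples with $v$ a neighbour of $u_i$''. Converting this to ``most $\bar u$ have most neighbours good'' is still not plain Markov, because degrees vary; the paper proves a separate Regularity Lemma, driven by the per-regime degree concentration you allude to, to do this uniformly. Your proposal has the right downstream ingredients---degree bounds, Hoeffding on the residual sums, the ratio bound via a uniform lower bound on the denominator---but the inductive hypothesis itself must carry the edge-conditioning for the argument to close in the non-dense regimes.
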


This  can be seen as a kind of ``almost sure quantifier elimination'' (thinking of aggregates as quantifiers), in the spirit of \citet{almostsureqe, almostsurehjk}. 
It is proven by induction on term depth, with the log neighbourhood bound playing a critical role in the induction step for weighted mean. 

Theorem \ref{thm:inductivenonsparse} highlights an advantage of working with a term language having nice closure properties, rather than directly with GNNs: it allows us to use induction on term construction, which may be more natural and more powerful than induction on layers.
Theorem \ref{thm:inductivenonsparse} also gives strong limitations on \emph{node and link classification} using GNNs: on most nodes in (non-sparse) random graphs, GNNs can only classify based on the features of a node, they cannot make use of any graph structure.

\subsection{Overview of the technical constructions for the sparser cases}
In the sparser cases, the analysis is a bit more involved. Instead of graph types over $\bar u$, which only specify graph relations among the $\bar u$, we require descriptions of local neighbourhoods of $\bar u$.

\begin{definition}
    Let $G$ be a graph, $\bar u$ a tuple of nodes in $G$ and $\ell \in \N$. The
    \emph{$\ell$-neighbourhood of $\bar u$ in $G$}, denoted $\Ne_\ell(\bar u)$ is the
    subgraph of $G$ induced by the nodes of distance at most $\ell$ from
    some node in $\bar u$.
\end{definition}

The ``types'' are now graphs $T$ with $k$ distinguished elements $\bar w$, which we call \emph{$k$-rooted graphs}. Two $k$-rooted graphs $(T, \bar w)$ and $(U, \bar z)$ are \emph{isomorphic} is there is a structure-preserving bijection $T \to U$ which maps $\bar w$ to $\bar z$. The combinatorial tool here is a fact known in the literature as `weak local convergence': the percentage of local neighbourhoods of any given type converges.

\begin{lemma}[Weak local convergence]
    \label{lem:single node local type isomorphism convergence}
    Consider sampling a graph $G$ from either the sparse $\ER$ or $\BA$ distributions. Let $(T, \bar w)$ be a $k$-rooted graph and take $\ell \in \N$. There is $q_T \in [0,1]$ such that for all $\ep, \theta > 0$ there is $N \in \N$ such that for all $n \geq N$ with probability at least $1-\theta$ we have that:
    \begin{equation*}
        \abs*{
            \frac {\abs{\{\bar u \mid \Ne_\ell(\bar u) \cong (T, \bar w)\}}} n
            -
            q_T
        }
        <
        \ep
    \end{equation*}
\end{lemma}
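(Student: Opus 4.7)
The plan is to apply the second-moment method to the count
\[ X_T(n) \coloneqq \abs{\{\bar u : \Ne_\ell(\bar u) \cong (T, \bar w)\}}. \]
It suffices to establish (i) $\mathbb{E}[X_T(n)] / n \to q_T$ for some constant $q_T$, and (ii) $\Var[X_T(n)] = o(n^2)$; Chebyshev's inequality then gives concentration of $X_T(n)/n$ around $q_T$, which is precisely the statement of the lemma.

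For step (i), I would appeal to the Benjamini--Schramm local weak convergence theory. In the sparse case $\ER(n, K/n)$, a breadth-first exploration of the $\ell$-neighborhood of a fixed ordered $k$-tuple of distinct vertices couples to a multi-type Galton--Watson process with $\mathrm{Poi}(K)$ offspring truncated at depth $\ell$. By vertex exchangeability,
\[ \mathbb{E}[X_T(n)] \;=\; (n)_k \cdot \mathbb{P}[\Ne_\ell(v_1,\ldots,v_k) \cong (T, \bar w)] + O(n^{k-1}), \]
and since $T$ is a fixed finite rooted graph constraining the $k$ roots to lie in a common bounded component, the probability factor decays like $n^{-(k-1)}$ times the limiting branching-process probability, yielding convergence of $\mathbb{E}[X_T(n)]/n$ to a constant $q_T$. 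For $\BA(n,m)$ one invokes the analogous local weak convergence with the P\'olya-point tree as the limit object, and defines $q_T$ to be the limiting probability that the rooted tree is isomorphic to $(T, \bar w)$.

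For the variance bound (ii), expand
\[ \mathbb{E}[X_T(n)^2] \;=\; \sum_{\bar u, \bar v} \mathbb{P}[\Ne_\ell(\bar u) \cong T \text{ and } \Ne_\ell(\bar v) \cong T], \]
and partition pairs by whether the graph distance between $\bar u$ and $\bar v$ exceeds $2\ell+1$ (\emph{far}) or not (\emph{close}). For far pairs the two $\ell$-neighborhoods are vertex-disjoint, and a coupling shows their joint law factorizes up to $o(1)$ error, contributing $(1+o(1))\,\mathbb{E}[X_T(n)]^2$. The close-pair contribution is bounded by the expected number of close pairs, which is $O(n)$ in sparse ER using that $(2\ell+1)$-balls have $O(1)$ expected size, and is polylogarithmically larger for BA; in both cases this contribution is $o(n^2)$. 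Subtracting $\mathbb{E}[X_T(n)]^2$ then gives $\Var[X_T(n)] = o(n^2)$.

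The principal obstacle is the BA case, where vertex non-exchangeability (early-added nodes have much higher expected degree than late ones) means the first-moment step cannot be reduced to a single fixed tuple; one needs the nontrivial local weak convergence of preferential attachment to the P\'olya-point tree, typically proved via a P\'olya-urn coupling. The variance step is also more delicate there, since occasional high-degree hub vertices can cause many tuples to share common neighbours and therefore be simultaneously close; controlling this requires tail bounds on the maximum degree to show hub-induced close pairs contribute negligibly. Once these ingredients are in place, combining the first-moment convergence with Chebyshev's inequality applied to the variance estimate completes the proof.
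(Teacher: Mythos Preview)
The paper does not prove this lemma from scratch: it treats weak local convergence as a known result and simply cites \cite[Theorem~2.18]{Hofstad_2024} for sparse Erd\H{o}s--R\'enyi and \cite[Theorem~4.2.1]{Garavaglia-phd} for Barab\'asi--Albert. Your proposal, by contrast, sketches the standard second-moment argument that underlies those cited results. So the two approaches differ only in that the paper defers to the literature while you outline what is actually in it; your strategy (first-moment convergence via BFS coupling to a branching process or P\'olya-point tree, variance control via a far/close pair decomposition, then Chebyshev) is exactly the route those references take.

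Two remarks on your sketch. First, you implicitly assume the $k$ roots of $(T,\bar w)$ lie in a single connected component of $T$; without this, the count scales like $n^c$ for $c$ the number of root-containing components and the normalisation by $n$ fails. This assumption is consistent with the lemma's constraint $q_T\in[0,1]$, and in fact the paper only ever invokes the lemma with $k=1$, so the point is moot for the application. Second, your factorisation claim for far pairs in $\BA$ is not literally true---disjoint neighbourhoods are still dependent through the sequential attachment mechanism---but you correctly flag this as the delicate step, and the P\'olya-urn representation you mention is precisely the tool the cited reference uses to handle it. As a self-contained sketch your proposal is sound; as a proof in the paper's context, a citation is the more economical choice.
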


Our ``aggregate elimination'', analogous to Theorem \ref{thm:inductivenonsparse}, states that every term can be approximated by a Lipschitz function of the features and the neighbourhood type.

\begin{theorem}[Aggregate Elimination for Sparser Graphs] \label{thm:inductivesparse}
For every $\pi(\bar x)$, letting $\ell$ be the maximum aggregator nesting depth in $\pi$, for all $k$-rooted graphs $(T, \bar w)$ there is a Lipschitz function $e_\pi^T \colon \R^{\abs{T} \cdot d} \to \R^d$ such that for each
$\ep, \theta > 0$, there is $N \in \N$ such that for all $n \geq N$ with
probability at least $1-\theta$ in the space of graphs of size $n$, for every $k$-tuple of nodes $\bar u$ in the graph such that
$\Ne_\ell(\bar u) \cong (T, \bar w)$ we have that
$\norm*{
        e_\pi^T(H_G(\Ne_{\ell}(\bar u)))
        -
        \dsb{\pi(\bar u)}
    }
    < \ep$.
\end{theorem}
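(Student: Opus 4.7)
The plan is to prove the statement by structural induction on $\pi$, building the feature-type controllers $e_\pi^T$ simultaneously across all $k$-rooted types $T$. At each inductive step I verify two invariants: (i) $e_\pi^T$ is Lipschitz continuous on the compact feature range dictated by the bounded feature distribution, and (ii) for any prescribed tolerances $(\ep,\theta)$, with probability at least $1-\theta$ the approximation $\dsb{\pi(\bar u)}_G \approx e_\pi^T(H_G(\Ne_\ell(\bar u)))$ holds within $\ep$ \emph{for every} $k$-tuple $\bar u$ with $\Ne_\ell(\bar u)\cong T$. The inductive hypothesis delivers the analogous statement for every strict subterm of $\pi$.

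The base cases and non-aggregate closure cases are handled directly. For constants $\vc$, set $e_\vc^T\equiv\vc$; for $\mathrm H(x_i)$, let $e_{\mathrm H(x_i)}^T$ extract the features of the $i$-th root of $T$. For $\randomwalk(x_i)$, the return probabilities depend only on the graph structure of $\Ne_d(u_i)$ where $d$ is the walk dimension, so after enlarging the ambient depth to $\max(\ell,d)$ the controller is a function of $T$ alone. A function application $F(\tau_1,\ldots,\tau_k)$ is handled by composing the outer Lipschitz $F$ with the controllers $e_{\tau_j}^T$ and union-bounding over the finitely many subterms. The local aggregator is the easiest aggregate case: all neighbours of $u_i$ sit inside $\Ne_\ell(\bar u)$, so the weighted mean is a deterministic combination of the controllers of $\tau$ and $\pi_0$ on the $(k+1)$-rooted extensions of $T$ obtained by pinning a neighbour, and Lipschitzness of the weighted mean follows because $h>0$ is bounded below on the compact range, keeping the denominator away from zero.

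The main obstacle is the global aggregator $\pi(\bar x)=\sum_y \tau(\bar x,y)\star h(\pi_0(\bar x,y))$. I split the sum over $y$ into a ``close'' part, namely $y\in\Ne_{2\ell-1}(\bar u)$, and a ``far'' part. In the sparse $\ER$ and $\BA$ models the cardinality of $\Ne_{2\ell-1}(\bar u)$ is stochastically bounded independently of $n$, so after the natural normalization by $1/n$ the close-node contributions to both numerator and denominator are $O(1/n)$ and can be absorbed into the error term. For each far $y$ the neighbourhood $\Ne_{\ell-1}(\bar u,y)$ is isomorphic to a disjoint union $T'\sqcup S$, where $T'$ is the portion of $\Ne_{\ell-1}(\bar u)$ recoverable from $T$, and $S$ is the $1$-rooted $(\ell-1)$-neighbourhood type of $y$. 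By Lemma~\ref{lem:single node local type isomorphism convergence} applied to $S$, the fraction of far $y$ whose neighbourhood is isomorphic to $S$ concentrates around $q_S$, and conditional on this isomorphism type the features on $\Ne_{\ell-1}(y)$ are fresh i.i.d.\@ draws from the feature distribution. A standard concentration inequality for bounded i.i.d.\@ sums then yields
\[
\tfrac{1}{n}\!\!\sum_{y\text{ far},\,\Ne_{\ell-1}(y)\cong S}\!\! e_\tau^{T\sqcup S}(H,H'_y)\,h\!\left(e_{\pi_0}^{T\sqcup S}(H,H'_y)\right)\;\longrightarrow\; q_S\cdot \mathbb{E}_{H'}\!\left[e_\tau^{T\sqcup S}(H,H')\,h(e_{\pi_0}^{T\sqcup S}(H,H'))\right],
\]
with $H=H_G(\Ne_\ell(\bar u))$ and $H'$ a fresh feature assignment to $S$; the analogous statement holds for the denominator.

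I then define $e_\pi^T(H)$ to be the ratio of the two $S$-indexed limit sums. Lipschitz continuity in $H$ is inherited because each integrand is Lipschitz in $H$, integration over the bounded feature distribution preserves Lipschitzness, $h$ keeps the denominator bounded below on the compact range so division is Lipschitz, and the $S$-series converges absolutely since $\sum_S q_S\le 1$ and the integrands are uniformly bounded. For the quantitative bound I truncate the $S$-sum to a finite set capturing all but $\delta'$ of the $q_S$-mass, apply the inductive hypothesis and Lemma~\ref{lem:single node local type isomorphism convergence} to each remaining type, and conclude by a union bound. The principal technical delicacy is threading the tolerances $(\ep,\delta,\theta)$ through induction in the face of unboundedly many possible neighbourhood types; this is exactly where the sparsity/local-convergence structure of the models pays off, since the tail decay of $q_S$ with $|S|$ lets the truncation error be controlled uniformly.
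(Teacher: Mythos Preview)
Your proposal is correct and follows essentially the same strategy as the paper: structural induction on terms, with the base cases and function-application step handled directly, the local aggregator written as a deterministic Lipschitz combination of the subterm controllers on the finitely many neighbour-extensions of $T$, and the global aggregator treated via a close/far decomposition together with weak local convergence and Hoeffding on the far-node averages. Two organisational differences are worth noting. First, the paper replaces the raw nesting depth $\ell$ by a refined quantity it calls \emph{reach}, which resets to $0$ at every global aggregator; consequently the controller for a global-aggregator term depends only on the $0$-neighbourhood of $\bar u$, and the paper never needs to reason about $|\Ne_{2\ell-1}(\bar u)|$ for deep terms. Second, the paper factors your inline close/far split into a standalone parameterised weak-local-convergence lemma (\cref{thm:weak local convergence tuples}) before the main induction. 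One imprecision in your write-up: $|\Ne_{2\ell-1}(\bar u)|$ is not in general ``stochastically bounded independently of $n$'' once $2\ell-1>\ell$, since the boundary of $T$ may neighbour a hub of degree $\Theta(\sqrt n)$ in the Barab\'asi--Albert model; what your argument actually needs, and what does hold in both models, is merely that this quantity is $o(n)$ uniformly over admissible $\bar u$ with high probability, which suffices after normalising the numerator and denominator by $1/n$.
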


Compared to \cref{thm:inductivenonsparse} the result is much less limiting in what node-classifying GNNs can express.
Although combining the sparse and non-sparse conditions covers many possible growth rates, it is not true that one gets convergence for $\erdosrenyi$ with \emph{arbitrary} growth functions:
\begin{theorem} \label{thm:noconverge} There are functions $p(n)$ converging to zero and a term $\tau$ in our language such that $\tau$ does not converge even in distribution (and hence does not converge a.a.s.) over ER random graphs with growth rate $p$.
\end{theorem}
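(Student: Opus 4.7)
The strategy is to exploit the freedom left between the convergence rates covered in \cref{thm:main result}: I construct an oscillating $p(n)$, alternating between two distinct sparse regimes that, taken together as a single function, fall outside the scope of the theorem, and a closed term that asymptotically detects which regime the graph came from.

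Concretely, take $p(n) = 1/n$ for even $n$ and $p(n) = 2/n$ for odd $n$, so $p(n) \to 0$. For the term, use the constant $\vc = 1$ and the constant weighting function $h \equiv 1$, which is trivially Lipschitz on every compact domain and maps into $(0,\infty)$. Define the local term
\[
  \tau_1(x) \;:=\; \sum_{y \in \mathcal N(x)} \vc \star h(\vc),
\]
which by the edge-case clause in \cref{def:term language interpretation} evaluates to $1$ when $x$ has at least one neighbour and to $\mathbf 0$ when $x$ is isolated. Then the closed term
\[
  \tau \;:=\; \sum_x \tau_1(x) \star h
\]
is a global mean that evaluates to the fraction of non-isolated vertices of $G$.

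A classical computation for $\ER(n, K/n)$ shows that the probability a given vertex is isolated equals $(1 - K/n)^{n-1} \to e^{-K}$, and that the fraction of isolated vertices concentrates at $e^{-K}$ by a Chebyshev bound using the near-independence of pairs of vertices (the covariance of two isolation indicators differs from the product of their means by $O(1/n)$). Hence $\tau$ converges in probability to $1 - e^{-1}$ along even $n$ and to $1 - e^{-2}$ along odd $n$. Since these two values are distinct, the distribution of $\tau$ under $\ER(n, p(n))$ cannot converge as $n \to \infty$: if it did, the two subsequential limits would have to agree.

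The only genuine calculation is the standard second-moment concentration for the fraction of isolated vertices in $\ER(n, K/n)$, which is the main obstacle insofar as there is one; the remaining ingredients — that $\tau$ is a well-formed term in $\Aggo{\wmeanop}$, that the edge-case semantics turns $\tau_1$ into an indicator of non-isolation, and that two disagreeing subsequential limits preclude convergence in distribution — are immediate from the definitions.
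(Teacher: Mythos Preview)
Your proof is correct and follows the same template as the paper's: construct an oscillating $p(n)$ and a closed term that computes the fraction of (non-)isolated vertices, then observe that the two subsequential limits disagree. The paper alternates between $p=\tfrac12$ on even $n$ and $p=\tfrac1n$ on odd $n$, which strictly speaking does \emph{not} converge to zero as the theorem statement requires; your choice of alternating between $1/n$ and $2/n$ stays within the sparse regime, genuinely gives $p(n)\to 0$, and you additionally supply the second-moment concentration and the explicit term-language construction that the paper leaves implicit. So your argument is, if anything, a cleaner instantiation of the same idea.
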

\begin{proof} 
    Let $p$ alternate between $\frac{1}{2}$ on even $n$ and $\frac{1}{n}$ on odd $n$.
    Consider $\tau_1(x)$ that returns $0$ if $x$ has a neighbour and $1$ otherwise, and let $\tau$ be the global average of $\tau_1$. So $\tau$ is the percentage of isolated nodes in the graph. Then $\tau$ clearly goes to zero in the non-sparse case. However the probability that a particular node is not isolated in a random sparse graph of size $n$ is  $(1- \frac{1}{n})^{n-1}$, which goes to $\frac{1}{e}<1$.
    Thus $\dsb{\tau}$ diverges on $\ER(n, p(n))$.
\end{proof}

\section{Experimental evaluation}
\label{sec:experiments}

We first empirically verify our  findings on random graphs and then on a real-world graph to answer the following questions: 
   \textbf{Q1}. Do we empirically observe convergence?
   \textbf{Q2}. What is the impact of the different weighted mean aggregations on the convergence?
   \textbf{Q3}. What is the impact of the graph distribution on the convergence?
  \textbf{Q4}. Can these phenomena arise within  large real-world graphs?

All our experiments were run on a single NVidia GTX V100 GPU. We made our codebase available online at \url{\githubrepo}.

\begin{figure*}[hb!]
    \begin{subfigure}{0.3\textwidth}
        \centering
        \includegraphics[width=\linewidth]{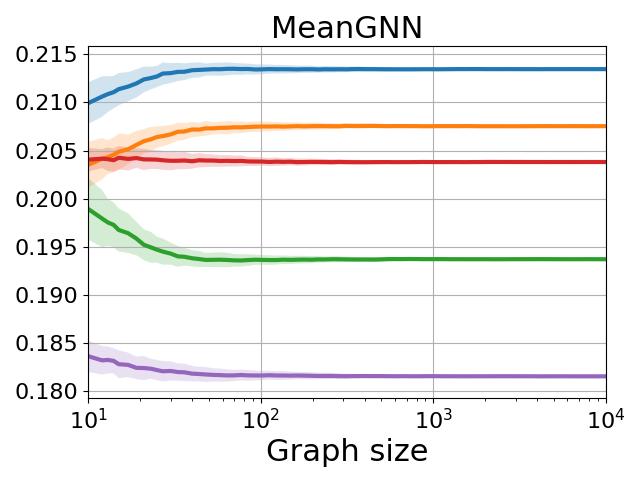}
        \includegraphics[width=\linewidth]{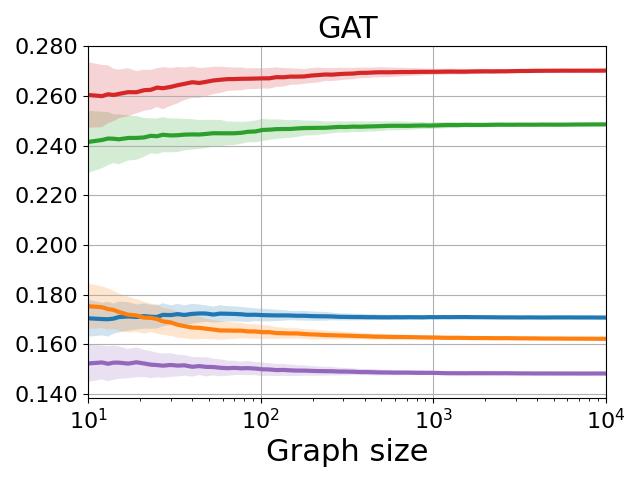}
        \includegraphics[width=\linewidth]{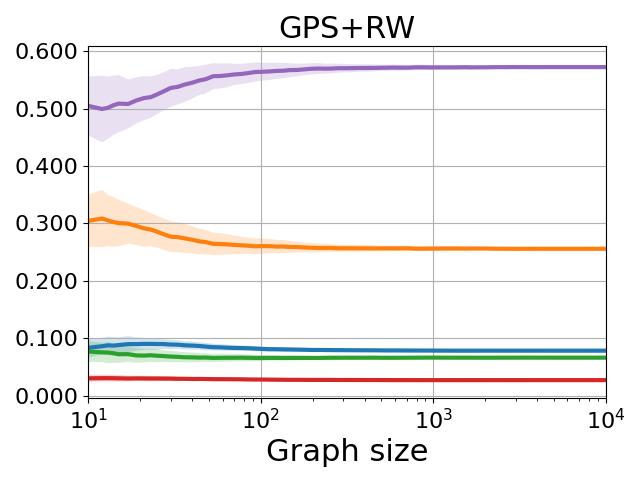}
        \caption{Dense ER}
    \end{subfigure}
    \hfill
    \begin{subfigure}{0.3\textwidth}
        \centering
        \includegraphics[width=\linewidth]{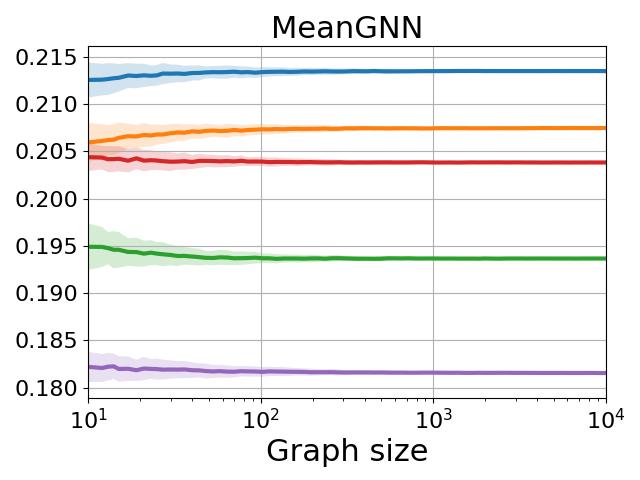} 
        \includegraphics[width=\linewidth]{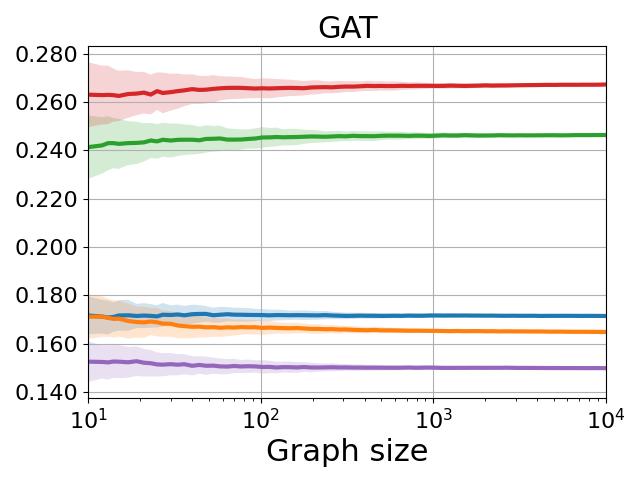}
        \includegraphics[width=\linewidth]{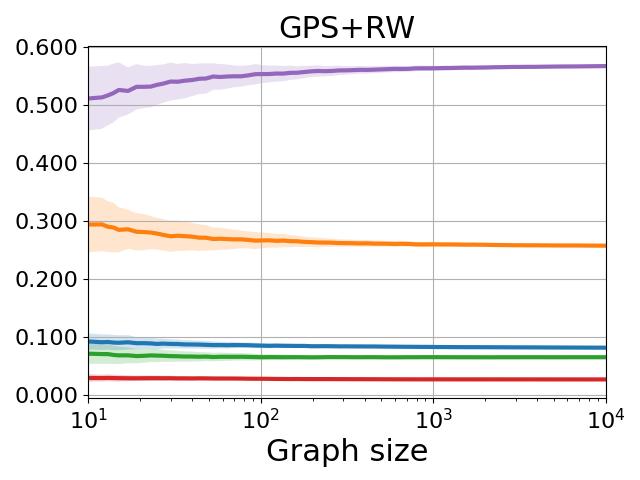}
        \caption{Logarithmic growth ER}
    \end{subfigure}
    \hfill
    \begin{subfigure}{0.3\textwidth}
        \centering
        \includegraphics[width=\linewidth]{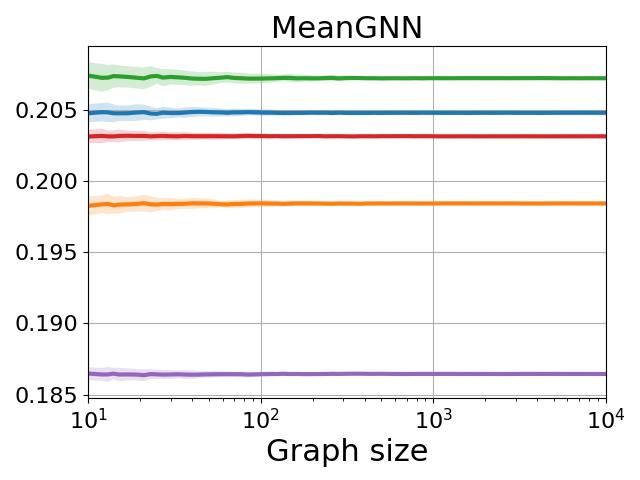} 
        \includegraphics[width=\linewidth]{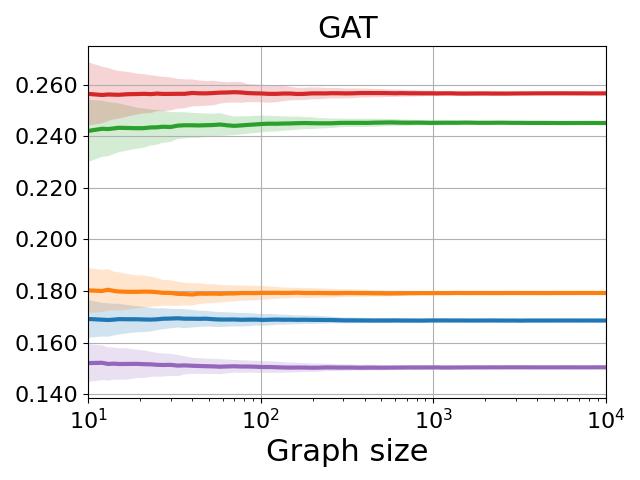}
        \includegraphics[width=\linewidth]{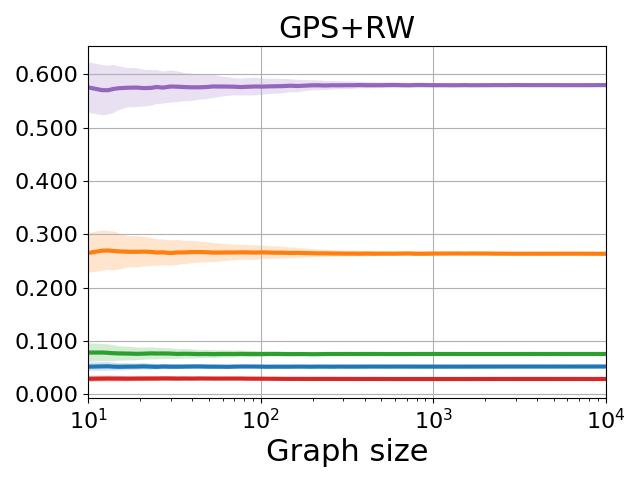}
        \caption{Sparse ER}
    \end{subfigure}
    \caption{Each plot shows the five mean class probabilities (in different colours) with standard deviations of a single model initialization over $\regularER$, $\lognER$, and $\sparseER$, as we draw graphs of increasing size.}
    \label{fig:convergence}
\end{figure*}

\textbf{Setup.} We report experiments for the architectures MeanGNN, GAT \cite{Velickovic2018}, and GPS+RW \cite{Rampavsek2022} with random walks of length up to 5. Our setup is carefully designed to eliminate confounding factors:
\begin{itemize}
    \item We consider five models with the same architecture, each having randomly initialized weights, utilizing a $\operatorname{ReLU}$ non-linearity, and applying a softmax function to their outputs. Each model uses a hidden  dimension of $128$, $3$ layers and an output dimension of $5$.
    \item We experiment with distributions $\regularER$, $\lognER$, $\sparseER$,  $\BAm$. We also experiment with an SBM of 10 communities with equal size, where an edge between nodes within the same community is included with probability $0.7$ and an edge between nodes of different communities is included with probability $0.1$.
    \item We draw graphs of sizes up to 10,000, where we take 100 samples of each graph size. Node features are independently drawn from $U[0, 1]$ and the initial feature dimension is $128$.
\end{itemize}

To understand the behaviour of the respective models, we will draw larger and larger graphs from the graph distributions. We use five different models to ensure this is not a model-specific behaviour.
Further experimental results are reported in \cref{sec:additional_experiments}.

\subsection{Empirical results on random graph models}
In \cref{fig:convergence}, a single model initialization of the MeanGNN, GAT and GPS+RW architectures is used with $\regularER$, $\lognER$ and $\sparseER$. Each curve in the plots corresponds to a different class probability, depicting the average of 100 samples for each graph size along with the standard deviation shown in lower opacity.

The convergence of class probabilities is apparent across all models and graph distributions, as illustrated in \cref{fig:convergence}, in accordance with our main theorems (\textbf{Q1}). The key differences between the plots are the convergence time, the standard deviation and the converged values.

One striking feature of \cref{fig:convergence} is that the eventual constant output of each model is the same for the dense and logarithmic growth distributions, but not for the sparse distribution (\textbf{Q3}). We can relate this to the distinct proof strategy employed in the sparse case, which uses convergence of the proportions of local isomorphism types. There are many local isomorphism types, and the experiments show that what we converge to depends the proportion of these.  In all other cases the neighbourhood sizes are unbounded, so there is asymptotically almost surely one `local graph type'.

We observe that attention-based models such as GAT and GPS+RW exhibit delayed convergence and greater standard deviation in comparison to MeanGNN (\textbf{Q2}). A possible explanation is that because some nodes are weighted more than others, the attention aggregation has a higher variance than regular mean aggregation. For instance, if half of the nodes have weights close to $0$, then the attention aggregation effectively takes a mean over half of the available nodes. Eventually, however, the attention weights themselves converge, and thus convergence cannot be postponed indefinitely.

\cref{fig:variance} depicts the outcomes of the GPS+RW architecture for various graph distributions. The analysis involves calculating the Euclidean distance between the class probabilities of the GPS+RW architecture and the mean class probabilities over the different graph samples. The standard deviation across the different graph samples is then derived for each of the 5 different model initializations and presented in \cref{fig:variance}. The decrease of standard deviation across the different model initializations in \cref{fig:variance} indicates that all class probabilities converge across the different model initializations, empirically verifying the phenomenon for varying model initializations (\textbf{Q1} and \textbf{Q3}). 
%Further experimental results are available in \cref{sec:additional_experiments}.

\begin{figure*}
    \centering
    \includegraphics[width=0.3\linewidth]{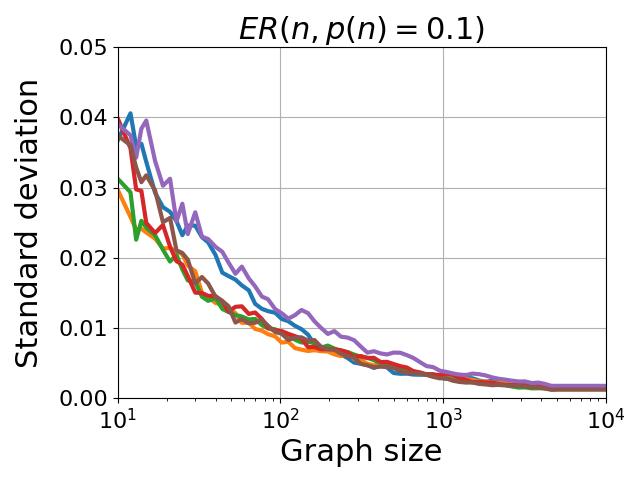} \hfill
    \includegraphics[width=0.3\linewidth]{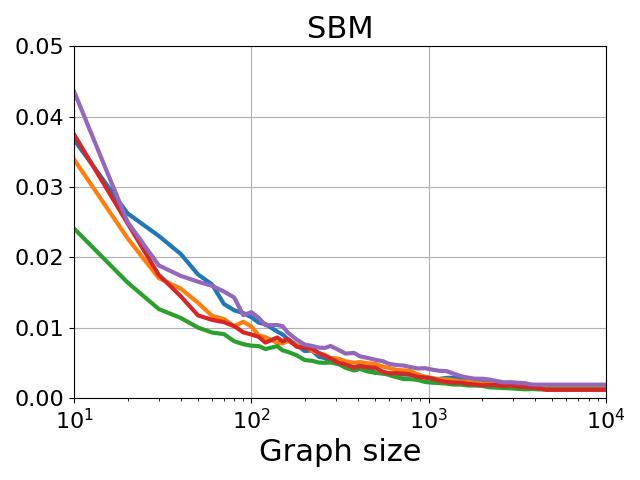} \hfill
    \includegraphics[width=0.3\linewidth]{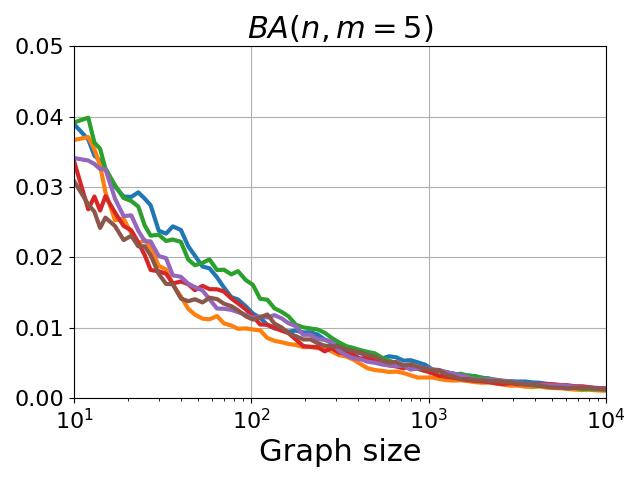}
    \caption{Each plot depicts the standard deviation of Euclidean distances between class probabilities and their respective means across various samples of each graph size for GPS+RW. }
    \label{fig:variance}
\end{figure*}

\subsection{Empirical results on large real-world graphs}

\begin{wrapfigure}{r}{0.465\textwidth}
    \vspace{-1.5em}
    \centering
    \includegraphics[width=0.42\textwidth]{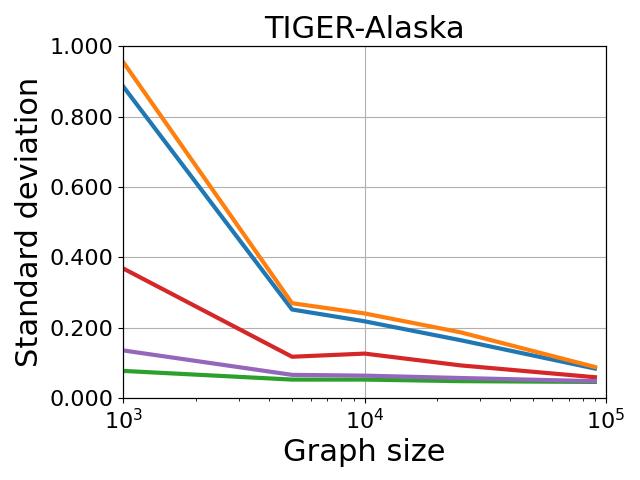} 
    \caption{Standard deviation of distances between class probabilities and their means across TIGER-Alaska graph sizes for MeanGNN.}
    \label{fig:tiger}
\end{wrapfigure}
Towards \textbf{Q4}, we investigated  a large real-world dataset. Many commonly studied graph datasets (e.g.\@ ZINC \cite{zinc}, QM9 \cite{qm9}) do not exhibit sufficient graph size variance and provide no obvious means to add scaling. We used  the \emph{TIGER-Alaska} dataset \cite{tiger-dataset} of geographic faces. The original dataset has 93366 nodes, while \citet{Dimitrov2023} extracted smaller datasets with graphs having 1K, 5K, 10K, 25K and 90K nodes. We chose the modified dataset as it is split by graph size, and consists of graphs differing from our random models (in particular all graphs are planar).
\Cref{fig:tiger} shows the results of applying the same five MeanGNNs to graphs of increasing sizes. Strikingly, we again observe a convergence phenomenon, but at a slower pace.

\section{Discussion and limitations} \label{sec:conc}
We have demonstrated a wide convergence phenomenon for
real-valued classifiers expressed even in very advanced GNN architectures, and it applies to a great variety of random graph models.  Rather than having separate proof techniques per  GNN model, our paper introduces a broad language where such models can be situated, and provides techniques at the level of term languages. 
Although our top-level theorems deal with graph-level tasks, along the way we provide strong limitative results on what can be achieved on random graphs for node- or edge-level real-valued tasks: see \cref{thm:inductivenonsparse}.

The principal limitations of our work come our assumptions. In particular, we assume that the initial node embeddings are i.i.d. This assumption is used in the application concentration inequalities throughout the proofs, so loosening it would require careful consideration.

Our main results show that many GNN architectures cannot distinguish large graphs. To overcome this limitation, one could consider moving beyond our term language. For example, if we add \emph{sum aggregation}, the term values clearly diverge, and similarly if we allow non-smooth functions, such as linear inequalities. Further we emphasize that a.a.s.\@ convergence is not universal for our term language, and it does not hold even for ER with \emph{arbitrary} $p(n)$ going to $0$: see \cref{thm:noconverge}.

% Acknowledgements should only appear in the accepted version.

\bibliographystyle{abbrvnat}
\bibliography{references}

% \newpage
% \input{neuripschecklist}

%%%%%%%%%%%%%%%%%%%%%%%%%%%%%%%%%%%%%%%%%%%%%%%%%%%%%%%%%%%%%%%%%%%%%%%%%%%%%%%
%%%%%%%%%%%%%%%%%%%%%%%%%%%%%%%%%%%%%%%%%%%%%%%%%%%%%%%%%%%%%%%%%%%%%%%%%%%%%%%
% APPENDIX
%%%%%%%%%%%%%%%%%%%%%%%%%%%%%%%%%%%%%%%%%%%%%%%%%%%%%%%%%%%%%%%%%%%%%%%%%%%%%%%
%%%%%%%%%%%%%%%%%%%%%%%%%%%%%%%%%%%%%%%%%%%%%%%%%%%%%%%%%%%%%%%%%%%%%%%%%%%%%%%
\newpage
\appendix
\onecolumn
%%%%%%%%%%%%%%%%%%%%%%%%%%%%%%%%%%%%%%%%%%%%%%%%%%%%%%%%%%%%%%%%%%%%%%%%%%%%%%%%%%%%%%%
%%%%%%%%%%%%%%%%%%%%%%%%%%%%%%%%%%%%%%%%%%%%%%%%%%%%%%%%%%%%%%%%%%%%%%%%%%%%%%%%%%%%%%%%
%% § Graph Convolutional Networks
%%%%%%%%%%%%%%%%%%%%%%%%%%%%%%%%%%%%%%%%%%%%%%%%%%%%%%%%%%%%%%%%%%%%%%%%%%%%%%%%%%%%%%%%
%%%%%%%%%%%%%%%%%%%%%%%%%%%%%%%%%%%%%%%%%%%%%%%%%%%%%%%%%%%%%%%%%%%%%%%%%%%%%%%%%%%%%%%%

\section{Graph Convolutional Networks} \label{sec:gcn}

In the body of the paper, we mentioned that our results can be extended to  graph convolutional networks (GCNs) \cite{Kipf16}. In this section we show how to extend our term language to capture them. Later in the appendix when providing proof details for our convergence laws, we will utilize the extended language, thus showing the applicability to GCNs. 

GCNs are instances of the message passing neural network in which the update function is defined as follows.
\begin{equation*}
    \vh_v^\steplplus
    =
    \sum_{u\in \Ne(v)} 
        \frac{1}{\sqrt{\abs{\Ne(v)}\abs{\Ne(u)}}} 
        \mW^{(\ell)} \vh_u^{(\ell)}
\end{equation*}

To allow our term language to capture GCNs, we add the new aggregator:
\begin{equation*}
    \GCNop_{y \in \Ne(x)} \tau(y)
\end{equation*}
The language $\Aggo{\wmeanop, \GCNop, \RW}$ is the result of closing $\Aggo{\wmeanop, \RW}$ under this operator. The semantics are extended as follows.
\begin{equation*}
    \dsb*{\GCNop_{y \in \Ne(x_i)} \tau (\bar u)}_G
    =
    \sum_{v\in \Ne(u_i)} 
        \frac{1}{\sqrt{\abs{\Ne(u_i)}\abs{\Ne(v)}}} 
        \dsb{\tau (\bar u, v)}_G
\end{equation*}

With this operator it is possible to capture GCNs in the same way as MeanGNNs are captured (\cref{sec:termlanguage}). Moreover, the extended term language permits arbitrary combinations of the GCN aggregator with other language features.
%%%%%%%%%%%%%%%%%%%%%%%%%%%%%%%%%%%%%%%%%%%%%%%%%%%%%%%%%%%%%%%%%%%%%%%%%%%%%%%%%%%%%%%%
%%%%%%%%%%%%%%%%%%%%%%%%%%%%%%%%%%%%%%%%%%%%%%%%%%%%%%%%%%%%%%%%%%%%%%%%%%%%%%%%%%%%%%%%
%% § Concentration Inequalities
%%%%%%%%%%%%%%%%%%%%%%%%%%%%%%%%%%%%%%%%%%%%%%%%%%%%%%%%%%%%%%%%%%%%%%%%%%%%%%%%%%%%%%%%
%%%%%%%%%%%%%%%%%%%%%%%%%%%%%%%%%%%%%%%%%%%%%%%%%%%%%%%%%%%%%%%%%%%%%%%%%%%%%%%%%%%%%%%%

\section{Concentration Inequalities}

Throughout the appendix, we make use of a few basic inequalities.

\begin{theorem}[Markov's Inequality]
    \label{thm:Markov's Inequality}
    Let $X$ be a positive random variable with finite mean. Then for any $\lam > 0$ we have:
    \begin{equation*}
        \Pr(X \geq \lam) \leq \frac {\Ex[X]} \lam
    \end{equation*}
\end{theorem}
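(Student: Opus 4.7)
The plan is to use the classical indicator-function argument. First I would introduce the indicator random variable $\mathbf{1}_{\{X \geq \lambda\}}$, which equals $1$ on the event $\{X \geq \lambda\}$ and $0$ otherwise. The crux of the proof is the pointwise inequality
\begin{equation*}
    \lambda \cdot \mathbf{1}_{\{X \geq \lambda\}} \leq X,
\end{equation*}
which holds for every outcome: on $\{X \geq \lambda\}$ the left-hand side equals $\lambda$, which is bounded above by $X$ by the definition of the event; on the complementary event the left-hand side is $0$, which is at most $X$ because $X$ is assumed to be positive. This step is where the positivity hypothesis on $X$ is used.

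Next I would take expectations of both sides. By linearity and monotonicity of expectation, and using that $\Ex[\mathbf{1}_{\{X \geq \lambda\}}] = \Pr(X \geq \lambda)$, this yields
\begin{equation*}
    \lambda \cdot \Pr(X \geq \lambda) \leq \Ex[X].
\end{equation*}
Dividing through by $\lambda > 0$ gives the claimed bound $\Pr(X \geq \lambda) \leq \Ex[X]/\lambda$.

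There is essentially no obstacle here; the finiteness assumption on $\Ex[X]$ is simply what makes the resulting bound meaningful (in the non-trivial regime $\Ex[X]/\lambda < 1$). An equivalent route, should a more self-contained presentation be preferred, is to write $\Ex[X] \geq \Ex[X \cdot \mathbf{1}_{\{X \geq \lambda\}}] \geq \lambda \cdot \Ex[\mathbf{1}_{\{X \geq \lambda\}}]$ directly, or to invoke the layer-cake identity $\Ex[X] = \int_0^\infty \Pr(X \geq t)\,dt$ and lower-bound the integrand on $[0, \lambda]$ by the monotone quantity $\Pr(X \geq \lambda)$. All of these yield the same one-line conclusion.
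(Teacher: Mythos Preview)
Your argument is correct and is the standard proof of Markov's inequality. The paper itself does not give a proof but merely cites \cite{vershynin} (Proposition~1.2.4); the indicator-function argument you present is exactly the one found there, so there is nothing to compare.
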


\begin{proof}
    See Proposition~1.2.4, p.~8 of \cite{vershynin}.
\end{proof}

\begin{theorem}[Chebyshev's Inequality]
    \label{thm:Chebyshev's Inequality}
    Let $X$ be a random variable with finite mean $\mu$ and finite variance $\sigma^2$.
    Then for any $\lam > 0$ we have:
    \begin{equation*}
        \Pr\left(\abs{X - \mu} \geq \lam\right)
        \leq
        \frac {\sigma^2} {\lam^2}
    \end{equation*}
\end{theorem}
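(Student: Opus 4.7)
The plan is to derive Chebyshev's inequality as an immediate corollary of Markov's inequality (Theorem \ref{thm:Markov's Inequality}), which was just proved above. The key observation is that the event $\{|X - \mu| \geq \lambda\}$ coincides with the event $\{(X - \mu)^2 \geq \lambda^2\}$, since both sides of the latter inequality are non-negative and squaring is a monotone operation on $[0, \infty)$. This reduces the problem to bounding the tail of a non-negative random variable, which is exactly the setting in which Markov's inequality applies.

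Concretely, I would first define $Y \coloneqq (X - \mu)^2$, noting that $Y \geq 0$ and that $\Ex[Y] = \sig^2$ by the definition of variance. Since $\sig^2$ is assumed finite, $Y$ has finite mean, so Markov's inequality (Theorem \ref{thm:Markov's Inequality}) applies to $Y$ with threshold $\lam^2 > 0$, yielding
\begin{equation*}
    \Pr\!\left(Y \geq \lam^2\right) \leq \frac{\Ex[Y]}{\lam^2} = \frac{\sig^2}{\lam^2}.
\end{equation*}
Combining with the equality of events $\{\abs{X - \mu} \geq \lam\} = \{Y \geq \lam^2\}$ (valid because $\lam > 0$), the desired bound follows immediately.

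There is no substantive technical obstacle here; the only thing to verify carefully is the event equality, which holds because for $\lam > 0$ the map $t \mapsto t^2$ is a strictly increasing bijection between $[0, \infty)$ and itself, so $\abs{X - \mu} \geq \lam$ is equivalent to $(X - \mu)^2 \geq \lam^2$. The proof is thus a two-line application of the preceding theorem, requiring no additional machinery.
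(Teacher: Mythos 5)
Your proof is correct: applying Markov's inequality to $(X-\mu)^2$ with threshold $\lambda^2$, together with the event identity $\{\abs{X-\mu}\geq\lambda\}=\{(X-\mu)^2\geq\lambda^2\}$, is exactly the standard derivation. The paper itself gives no argument here, deferring to the cited textbook, and that reference proves Chebyshev in precisely this way, so your proposal matches the intended proof.
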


\begin{proof}
    See Corollary~1.2.5, p.~8 of \cite{vershynin}.
\end{proof}

\begin{theorem}[Chernoff's Inequality]
    \label{thm:Chernoff's Inequality}
    Let $X_i$ for $i \leq n$ be i.i.d.\@ Bernoulli random variables with parameter $p$. 
    \begin{enumerate}[label=(\arabic*)]
        \item For any $\lam > p$ we have:
        \begin{equation*}
            \Pr\left(\sum_{i=1}^n X_i \geq \lam\right)
            \leq
            e^{-np} \left(\frac{e n p} \lam\right)^\lam
        \end{equation*}
        \item For any $\lam < p$ we have:
        \begin{equation*}
            \Pr\left(\sum_{i=1}^n X_i \leq \lam\right)
            \leq
            e^{-np} \left(\frac{e n p} \lam\right)^\lam
        \end{equation*}
    \end{enumerate}
\end{theorem}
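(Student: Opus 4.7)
The plan is to apply the classical Chernoff bounding technique: combine Markov's inequality on an exponential moment with the elementary moment generating function of a Bernoulli random variable, and then optimise a free parameter. For part (1), I would fix any $t > 0$. Since $x \mapsto e^{tx}$ is increasing, the events $\{\sum_i X_i \geq \lam\}$ and $\{e^{t \sum_i X_i} \geq e^{t\lam}\}$ coincide, so applying Theorem~\ref{thm:Markov's Inequality} to the nonnegative random variable $e^{t \sum_i X_i}$ yields
\begin{equation*}
    \Pr\!\left(\sum_{i=1}^n X_i \geq \lam\right)
    \leq
    e^{-t\lam}\,\Ex\!\left[e^{t \sum_i X_i}\right].
\end{equation*}

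Next, by independence, $\Ex[e^{t \sum_i X_i}] = \prod_{i=1}^n \Ex[e^{t X_i}] = (1 - p + p e^t)^n$, and using $1 + x \leq e^x$ with $x = p(e^t - 1)$ bounds this by $\exp\!\bigl(np(e^t - 1)\bigr)$. Combining the two estimates, the upper tail probability is bounded by $\exp\!\bigl(-t\lam + np(e^t - 1)\bigr)$ uniformly in $t > 0$. I would then optimise the bound: differentiating the exponent in $t$ and setting the derivative to zero gives the stationary point $t^\ast = \log(\lam / (np))$, and substituting back produces exactly $e^{-np}(enp/\lam)^\lam$ after routine algebraic simplification of the exponent $-np + \lam - \lam \log(\lam/(np))$. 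One only needs to verify that $t^\ast > 0$ in the regime where the bound is informative, which is why the statement restricts to $\lam$ exceeding the mean.

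Part (2) is strictly symmetric: for $t > 0$, apply Theorem~\ref{thm:Markov's Inequality} to the equivalent event $\{e^{-t \sum_i X_i} \geq e^{-t\lam}\}$, factorise the moment generating function to obtain $(1 - p + p e^{-t})^n \leq \exp\!\bigl(np(e^{-t} - 1)\bigr)$, and then optimise over $t > 0$. The stationary point is $t^\ast = \log(np / \lam)$, which is positive in the appropriate regime, and substituting yields the same closed form $e^{-np}(enp/\lam)^\lam$.

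The only real obstacle is the bookkeeping in the optimisation step, namely computing the minimum of $f(t) = -t\lam + np(e^t - 1)$ and simplifying the resulting exponent cleanly into $\log\!\bigl(e^{-np}(enp/\lam)^\lam\bigr)$. Everything else is a one-line application of already available tools: Markov's inequality, independence of the $X_i$'s, and the inequality $1 + x \leq e^x$. No new machinery specific to the paper's term language is required, since this is a generic probabilistic fact that will be invoked later when analysing neighbourhood-size and degree concentration in the Erdős–Rényi and related random graph models.
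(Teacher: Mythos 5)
Your argument is correct, and it is in fact more self-contained than what the paper does: the paper offers no derivation at all for this lemma, simply citing Vershynin (Theorem~2.3.1 and Exercise~2.3.2), which is itself proved by exactly the exponential-moment method you outline. So your route is the standard one behind the cited result: Markov applied to $e^{tS_n}$ (resp.\ $e^{-tS_n}$), the factorisation $\Ex[e^{tS_n}]=(1-p+pe^t)^n$, the bound $1+x\le e^x$, and optimisation at $t^\ast=\log(\lam/(np))$ (resp.\ $t^\ast=\log(np/\lam)$), whose exponent $-np+\lam-\lam\log(\lam/(np))$ indeed simplifies to $\log\bigl(e^{-np}(enp/\lam)^{\lam}\bigg)$ — the algebra checks out. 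What your write-up buys is an explicit proof where the paper has only a pointer; what the citation buys is the slightly more general version (independent Bernoullis with heterogeneous parameters, mean $\mu=\sum_i p_i$), which is not needed here.

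One caveat worth making explicit rather than waving at ``the regime where the bound is informative'': your optimisation requires $t^\ast>0$, i.e.\ $\lam>np$, whereas the statement as printed only assumes $\lam>p$. This is not merely a loss of informativeness — for $p<\lam<np$ the claimed inequality is genuinely false (e.g.\ $p=\tfrac12$, $\lam=1$, $n$ large: the left side tends to $1$ while $e^{-n/2}\cdot enp\to 0$), so the paper's hypothesis is evidently a typo for $\lam\ge np$, matching Vershynin, and that is the version your proof establishes. The lower-tail case is unaffected, since $\lam<p\le np$ guarantees $t^\ast=\log(np/\lam)>0$ there. Writing the upper-tail hypothesis as ``$\lam$ above the mean $np$'' (as you implicitly do) is the correct fix, and it is also the only way the lemma is used later (degrees compared against thresholds proportional to $np$).
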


\begin{proof}
    See Theorem~2.3.1, p.~17 of \cite{vershynin} and Exercise~2.3.2,
    p.~18 of \cite{vershynin}.
\end{proof}

\begin{theorem}[Hoeffding's Inequality for bounded random variables]
    \label{thm:Hoeffding's Inequality for bounded random variables}
    Let $X_i$ for $i \leq n$ be i.i.d.\@ bounded random variables taking values in $[a,
    b]$ with common mean $\mu$. Then for any $\lam > 0$ we have:
    \begin{equation*}
        \Pr\left(\abs*{\sum_{i=1}^n X_i - n\mu} \geq \lam\right)
        \leq
        2 \exp\left(-\frac {2\lam^2} {n(b-a)^2}\right)
    \end{equation*}
\end{theorem}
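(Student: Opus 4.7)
The plan is to use the standard Chernoff/moment-generating-function approach. Let $S_n = \sum_{i=1}^n (X_i - \mu)$, so the $Y_i := X_i - \mu$ are i.i.d., have mean zero, and take values in $[a-\mu, b-\mu]$, an interval of length $b-a$. For any $t > 0$, Markov's inequality (\cref{thm:Markov's Inequality}) applied to the positive random variable $e^{t S_n}$ gives
\begin{equation*}
\Pr(S_n \geq \lambda) \;=\; \Pr(e^{t S_n} \geq e^{t\lambda}) \;\leq\; e^{-t\lambda}\, \Ex[e^{t S_n}] \;=\; e^{-t\lambda} \prod_{i=1}^n \Ex[e^{t Y_i}],
\end{equation*}
where the last equality uses independence of the $Y_i$. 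So the entire argument reduces to bounding the moment generating function of a single bounded zero-mean random variable.

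The key lemma to establish is Hoeffding's lemma: for a zero-mean random variable $Y$ taking values in an interval $[\alpha, \beta]$, one has $\Ex[e^{tY}] \leq \exp\bigl(t^2(\beta-\alpha)^2/8\bigr)$. I would prove this by convexity: since $e^{ty}$ is convex in $y$, for $y \in [\alpha,\beta]$ we may write $y$ as a convex combination of $\alpha$ and $\beta$ and conclude
\begin{equation*}
e^{ty} \;\leq\; \tfrac{\beta - y}{\beta - \alpha} e^{t\alpha} + \tfrac{y - \alpha}{\beta - \alpha} e^{t\beta}.
\end{equation*}
Taking expectations and using $\Ex[Y] = 0$ yields $\Ex[e^{tY}] \leq \tfrac{\beta}{\beta-\alpha} e^{t\alpha} - \tfrac{\alpha}{\beta-\alpha} e^{t\beta}$. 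Setting $s = t(\beta - \alpha)$ and $\rho = -\alpha/(\beta - \alpha)$, one rewrites the right-hand side as $e^{L(s)}$ for a suitable $L$, and a direct computation of $L(0), L'(0), L''(s)$ shows $L(0) = L'(0) = 0$ and $L''(s) \leq 1/4$ everywhere. Taylor's theorem then gives $L(s) \leq s^2/8$, proving the lemma.

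Applying this with $\alpha = a - \mu$, $\beta = b - \mu$ to each $Y_i$ yields
\begin{equation*}
\Pr(S_n \geq \lambda) \;\leq\; \exp\!\Bigl(-t\lambda + \tfrac{n t^2 (b-a)^2}{8}\Bigr).
\end{equation*}
Optimizing over $t > 0$, the minimizer is $t^* = 4\lambda/(n(b-a)^2)$, and substituting gives the one-sided tail bound $\Pr(S_n \geq \lambda) \leq \exp\bigl(-2\lambda^2/(n(b-a)^2)\bigr)$. Applying the identical argument to $-S_n = \sum_{i=1}^n(\mu - X_i)$ (whose summands lie in the symmetric interval of the same length) yields the matching lower-tail bound, and a union bound across the two tails produces the factor of $2$ in the claimed inequality.

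The main obstacle is the proof of Hoeffding's lemma itself: the bound $L''(s) \leq 1/4$ is not obvious from the formula and requires a short calculus argument (writing $L''(s)$ as $p(1-p)$ for a certain probability $p$ depending on $s$ and $\rho$, then using $p(1-p) \leq 1/4$). Every other step is a mechanical application of Markov's inequality, independence, and scalar optimization.
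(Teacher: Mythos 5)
Your proposal is correct: the Chernoff/moment-generating-function argument with Hoeffding's lemma (proved via convexity and the bound $L''(s)\le 1/4$, then optimizing $t^\ast = 4\lambda/(n(b-a)^2)$ and taking a union bound over the two tails) is the standard proof, and the algebra you outline checks out. The paper itself gives no argument here—it simply cites Vershynin's textbook—and the proof in that reference is essentially the one you reconstruct, so there is nothing to contrast.
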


\begin{proof}
    See Theorem~2.2.6, p.~16 of \cite{vershynin}.
\end{proof}

\begin{corollary}[Hoeffding's Inequality for Bernoulli random variables]
    \label{cor:Hoeffding's Inequality for Bernoulli random variables}
    Let $X_i$ for $i \leq n$ be i.i.d.\@ Bernoulli random variables with parameter $p$.
    Then for any $\lam > 0$ we have:
    \begin{equation*}
        \Pr\left(\abs*{\sum_{i=1}^n X_i - np} \geq \lam\right)
        \leq
        2 \exp\left(-\frac {2\lam^2} n\right)
    \end{equation*}
\end{corollary}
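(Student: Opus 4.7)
The plan is to derive this directly as a specialization of the preceding Hoeffding inequality for bounded random variables (\Cref{thm:Hoeffding's Inequality for bounded random variables}). The observation is that a Bernoulli$(p)$ random variable takes values in the interval $[0,1]$ and has mean $p$, so the general bounded-variable statement applies word-for-word once we identify the parameters.

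First I would note that the $X_i$ are i.i.d.\@ and satisfy $X_i \in [a,b]$ with $a=0$ and $b=1$, and that their common mean is $\mu = \Ex[X_i] = p$. Substituting these choices into the conclusion of \Cref{thm:Hoeffding's Inequality for bounded random variables}, the quantity $n\mu$ becomes $np$, and the denominator $n(b-a)^2$ in the exponent collapses to $n \cdot 1^2 = n$. This yields, for any $\lam > 0$, the inequality
\begin{equation*}
    \Pr\left(\abs*{\sum_{i=1}^n X_i - np} \geq \lam\right)
    \leq
    2\exp\left(-\frac{2\lam^2}{n}\right),
\end{equation*}
which is exactly the statement of the corollary.

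There is essentially no technical obstacle here: the only thing to check is that Bernoulli variables fit the hypotheses of the general theorem, which is immediate since $\{0,1\} \subseteq [0,1]$ and the mean is well defined. The only minor care point is keeping track of the $(b-a)^2 = 1$ simplification in the exponent, so the proof reduces to a one-line citation of the general inequality followed by the substitution.
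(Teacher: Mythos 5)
Your proof is correct and is exactly the intended argument: the paper states this as an immediate corollary of \Cref{thm:Hoeffding's Inequality for bounded random variables}, obtained by taking $[a,b]=[0,1]$ and $\mu=p$, so that $(b-a)^2=1$ and $n\mu=np$. Nothing further is needed.
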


%%

%%%%%%%%%%%%%%%%%%%%%%%%%%%%%%%%%%%%%%%%%%%%%%%%%%%%%%%%%%%%%%%%%%%%%%%%%%%%%%%%%%%%%%%%
%%%%%%%%%%%%%%%%%%%%%%%%%%%%%%%%%%%%%%%%%%%%%%%%%%%%%%%%%%%%%%%%%%%%%%%%%%%%%%%%%%%%%%%%
%% § Proof for Erdős-Rényi distributions, the non-sparse cases
%%%%%%%%%%%%%%%%%%%%%%%%%%%%%%%%%%%%%%%%%%%%%%%%%%%%%%%%%%%%%%%%%%%%%%%%%%%%%%%%%%%%%%%%
%%%%%%%%%%%%%%%%%%%%%%%%%%%%%%%%%%%%%%%%%%%%%%%%%%%%%%%%%%%%%%%%%%%%%%%%%%%%%%%%%%%%%%

\section{Proof for Erdős-Rényi distributions, the non-sparse cases}

We new begin with the proof of convergence for the first three cases of the Erdős-Rényi distribution in \cref{thm:main result}, which we restate here for convenience:

\begin{theorem}
    \label{thm:main result denser cases}
    Consider $(\mu_n)_{n \in \N}$ sampling a graph $G$ from the Erdős-Rényi distribution $\ER(n, p(n))$ and node features independently from i.i.d. bounded distributions on $d$ features, where $p$ satisfies any of the following properties.
    \begin{compactitem}
        \item
        \textbf{Density}. $p$ converges to $\tilde p > 0$.
        \item
        \textbf{Root growth}. For some $K >0$ and $0 < \beta < 1$ we have:
            $p(n) = Kn^{-\beta}$.
        \item
        \textbf{Logarithmic growth}. For some $K > 0$ we have:
            $p(n) = K \frac{\log(n)} n$.
    \end{compactitem}
    Then for every $\Aggo {\wmeanop, \RW}$ term converges a.a.s.\@ with respect to $(\mu_n)_{n \in \N}$.
\end{theorem}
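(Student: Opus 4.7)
The plan is to prove the stronger statement of Theorem \ref{thm:inductivenonsparse} by induction on term structure, which yields Theorem \ref{thm:main result denser cases} as an immediate corollary. Indeed, for a closed term $\pi$ the tuple $\bar x$ is empty, the only graph type is the empty one, and the feature-type controller $e_\pi$ degenerates to a single constant vector $\vz$. The aggregate-elimination conclusion then asserts that with probability at least $1 - \theta$ over $G$, the (unique, empty) tuple satisfies $\norm{\dsb{\pi}_G - \vz} < \epsilon$, i.e.\@ $\dsb{\pi}_G$ converges a.a.s.\@ to $\vz$. So everything reduces to the inductive aggregate-elimination result.

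\textbf{Base cases and function symbols.} For a constant $\vc$ set $e_\pi \equiv \vc$. For $\mathrm H(x_i)$ set $e_\pi((\vh_1, \ldots, \vh_k), T) = \vh_i$. For $\pi = F(\tau_1, \ldots, \tau_k)$ with $F$ Lipschitz on compact domains, put $e_\pi = F \circ (e_{\tau_1}, \ldots, e_{\tau_k})$. Since features come from a bounded i.i.d.\@ distribution and $h$ functions are bounded below on compact sets, all intermediate quantities live in a common compact region; Lipschitz constants on this region propagate the approximation error, and the bad-tuple set grows by at most a union bound over the children.

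\textbf{Weighted mean (the main obstacle).} Consider $\pi(\bar x) = \sum_{y \in \Ne(x_i)} \tau(\bar x, y) \star h(\sigma(\bar x, y))$. Fix an $\bar x$-graph type $T$, and enumerate the one-node extension types $T'$ of $T$: each is specified by a bit vector telling which $x_j$'s are adjacent to the new node $y$. Let $\mathcal T^+$ be the sub-family in which $y$ is adjacent to $x_i$ (so $y \in \Ne(u_i)$). For each $T' \in \mathcal T^+$ and each realized tuple $\bar u$ of type $T$, partition $\Ne(u_i) \setminus \bar u$ according to $\GrTp(\bar u, v) = T'$. The expected size of the $T'$-class is $n \cdot q_{T'}(n)$, where $q_{T'}(n)$ is a fixed product of $p(n)$'s and $(1 - p(n))$'s. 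In all three regimes $n \cdot q_{T'}(n) \to \infty$ (the logarithmic-growth case being the borderline), so Chernoff's inequality ensures the empirical class size is within a $(1 \pm o(1))$ factor of its expectation with high probability. Combined with Hoeffding applied to the i.i.d.\@ features $H_G(v)$, and with the inductive controllers $e_\tau, e_\sigma$ approximating $\tau$ and $\sigma$ on all but a $\delta$-fraction of $(\bar u, v)$'s, the numerator $\sum \dsb{\tau} \cdot h(\dsb{\sigma})$ and denominator $\sum h(\dsb{\sigma})$ of the weighted-mean ratio each concentrate around expectations that depend only on $H_G(\bar u)$ and $T$. Defining $e_\pi(H_G(\bar u), T)$ to be the resulting limiting ratio gives a Lipschitz function, because $h$ is bounded below on compact domains (so the denominator is uniformly bounded away from zero) and $e_\tau, e_\sigma, h$ are themselves Lipschitz on compact domains. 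The global-mean case is analogous, replacing $\Ne(u_i)$ by the full vertex set and dropping the ``$y$ adjacent to $x_i$'' restriction.

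\textbf{Random-walk pseudo-features and bookkeeping.} For the $\randomwalk$ operator, observe that in all three regimes degrees concentrate multiplicatively around $n p(n)$; for any fixed walk length $k$, the $k$-step return probability at an overwhelming majority of nodes is thus within $\epsilon$ of a deterministic function of $n p(n)$ alone. We therefore treat $\randomwalk(x_i)$ as a bounded pseudo-feature with a constant feature-type controller and absorb it into the induction exactly like $\mathrm H(x_i)$. The hardest point in the whole argument is not any single case but the bookkeeping: managing the joint $(\epsilon, \delta, \theta)$ budget across nested aggregators while keeping the controllers uniformly Lipschitz in $n$, and absorbing the exceptional tuples from the inductive hypothesis into the concentration argument via a union bound over the finitely many extension types $T'$. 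This is where the logarithmic-growth regime is most delicate, since $n p(n)$ is only $\Theta(\log n)$ and the Chernoff error terms must be carefully tracked.
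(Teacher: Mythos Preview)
Your overall architecture is right and matches the paper: prove the aggregate-elimination statement (Theorem~\ref{thm:inductivenonsparse}) by induction on terms, then specialize to closed terms. The base cases, the Lipschitz-function step, the global weighted mean, and the treatment of $\randomwalk$ (which indeed converges to $\bm 0$ via degree concentration) are all fine.

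There is, however, a genuine gap in the \emph{local} weighted-mean step. Your inductive hypothesis gives that $e_\tau, e_\sigma$ approximate $\tau, \sigma$ on all but a $\delta$-fraction of \emph{all} $(k{+}1)$-tuples $(\bar u, v)$. But in $\sum_{y \in \Ne(x_i)} \tau \star h(\sigma)$ you sum only over $v \in \Ne(u_i)$, and $\abs{\Ne(u_i)}$ is of order $n p(n)$, a vanishing fraction of all $v$. Nothing prevents the $\delta n^{k+1}$ bad tuples from being concentrated precisely on pairs with $v \in \Ne(u_i)$; a union bound over extension types $T'$ does not help with this. The paper fixes this by \emph{strengthening the inductive invariant}: it proves that for every $\wedge$-description $\psi$ (a conjunction of edge atoms on $\bar u$), the controller approximates $\pi$ on a $(1-\delta)$-fraction of tuples \emph{satisfying $\psi$}. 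At the local-mean step one then invokes the hypothesis on $(\bar u, v)$ with the added constraint $u_i E v$, so ``most tuples'' already means most tuples with $v \in \Ne(u_i)$.

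Even with that strengthening in hand you need a second ingredient you do not mention: a regularity-type lemma converting ``at least $1-\delta$ of the pairs $(\bar u, v)$ with $v \in \Ne(u_i)$ are good'' into ``at least $1-\gamma$ of the $\bar u$ have at least $1-\delta'$ of their neighbours $v$ good''. This is not automatic because the bins $\Ne(u_i)$ have varying sizes; the paper's Lemma~\ref{lem:ER profile regularity} supplies it using the degree concentration you already noted (and this is exactly where the logarithmic-growth case is delicate, since there one only gets the two-sided degree bound for \emph{most} nodes, plus a weaker uniform upper bound). Your final paragraph correctly flags the bookkeeping as the hard part, but the specific missing pieces are these two: conditioning the invariant on edge constraints, and the regularity lemma.
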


Note that throughout the following we use $u$ and $v$ to denote both nodes and node
variables.

%%%%%%%%%%%%%%%%%%%%%%%%%%%%%%%%%%%%%%%%%%%%%%%%%%%%%%%%%%%%%%%%%%%%%%%%%%%%%%%%%%%%%%%%
%% §.§ Combinatorial and growth lemmas
%%%%%%%%%%%%%%%%%%%%%%%%%%%%%%%%%%%%%%%%%%%%%%%%%%%%%%%%%%%%%%%%%%%%%%%%%%%%%%%%%%%%%%%%

\subsection{Combinatorial and growth lemmas} \label{subsec:growthlemmasnonsparse}

\begin{remark} \label{rem:index}
    At various points in the following we will make statements concerning particular nodes of an Erdős-Rényi graph without fixing a graph. For example, we state that for any $u$ we have that its degree $d(u)$ is the sum of $n-1$ Bernoulli random variables. To make sense of such statements we can fix a canonical enumeration of the nodes of every graph and view our node meta-variables as ranging over indices. So we would translate the previous statement as ``for any \emph{node index} $u$ the degree of the $u$\textsuperscript{th} node in $\ER(n, p(n))$ is the sum of $n-1$ Bernoulli random variables.''
\end{remark}

Both the combinatorial and growth lemmas, and the main inductive results themselves require saying that certain events hold with high probability.  Inductively, we will need strengthenings of  these compared with the statements given in the body. We will need to consider conditional probabilities, where the conditioning is on a conjunction of atomic statements about the nodes.

\begin{definition} \label{def:wedgedesc}
    A \emph{$\wedge$-description} $\phi(\bar u)$ is a conjunction of statements about the variables $\bar u$ of the form $u_i E u_j$, which expresses that there exists an edge between $u_i$ and $u_j$. A $\wedge$-description $u_{i_1} E u_{j_1} \wedge \cdots \wedge u_{i_m} E u_{j_m}$ holds if and only if there is an edge between $u_{i_\ell}$ and $u_{j_\ell}$ for each $\ell$.
\end{definition}

The reason for strengthening the results in this way will become clearer when we are proving the local weighted mean induction step $\sum_{v \in \Ne(u_i)} \rho(\bar u, v) \star h(\eta(\bar u, v))$. There we will need our auxiliary results to hold conditioned on $v E u_i$, along with any additional requirements coming from previous induction steps.

We will first need a few auxiliary lemmas about the behaviour of random graphs in the different ER models.

We will need to know that in the non-sparse case, neighborhoods are fairly big:
\begin{lemma}[Log neighbourhood bound]
   \label{lem:unbounded neighbourhoods}
     Let $p \colon \N \to [0,1]$ satisfy density, root growth or logarithmic growth.
     There is $R > 0$ such that for all $\delta,
     \theta > 0$ there is $N \in \N$ such that for all $n \geq N$, with probability at
     least $1 - \theta$ when drawing graphs from $\ER(n, p(n))$, we have that the
     proportion of all nodes $u$ such that $ \abs{\Ne(u)} \geq R \log(n)$ is at least $1 - \delta$.
 \end{lemma}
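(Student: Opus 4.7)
The plan is to show that an individual node has low degree only with small probability, and then apply Markov's inequality to the count of such low-degree nodes to transfer this estimate from individuals to the proportion of all nodes. This follows the familiar ``one moment + Markov'' pattern used in random graph theory.

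First, I would choose the constant $R > 0$ based on the regime of $p$. In the density case, any $R > 0$ works, since the expected degree $(n-1)p(n)$ grows linearly in $n$. In the root growth case, any $R > 0$ also works, since the expected degree is $\Theta(n^{1-\beta})$ and therefore dominates $\log n$. In the logarithmic growth case I would take $R$ to be a sufficiently small multiple of $K$, e.g.\ $R = K/4$, so that $R \log n$ sits safely below the expected degree $(n-1)p(n) \sim K \log n$.

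Second, for a fixed node $u$, its degree $\abs{\Ne(u)}$ is a sum of $n-1$ i.i.d.\ $\mathrm{Bernoulli}(p(n))$ random variables with mean $\mu_n = (n-1)p(n)$; with the chosen $R$, in each regime we have $\mu_n \geq 2 R \log n$ for $n$ large. I would then apply the lower-tail Chernoff inequality (Theorem~\ref{thm:Chernoff's Inequality}(2)) with $\lambda = R \log n$ to obtain
\begin{equation*}
    \Pr\left(\abs{\Ne(u)} < R \log n\right)
    \leq
    \exp(-\mu_n)\left(\frac{e\, \mu_n}{R \log n}\right)^{R \log n},
\end{equation*}
and verify by direct calculation that in each of the three regimes this upper bound tends to $0$ as $n \to \infty$.

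Finally, let $X_n$ denote the number of nodes of degree strictly less than $R \log n$. Linearity of expectation gives $\Ex[X_n]/n = \Pr(\abs{\Ne(u)} < R \log n) \to 0$, where I am free to fix any node $u$ thanks to the symmetry described in Remark~\ref{rem:index}. Markov's inequality (Theorem~\ref{thm:Markov's Inequality}) then yields $\Pr(X_n \geq \delta n) \leq \Ex[X_n]/(\delta n)$, which can be made smaller than any prescribed $\theta > 0$ by choosing $N$ sufficiently large; on this good event, the proportion of nodes with $\abs{\Ne(u)} \geq R \log n$ is at least $1 - \delta$, as required. The most delicate step is the Chernoff calculation in the logarithmic growth regime, where $R \log n$ is the same order as $\mu_n$ and one must verify that the exponent remains negative with margin enough to beat the polynomial factor $n$ introduced by Markov; in the density and root growth regimes the mean dominates $\log n$ by polynomial factors and the Chernoff estimate is far from tight.
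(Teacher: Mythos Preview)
Your proposal is correct and matches the paper's approach: the paper derives the lemma from a stronger per-case statement (\cref{lem:unbounded neighbourhoods per case}), using Chernoff/Hoeffding for the per-node lower tail and then either a union bound (density and root growth, giving the bound for \emph{all} nodes) or exactly your Markov-on-the-bad-count argument (logarithmic growth). One small correction: Markov does not introduce an extra factor of $n$ here, since $\Pr(X_n \geq \delta n) \leq \Ex[X_n]/(\delta n) = q_n/\delta$ with $q_n$ the per-node failure probability, so you only need $q_n \to 0$, not $n q_n \to 0$.
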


We prove a stronger result, which allows us finer control over the growth in each of the non-sparse cases. The strengthening will involve the $\wedge$-descriptions of \cref{def:wedgedesc}.

\begin{lemma}
    \label{lem:unbounded neighbourhoods per case}
    Take any $\wedge$-description $\phi$ on $k$ variables, and choose $i \leq k$. We consider $k$-tuples of nodes $\bar u$ satisfying $\phi$ and the degrees of the $i$\textsuperscript{th} node $u_i$.
    \begin{enumerate}[label=(\arabic*)]
        \item \label{item:density; lemma:unbounded neighbourhoods per case}
        Let $p$ satisfy the density condition, converging to $\tilde p$. Then for every $\ep, \theta > 0$ there is $N \in \N$ such that for all $n \geq N$, with probability at least $1 - \theta$ when drawing graphs from $\ER(n, p(n))$, for all tuples $\bar u$ which satisfy $\phi$ we have that:
        \begin{equation*}
            n(\tilde p - \ep) < d(u_i) < n(\tilde p + \ep)
        \end{equation*}

        \item \label{item:root growth; lemma:unbounded neighbourhoods per case}
        Let $p$ satisfy the root growth condition, with $p = Kn^{-\beta}$. Then for every $R_1 \in (0,1)$ and $R_2 > 1$ and for every $\theta > 0$ there is $N \in \N$ such that for all $n \geq N$, with probability at least $1 - \theta$ when drawing graphs from $\ER(n, p(n))$, for all tuples $\bar u$ which satisfy $\phi$ we have that:
        \begin{equation*}
            R_1 K n^{1-\beta} < d(u_i) < R_2 K n^{1-\beta}
        \end{equation*}

        \item \label{item:log growth; lemma:unbounded neighbourhoods per case}
        Let $p$ satisfy the log growth condition, with $p = K\frac {\log(n)} n$. Then for every $R_1 \in (0,1)$ and $R_2 > 1$ and for every $\delta, \theta > 0$ there is $N \in \N$ such that for all $n \geq N$, with probability at least $1 - \theta$ when drawing graphs from $\ER(n, p(n))$, for at least $1-\delta$ of the tuples $\bar u$ which satisfy $\phi$ we have that:
        \begin{equation*}
            R_1 K \log(n) < d(u_i) < R_2 K \log(n)
        \end{equation*}
        Moreover, there is $R_3 > 0$ such that for every $\theta > 0$ there is $N \in \N$ such that for all $n \geq N$, with probability at least $1 - \theta$ when drawing graphs from $\ER(n, p(n))$, for all tuples $\bar u$ which satisfy $\phi$ we have that:
        \begin{equation*}
            d(u_i) < R_3 K \log(n)
        \end{equation*}
    \end{enumerate}
\end{lemma}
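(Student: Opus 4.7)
The plan is to condition on $\phi(\bar u)$, apply concentration to the resulting (nearly) binomial distribution of $d(u_i)$, and then union-bound over tuples. Fix $\phi$ on $k$ variables and $i \leq k$. Since the edges of $\ER(n, p(n))$ are mutually independent Bernoulli$(p(n))$, conditioning on the event $\phi(\bar u)$ only fixes the indicators of the (at most $\binom{k}{2}$) edges listed in $\phi$; in particular, the $n - k$ potential edges from $u_i$ to $V \setminus \{u_1, \dots, u_k\}$ remain i.i.d.\ Bernoulli$(p(n))$. Hence conditional on $\phi(\bar u)$, the degree $d(u_i)$ equals a deterministic count of at most $k - 1$ forced neighbours plus an independent binomial-like sum, with conditional mean $np(n) + O(k\, p(n))$.

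For parts \ref{item:density; lemma:unbounded neighbourhoods per case} and \ref{item:root growth; lemma:unbounded neighbourhoods per case}, I would apply the multiplicative Chernoff bound (\Cref{thm:Chernoff's Inequality}) to this conditional distribution to obtain
\[
    \Pr\bigl(|d(u_i) - np(n)| \geq \eta\, np(n) \bigm| \phi(\bar u)\bigr) \leq 2\exp(-c_\eta\, np(n))
\]
for $\eta > 0$ chosen small enough that $[(1-\eta)np(n),\ (1+\eta)np(n)]$ sits inside the target window (possible in case (1) because $np(n) \sim \tilde p\, n$, and in case (2) because $np(n) = K n^{1-\beta}$). Since $np(n) = \Theta(n)$ in case (1) and $\Theta(n^{1-\beta})$ in case (2), the per-tuple failure probability is $\exp(-\omega(\log n))$, which absorbs the $n^k$ union bound over tuples and delivers the stated uniform bounds.

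The main obstacle is the first claim of part \ref{item:log growth; lemma:unbounded neighbourhoods per case}: here $np(n) = K \log n$, so the per-tuple Chernoff probability is only $n^{-\Omega(K)}$, which does not beat $n^k$ for arbitrary $k$ and $K$. The plan is to replace the uniform bound with a fractional one. Let $A = |\{\bar u : \phi(\bar u)\}|$ and $B = |\{\bar u : \phi(\bar u)\ \wedge\ d(u_i) \notin [R_1 K \log n, R_2 K \log n]\}|$. Summing the per-tuple Chernoff bound weighted by $\Pr(\phi(\bar u))$ yields $\Ex[B] \leq \gamma\, \Ex[A]$ with $\gamma = 2 n^{-c_\eta K} \to 0$, and Markov then gives $B \leq (\delta/2)\, \Ex[A]$ a.a.s. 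Converting this into the desired $B / A < \delta$ requires the matching lower bound $A \geq \Ex[A]/2$ a.a.s., which I would establish by the second moment method: the covariance $\Cov(Y_{\bar u}, Y_{\bar v})$ vanishes whenever the edges required by $\phi(\bar u)$ and $\phi(\bar v)$ are disjoint, and case analysis on the number of shared edges shows $\Var(A) = o(\Ex[A]^2)$ provided $\Ex[A] \to \infty$. The remaining edge case, where $\Ex[A]$ stays bounded --- which in this sparse regime can happen only when $\phi$ has strictly more edges than variables --- is dispatched by a first-moment Markov bound giving $A = 0$ a.a.s., making the statement vacuous.

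Finally, for the ``moreover'' claim of part \ref{item:log growth; lemma:unbounded neighbourhoods per case}, I would use the upper-tail Chernoff bound (part (1) of \Cref{thm:Chernoff's Inequality}) with $\lambda = R_3 K \log n$, obtaining
\[
    \Pr\bigl(d(u_i) \geq R_3 K \log n \bigm| \phi(\bar u)\bigr) \leq n^{-K(1 + R_3 \log(R_3/e)) + o(1)}.
\]
Choosing $R_3$ large enough makes the exponent of $n$ exceed $k$, so a union bound over the at most $n^k$ tuples closes the case.
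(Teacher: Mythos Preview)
Your treatment of parts \ref{item:density; lemma:unbounded neighbourhoods per case}, \ref{item:root growth; lemma:unbounded neighbourhoods per case}, and the ``moreover'' clause of \ref{item:log growth; lemma:unbounded neighbourhoods per case} is correct and matches the paper's argument. One simplification the paper exploits and you do not: since the bad event $\{d(u_i)\notin\text{target window}\}$ depends only on the single node $u_i$, and conditioning on $\phi(\bar u)$ shifts $d(u_i)$ by at most $k-1$, the $\wedge$-description can be ignored entirely and it suffices to union-bound over the $n$ possible values of $u_i$ rather than over $n^k$ tuples. Your version still works, just with a slightly larger $R_3$.

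For the first claim of \ref{item:log growth; lemma:unbounded neighbourhoods per case} your route diverges from the paper's and contains a genuine gap. The paper applies Markov directly to the \emph{proportion} $B_n=B/A$, asserting $\Ex[B_n]\le\theta\delta$; you instead bound $\Ex[B]\le\gamma\,\Ex[A]$ and then need $A$ to concentrate. Your dichotomy ``either $\Ex[A]\to\infty$ and the second moment method works, or $\Ex[A]$ stays bounded and $A=0$ a.a.s.'' is incorrect: $\Ex[A]\to\infty$ does \emph{not} imply $\Var(A)=o(\Ex[A]^2)$ in the log-growth regime. Concretely, take $k=6$ and let $\phi$ impose the five edges of a $K_4$ minus one edge on $u_1,\dots,u_4$, with $u_5,u_6$ unconstrained. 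Then $m=5<k=6$, so $\Ex[A]\asymp n^{6}p^{5}=n(K\log n)^5\to\infty$, yet the expected number of copies of the dense $4$-vertex subgraph is $n^4p^5=(K\log n)^5/n\to 0$, whence $A=0$ a.a.s.\ and the second moment fails spectacularly. More generally, the standard subgraph-count second moment succeeds iff $n^{v(H)}p^{e(H)}\to\infty$ for \emph{every} subgraph $H$ of the graph $G_\phi$ encoded by $\phi$, which here means $e(H)\le v(H)$ for all $H\subseteq G_\phi$, not merely $e(G_\phi)\le v(G_\phi)$.

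The fix is to replace your dichotomy by the correct one: either the maximal edge density $\max_{H\subseteq G_\phi} e(H)/v(H)$ is at most $1$, in which case your second moment computation goes through, or some subgraph $H$ has $e(H)>v(H)$, in which case $\Ex[\#\text{copies of }H]\to 0$ forces $A=0$ a.a.s.\ by Markov and the statement is vacuous. With this repair your argument is complete, and arguably more transparent than the paper's somewhat informal appeal to ``linearity of expectation'' for the ratio $B/A$.
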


Note that in the first two cases, we only have claims about all tuples, while in the third case we make an assertion
about most tuples, while also asserting an upper bound on all tuples. The upper bound on all tuples does not subsume the
upper bound on most tuples, since the former states the existence of an $R_3$, while the latter gives a bound for an arbitrary $R_2$.

We now give the proof of the lemma:

\begin{proof}\
    \begin{enumerate}[label=(\arabic*)]
        \item Take $\ep, \theta > 0$. First, there is $N_1$ such that for all $n \geq N_1$:
        \begin{equation*}
            \abs{p(n) - \tilde p} < \ep
        \end{equation*}
        
        Now, any $d(u)$ is the sum of $n-1$ i.i.d.\@ Bernoulli random variables with parameter $p(n)$: see Remark \ref{rem:index} for the formalization. Hence by Hoeffding's Inequality (\cref{thm:Hoeffding's Inequality for bounded random variables}) and a union bound, there is $N_2$ such that for all $n \geq N_2$, with probability at least $1-\theta$, for every node $u$ we have:
        \begin{equation*}
            \abs*{
                \frac {d(u)} n - \tilde p
            }
            <
            \ep
        \end{equation*}
        Letting $N = \max(N_1, N_2)$, the result follows.

        \item Take $R_1 \in (0,1)$ and $R_2 > 1$ and fix $\theta > 0$. Again, any $d(u)$ is the sum of $n-1$ i.i.d.\@ Bernoulli random variables with parameter $p(n)$. By Chernoff's Inequality (\cref{thm:Chernoff's Inequality}), for any node $u$ we have that:
        \begin{equation*}
            \Pr\left(
                d(u)
                \leq
                R_1 K n^{1-\beta}
            \right)
            \leq
            \exp\left(
                K n^{1-\beta} \left(
                    R_1 - 1 - R_1 \log(R_1)
                \right)
            \right)
        \end{equation*}
        
        Since $R_1 - 1 - R_1 \log(R_1) < 0$, by a union bound there is $N_1$ such that for all $n \geq N_1$, with probability at least $1-\theta$, for every node $u$ we have:
        \begin{equation*}
            d(u)
            >
            R_1 K n^{1-\beta}
        \end{equation*}
         
        Similarly there is $N_2$ such that for all $n \geq N_2$, with probability at least $1-\theta$, for every node $u$ we have:
        \begin{equation*}
            d(u)
            <
            R_2 K n^{1-\beta}
        \end{equation*}
        Letting $N = \max(N_1, N_2)$, the result follows.

        \item Take $R_1 \in (0,1)$ and $R_2 > 1$ and fix $\delta, \theta > 0$. By Chernoff's Inequality (\cref{thm:Chernoff's Inequality}), for any node $u$ we have that:
        \begin{equation*}
            \Pr\left(
                d(u)
                \leq
                R_1 K n^{1-\beta}
            \right)
            \leq
            n^{K(R_1 - 1 - R_1 \log(R_1))}
        \end{equation*}
        Since $R_1 - 1 - R_1 \log(R_1) < 0$ this probability tends to $0$ as $n$ tends to infinity. Hence there is $N_1$ such that for all $n \geq N_1$, for all nodes $u$:
        \begin{equation*}
            \Pr\left(
                d(u)
                \leq
                R_1 \log(n)
            \right)
            <
            \theta \delta
        \end{equation*}
        Let $B_n$ be the proportion, our of tuples $\bar u$ which satisfy $\phi$, such that:
        \begin{equation*}
            d(u_i)
            \leq
            R \log(n)
        \end{equation*}
        (the `bad' tuples). Then for $n \geq N_1$, by linearity of expectation $\Ex[B_n] \leq \theta \delta$.

        By Markov's Inequality (\cref{thm:Markov's Inequality}) we have that:
        \begin{equation*}
            \Pr\left(
                B_n
                \geq
                \delta
            \right)
            \leq
            \frac {\theta \delta} \delta
            = \theta
        \end{equation*}
        
        That is, with probability at least $1-\theta$ we have that the proportionout of tuples $\bar u$ which satisfy $\phi$ such that:
        \begin{equation*}
            d(u_i)
            >
            R_1 \log(n)
        \end{equation*}
        is at least $1 - \delta$.

        Similarly, there is $N_2$ such that for all $n \geq N_2$, with probability at least $1-\theta$ we have that the proportion out of tuples $\bar u$ which satisfy $\phi$ such that:
        \begin{equation*}
            d(u_i)
            <
            R_2 \log(n)
        \end{equation*}
        is at least $1 - \delta$. Letting $N = \max(N_1, N_2)$ the first statement follows.

        To show the `moreover' part, note that by Chernoff's Inequality (\cref{thm:Chernoff's Inequality}), for any $R_3 > 0$ and for any node $u$ we have that:
        \begin{equation*}
            \Pr\left(
                d(u)
                \geq
                R_3 K n^{1-\beta}
            \right)
            \leq
            n^{K(R_3 - 1 - R_3 \log(R_3))}
        \end{equation*}
        For $R_3$ large enough we have that $K(R_3 - 1 - R_3 \log(R_3)) \leq - 2$. Hence taking a union bound for every $\theta > 0$ there is $N \in \N$ such that for all $n \geq N$, with probability at least $1 - \theta$ when drawing graphs from $\ER(n, p(n))$, for all tuples $\bar u$ which satisfy $\phi$ we have that:
        \begin{equation*}
            d(u_i) < R_3 K \log(n)
        \end{equation*}
    \end{enumerate}
\end{proof}

The basic form of the following ``regularity lemma'' states that for every $k \in \N^+$ and $i \leq k$, whenever we have a large subset $S$ of the $(k+1)$-tuples $(\bar u, v)$ with each $u_i$ adjacent to $v$, there is a large subset of the $k$-tuples $\bar u$ such that each $u_i$ is adjacent to a large set of $v$'s such that $(\bar u, v) \in S$. The full regularity lemma states that this is the case also when we restrict the tuples $\bar u$ to a certain $\wedge$-description.

\begin{lemma}[Regularity Lemma]
    \label{lem:ER profile regularity}
    Let $p \colon \N \to [0,1]$ satisfy one of the conditions other than sparsity. Take a $\wedge$-description $\phi$ on $k$ variables and fix $i \leq k$. Then for every $\gamma, \theta > 0$ there is $N \in \N$ and $\delta' >0$ such that for all $n \geq N$ and $\delta < \delta'$, 
    with probability at least $1-\theta$ we have that, whenever $S \sse \{(\bar u, v) \mid \phi(\bar u) \wedge u_iEv\}$ is such that:
    \begin{equation*}
        \frac {\abs S} {\abs{\{(\bar u, v) \mid \phi(\bar u) \wedge u_iEv\}}} \geq 1 - \delta
    \end{equation*}
    then there is $S' \sse \{\bar u \mid \phi(\bar u) \text{ and } d(u_i) > 0\}$ such that:
    \begin{equation*}
        \frac {\abs {S'}} {\abs{\{\bar u \mid \phi(\bar u) \text{ and } d(u_i) > 0\}}} \geq 1 - \gamma
    \end{equation*}
    and for every $\bar u \in S'$ we have:
    \begin{equation*}
        \frac {\abs{\{v \mid (\bar u, v) \in S\}}} {d(u_i)} \geq 1 - (\delta + \delta^2)
    \end{equation*}
\end{lemma}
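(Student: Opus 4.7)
The plan is to argue by a Markov-style averaging inequality after conditioning on the degree concentration provided by \cref{lem:unbounded neighbourhoods per case}. Write $T := \{(\bar u, v) : \phi(\bar u),\, u_i E v\}$ and $A := \{\bar u : \phi(\bar u),\, d(u_i) > 0\}$, so that $|T| = \sum_{\bar u \in A} d(u_i)$; for $\bar u \in A$, set $m(\bar u) := |\{v : u_i E v,\, (\bar u, v) \notin S\}|$. Then $\sum_{\bar u \in A} m(\bar u) = |T \setminus S| \le \delta |T|$, while a tuple $\bar u$ is ``bad'' (excluded from the desired $S'$) precisely when $m(\bar u) > (\delta + \delta^2) d(u_i)$, i.e.\ it uses an above-threshold share of the missing budget.

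The first step is to apply \cref{lem:unbounded neighbourhoods per case} to condition on an event of probability at least $1 - \theta$ under which all tuples $\bar u$ satisfying $\phi$ (or, in the log-growth case, all but at most a $\delta$-fraction of them, with the ``moreover'' part giving a uniform upper bound $R_3 K \log n$ on the remainder) have $d(u_i) \in [c_{\mathrm{low}}, c_{\mathrm{up}}]$. The constants in that lemma ($\ep$ in the density case, and $R_1, R_2$ in the root- and log-growth cases) are chosen so that the ratio $c_{\mathrm{up}}/c_{\mathrm{low}}$ is as close to $1$ as required. A direct double-count then yields
\[
|B| \cdot (\delta + \delta^2)\, c_{\mathrm{low}} \;\le\; \sum_{\bar u \in B} m(\bar u) \;\le\; \delta |T| \;\le\; \delta\, c_{\mathrm{up}}\, |A|,
\]
giving $|B|/|A| \le (c_{\mathrm{up}}/c_{\mathrm{low}})/(1 + \delta)$, which is forced below $\gamma$ by the combined choice of degree-concentration constants together with a small enough $\delta'$. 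In the log-growth case the tuples with uncontrolled degree (at most a $\delta$-fraction) are absorbed into $B$, contributing an extra additive $\delta$; taking $\delta' < \gamma/2$ handles this.

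The principal obstacle is the careful ordering of parameter choices: the constants in \cref{lem:unbounded neighbourhoods per case} must be selected first so as to achieve the required tightness of $c_{\mathrm{up}}/c_{\mathrm{low}}$, and then $\delta'$ is chosen to match. The log-growth case is the most delicate of the three, because the uniform two-sided degree bound fails across all tuples; here the ``moreover'' part of \cref{lem:unbounded neighbourhoods per case} is needed to control the exceptional tuples' degrees (and hence bound their contribution to the missing-pair budget) while they are absorbed into the bad set. Once these choices are orchestrated correctly, the desired bound on $|S'|$ follows from the counting inequality above together with a union bound over the underlying conditioning events.
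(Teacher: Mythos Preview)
Your approach mirrors the paper's: condition on degree concentration via \cref{lem:unbounded neighbourhoods per case}, then run a Markov-type averaging argument (the paper packages this as a separate ``bin proportions'' lemma, \cref{lem:bin proportions}). However, your final arithmetic step does not close. You derive
\[
\frac{|B|}{|A|} \;\le\; \frac{c_{\mathrm{up}}/c_{\mathrm{low}}}{1+\delta}
\]
and then claim this can be forced below $\gamma$ by tightening $c_{\mathrm{up}}/c_{\mathrm{low}}$ and shrinking $\delta'$. But $c_{\mathrm{up}}/c_{\mathrm{low}} \ge 1$ always, so the right-hand side is at least $1/(1+\delta)$, which tends to $1$, not $0$, as $\delta \to 0$. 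To get $1/(1+\delta) < \gamma$ you would need $\delta > 1/\gamma - 1$, directly contradicting $\delta < \delta'$ for any small $\delta'$ once $\gamma < 1$.

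This is not a slip in your write-up but a genuine obstruction: with the per-bin threshold fixed at $\sigma = \delta + \delta^2$ as in the lemma statement, a Markov argument \emph{cannot} yield bad fraction $\le \gamma$ for arbitrary $\gamma > 0$. Even with all degrees equal, one can place deficit just above $(\delta+\delta^2)\,d(u_i)$ in almost every bin while still meeting the global $1-\delta$ constraint, so the bad fraction can be as large as $1/(1+\delta)$. The paper's own proof runs into the same wall; its \cref{lem:bin proportions} carries a sign error in the final displayed bound (the derivation actually yields $\abs{S'}/m \ge \frac{(\sigma-\delta)a}{(\sigma-\delta)a+\delta b}$, not $1$ minus that quantity), and the applications in each regime inherit it. The downstream use of the regularity lemma only needs the per-$\bar u$ loss to be $o_\delta(1)$; replacing $\delta+\delta^2$ by, say, $\sqrt{\delta}$ in the conclusion would make your Markov count go through, with bad fraction at most roughly $\sqrt{\delta}\cdot c_{\mathrm{up}}/c_{\mathrm{low}}$, which does tend to $0$.
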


For this we make use of the following purely combinatorial lemma.

\begin{lemma}
    \label{lem:bin proportions}
    Take $\delta, \sig > 0$. Let $(B_1, \ldots, B_m)$ be a sequence of disjoint finite sets of minimum size $a$ and maximum size $b$. Take $S \sse \bigcup_{i=1}^m B_i$ such that:
    \begin{equation*}
        \frac {\abs S} {\sum_{i=1}^m \abs{B_i}} \geq 1 - \delta
    \end{equation*}
    Let:
    \begin{equation*}
        S' 
        \coloneqq 
        \left\{
            i \;\middle|\; \frac{\abs{S \cap B_i}} {\abs{B_i}} \geq 1 - \sig
        \right\}
    \end{equation*}
    Then:
    \begin{equation*}
        \frac {\abs{S'}} m 
        \geq 
        1 - \frac {(\sig - \delta) a} {(\sig - \delta) a + \delta b}
    \end{equation*}
\end{lemma}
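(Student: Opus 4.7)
The plan is a short double-counting argument on the total ``deficit'' $\sum_i (|B_i| - |S \cap B_i|) = \sum_i |B_i| - |S|$. The hypothesis $|S| \geq (1-\delta)\sum_i |B_i|$ gives an upper bound of $\delta \sum_i |B_i|$ on this deficit, while every ``bad'' bin (one with index outside $S'$) contributes strictly more than $\sig |B_i|$ to it. Playing these two estimates against each other, and then invoking the size bounds $a \leq |B_i| \leq b$, will pin down the proportion of bad bins.

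Concretely, write $K \coloneqq \{1, \ldots, m\} \setminus S'$ and split
\begin{equation*}
    |S| = \sum_{i \in K}|S \cap B_i| + \sum_{i \notin K}|S \cap B_i| \leq (1-\sig)\sum_{i \in K}|B_i| + \sum_{i \notin K}|B_i|,
\end{equation*}
using the definition of $K$ on the first sum and the trivial bound $|S \cap B_i| \leq |B_i|$ on the second. Comparing this to $|S| \geq (1-\delta)\sum_i |B_i|$ and rearranging yields the key inequality
\begin{equation*}
    (\sig - \delta)\sum_{i \in K}|B_i| \leq \delta \sum_{i \notin K}|B_i|.
\end{equation*}
Applying $\sum_{i \in K}|B_i| \geq |K| \cdot a$ on the left and $\sum_{i \notin K}|B_i| \leq (m - |K|) \cdot b$ on the right gives $(\sig - \delta) a |K| \leq \delta b (m - |K|)$, hence $|K|\bigl((\sig - \delta)a + \delta b\bigr) \leq \delta m b$. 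Dividing through and using $|S'|/m = 1 - |K|/m$ delivers the stated bound.

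This argument is entirely elementary, so I do not anticipate a substantive obstacle. The only points to check carefully are (i) the direction of the size bounds, where using the lower bound $a$ on bad bins and the upper bound $b$ on good bins is precisely what produces a useful inequality, and (ii) that the strict inequality in the definition of a bad bin collapses to a non-strict one after summing, which is harmless for the non-strict conclusion. An implicit requirement $\sig > \delta$ is needed for the bound to be meaningful; otherwise the derived inequality is vacuous and the right-hand side of the conclusion is already $\leq 0$.
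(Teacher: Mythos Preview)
Your double-counting argument is correct and cleaner than the paper's extremal-configuration approach, but the final line does \emph{not} deliver the bound written in the lemma. From $|K|\bigl((\sigma-\delta)a + \delta b\bigr) \leq \delta m b$ you get
\[
\frac{|S'|}{m} \;\geq\; 1 - \frac{\delta b}{(\sigma-\delta)a + \delta b},
\]
whereas the statement claims
\[
\frac{|S'|}{m} \;\geq\; 1 - \frac{(\sigma-\delta)a}{(\sigma-\delta)a + \delta b},
\]
with the numerator swapped. These agree only when $(\sigma-\delta)a = \delta b$. In general the stated version is false: take $a$ small, $b$ large, and $\sigma$ only slightly above $\delta$; then almost all bins can be bad (each bad bin of size $a$ costs only $\sigma a$ of deficit, while a single good bin of size $b$ supplies almost the entire budget $\delta \cdot \text{total}$), so $|S'|/m$ can be driven arbitrarily close to $0$, contradicting the stated lower bound $\delta b / ((\sigma-\delta)a + \delta b)$, which is close to $1$ in this regime.

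The paper's own proof contains exactly this slip: in the displayed ratio the coefficient $\gamma$ (defined as the fraction of \emph{bad} bins) is placed in front of the good-bin term $b$ and $1-\gamma$ in front of the bad-bin term $(1-\sigma)a$, and rearranging that swapped expression produces the mis-stated conclusion. Your inequality is the correct one; just don't claim it matches the lemma as written --- it proves the corrected version with $\delta b$ in the numerator.
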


\begin{proof}
    Let $\bar B \coloneqq (B_1, \ldots, B_m)$. We look for an upper bound on:
    \begin{equation*}
        \gamma_{\bar B, S}
        \coloneqq
        1 - \frac {\abs{S'}} m 
    \end{equation*}
    To do this, take any $(\bar B, S)$ with proportion at least $1-\delta$ which minimises $\gamma_{\bar B, S}$. By performing a series of transformations, we show that we can assume that $(\bar B, S)$ is of a particular form.
    \begin{itemize}
        \item First, ensuring that $S \cap B_i = B_i$ for each $i \in S'$ does not increase $\gamma_{\bar B, S}$ and does not decrease the proportion $\frac {\abs S} {\sum_{i=1}^m \abs{B_i}}$.
        \item We can also make sure that $\abs{B_i} = b$ for each $i \in S'$.
        \item Similarly, we can ensure that:
        \begin{equation*}
            \frac{\abs{S \cap B_i}} {\abs{B_i}} = \floor{1 - \sig}
        \end{equation*}
        for each $i \notin S'$.
        \item Finally, we can make sure that $\abs{B_i} = a$ for each $i \notin S'$.
    \end{itemize}
    Now, using this nice form for $(\bar B, S)$, we can compute:
    \begin{equation*}
        1 - \delta
        \leq 
        \frac {\abs S} {\sum_{i=1}^m \abs{B_i}}
        \leq
        \frac
        {
            \gamma_{\bar B, S} b
            +
            (1 - \gamma_{\bar B, S}) (1 - \sig) a
        }
        {
            \gamma_{\bar B, S} b
            +
            (1 - \gamma_{\bar B, S}) a
        }
    \end{equation*}
    Rearranging we get:
    \begin{equation*}
        \frac {\abs{S'}} m 
        =
        1 - \gamma_{\bar B, S}
        \geq 
        1 - \frac {(\sig - \delta) a} {(\sig - \delta) a + \delta b}
    \end{equation*}
    as required.
\end{proof}

\begin{proof}[Proof of \cref{lem:ER profile regularity}]
    We proceed differently depending on which of the non-sparse conditions are satisfied. 

    %%%%%%%%%%%%%%%%%%%%%%%%%%%%%%%%%%
    % Density
    %%%%%%%%%%%%%%%%%%%%%%%%%%%%%%%%%%
    
    We begin with the \textbf{Density} condition. Say $p$ converges to some $\tilde p > 0$. Fix $\gamma, \theta > 0$. Now choose $\ep, \delta' > 0$ small enough. We will decide how small later, but we need at least $\ep < \tilde p$.

    By \cref{lem:unbounded neighbourhoods per case} \ref{item:density; lemma:unbounded neighbourhoods per case} there is $N$ such that for all $n \geq N$, with probability at least $1-\theta$, for all tuples $\bar u$ which satisfy $\phi(\bar u)$ we have that:
    \begin{equation*}
        n(\tilde p - \ep) < d(u_i) < n(\tilde p + \ep)
    \end{equation*}
    Condition on the event that this holds.
       
    Take any $\delta < \delta'$ and  $S \sse \{(\bar u, v) \mid \phi(\bar u) \wedge u_iEv\}$ such that:
    \begin{equation*}
        \frac {\abs S} {\abs{\{(\bar u, v) \mid \phi(\bar u) \wedge u_iEv\}}} \geq 1 - \delta
    \end{equation*}
      
    Let:
    \begin{equation*}
        S' \coloneqq
        \left\{
            \bar u \;\middle|\; \phi(\bar u) \text{ and }
            \frac {\abs{\{v \mid (\bar u, v) \in S\}}} {d(u_i)} \geq 1 -  (\delta + \delta^2)
        \right\}
    \end{equation*}

    Applying \cref{lem:bin proportions} with $\sigma = \delta + \delta^2$, $a = n(\tilde p - \ep)$ and $n = n(\tilde p + \ep)$, we get that:
    \begin{align*}
        \frac {\abs {S'}} {\abs{\{\bar u \mid \phi(\bar u) \text{ and } d(u_i) > 0\}}}
        &\geq
        1 - 
        \frac 
        {
            (\delta + \delta^2 - \delta) n (\tilde p - \ep)
        } 
        {
            (\delta + \delta^2 - \delta) n (\tilde p - \ep) + \delta n (\tilde p + \ep)
        }\\
        &\quad=
        1 - 
        \frac 
        {
            \delta (\tilde p - \ep)
        } 
        {
            \delta (\tilde p - \ep) + (\tilde p + \ep)
        }
    \end{align*}
     
    By making $\ep$ and $\delta'$ small enough, we can make this greater than $1 - \gamma$. This completes the proof for the dense case.

    %%%%%%%%%%%%%%%%%%%%%%%%%%%%%%%%%%
    % Root growth
    %%%%%%%%%%%%%%%%%%%%%%%%%%%%%%%%%%

    We now give the proof for the case of \textbf{Root growth}. Let $p(n) = Kn^{-\beta}$. Fix $\gamma, \theta > 0$. Choose $\delta' > 0$ small enough. Also choose $R_1 \in (0,1)$ and $R_2 > 1$.

    By \cref{lem:unbounded neighbourhoods per case} \ref{item:root growth; lemma:unbounded neighbourhoods per case} there is $N \in \N$ such that for all $n \geq N$, with probability at least $1 - \theta$, for all tuples $\bar u$ which satisfy $\phi(\bar u)$ we have that:
    \begin{equation*}
        R_1 K n^{1-\beta} < d(u_i) < R_2 K n^{1-\beta}
    \end{equation*}
    
    Proceeding analogously to the dense case, we have that for all $\delta < \delta'$:
    \begin{align*}
        \frac {\abs {S'}} {\abs{\{\bar u \mid \phi(\bar u) \text{ and } d(u_i) > 0\}}}
        &\geq
        1 - 
        \frac 
        {
            \delta R_1 K n^{1-\beta}
        } 
        {
            \delta n R_1 K n^{1-\beta} + R_2 K n^{1-\beta}
        } \\
        &\quad=
        1 - 
        \frac 
        {
            \delta R_1
        } 
        {
            \delta R_1 + R_2
        }
    \end{align*}
    By making $\delta'$ and $R_2 - R_1$ small enough, we can make this greater than $1 - \gamma$. This completes the proof for the root growth case.

    %%%%%%%%%%%%%%%%%%%%%%%%%%%%%%%%%%
    % Logarithmic growth
    %%%%%%%%%%%%%%%%%%%%%%%%%%%%%%%%%%
    
    Finally, we give the argument for the case of \textbf{Logarithmic growth}. Let $p(n) = K \frac{\log(n)} n$. Fix $\gamma, \theta > 0$. Choose $\zeta, \delta' > 0$ small enough. Also choose $R_1 \in (0,1)$ and $R_2 > 1$.
    
    By \cref{lem:unbounded neighbourhoods per case} \ref{item:log growth; lemma:unbounded neighbourhoods per case}, there is $R_3 > 0$ and $N \in \N$ such that for all $n \geq N$, with probability at least $1 - \theta$ when drawing graphs from $\ER(n, p(n))$, for at least $1-\delta$ of the tuples $\bar u$ which satisfy $\phi(\bar u)$ we have that:
    \begin{equation*}
        R_1 K \log(n) < d(u_i) < R_2 K \log(n)
    \end{equation*}
    and moreover for all tuples $\bar u$ which satisfy $\phi(\bar u)$ we have that:
    \begin{equation*}
        d(u_i) < R_3 K \log(n)
    \end{equation*}
    
    Now, let:
    \begin{equation*}
        Q \coloneqq \left\{
            \bar u 
            \;\middle|\;
            R_1 K \log(n)
            <
            d(u_i)
            <
            R_2 K \log(n)
        \right\}
    \end{equation*}
    
    For $n$ large enough we have both:
    \begin{equation*}
        \frac {\abs Q} {\abs{\{\bar u \mid \phi(\bar u) \text{ and } d(u_i) > 0\}}} \geq 1 - \zeta
    \end{equation*}
    and (using the fact that outside of $Q$ all nodes $u_i$ have degree at most $R_3 K
    \log(n)$):
    \begin{equation*}
        \frac 
            {\abs{\{(\bar u, v) \mid \phi(\bar u) \wedge u_iEv \text{ and } \bar u \in Q\}}}
            {\abs{\{(\bar u, v) \mid \phi(\bar u) \wedge u_iEv\}}}
        \geq
        1 - \zeta
    \end{equation*}

    Since we have control over $\zeta$, it suffices to restrict attention to:
    \begin{equation*}
        S \sse \{(\bar u, v) \mid \phi(\bar u) \wedge u_iEv \text{ and } \bar u \in Q\}
    \end{equation*}
 
    Then, analogously to the dense case, we have that for the worst case, for every $\delta < \delta'$:
    \begin{align*}
        \frac {\abs {S'}} {\abs{\{\bar u \mid \phi(\bar u) \text{ and } d(u_i) > 0\}}}
        &\geq
        1 - 
        \frac 
        {
            \delta R_1
        } 
        {
            \delta R_1 + R_2
        }
    \end{align*}
    By making $\delta'$ and $R_2 - R_1$ small enough, we can make this greater than $1 - \gamma$.

    This completes the proof of Lemma \ref{lem:ER profile regularity} for the logarithmic growth case.
\end{proof}

The following lemma will be needed only in the analysis of random walk embeddings.  It gives the asymptotic behaviour of the random walk embeddings in the non-sparse cases. We see that the embeddings are almost surely zero.

\begin{lemma}
    \label{lem:RW zero}
    Let $p \colon \N \to \N$ satisfy density, root growth or logarithmic growth. Then
    for every $k \in \N$ and every $\ep, \delta, \theta > 0$ there is $N \in \N$ such
    that for all $n \geq N$ with probability at least $1-\theta$, the proportion of
    nodes $v$ such that:
    \begin{equation*}
        \norm*{
            \randomwalk_k(v)
        }
        <
        \ep
    \end{equation*}
    is at least $1-\delta$.
\end{lemma}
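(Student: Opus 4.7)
The plan for \cref{lem:RW zero} is as follows. Writing $\randomwalk_{v,i}=(P^i)_{v,v}$ for the return probability of the simple random walk with transition matrix $P=D^{-1}A$, and noting that $\randomwalk_{v,1}=0$ because the graphs have no self-loops, it suffices to bound $\randomwalk_{v,i}$ for each $2\le i\le k$ and then union bound over the fixed range $i\le k$. The starting point is the decomposition
\[
\randomwalk_{v,i}=\sum_{w\in\Ne(v)}(P^{i-1})_{v,w}\cdot\frac{1}{d(w)},
\]
together with the reversibility identity $(P^{i-1})_{v,w}=\tfrac{d(w)}{d(v)}(P^{i-1})_{w,v}$, which holds because the simple random walk is reversible with stationary distribution $\pi(u)=d(u)/(2|E|)$.

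In the density and root-growth cases, \cref{lem:unbounded neighbourhoods per case}(1)--(2) yields, with probability at least $1-\theta$, a uniform lower bound $d(w)\ge D(n)$ on \emph{every} degree, where $D(n)=n(\tilde p-\ep)$ in the density case and $D(n)=R_1 K n^{1-\beta}$ in the root-growth case, both tending to infinity. Since $(P^{i-1})_{v,\cdot}$ is a probability distribution supported on the radius-$(i{-}1)$ ball, the decomposition immediately gives $\randomwalk_{v,i}\le\max_{w\in\Ne(v)}1/d(w)\le 1/D(n)$, which converges to $0$ uniformly in $v$.

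The logarithmic-growth case is where the real work lies, because low-degree vertices persist and any neighbour of $v$ with very small degree contributes a non-negligible term $1/d(w)$. Here I would invoke \cref{lem:unbounded neighbourhoods per case}(3) to simultaneously obtain (a) a uniform upper bound $d(w)\le R_3 K\log n$ on every degree and (b) a lower bound $d(v)\ge R_1 K\log n$ on a $(1-\delta')$-fraction of vertices; call the failing set $B$, so $|B|\le\delta' n$. I then split the key sum over good and bad neighbours: the good part is at most $1/(R_1 K\log n)$, while the bad part, after applying the reversibility identity, becomes $\tfrac{1}{d(v)}\sum_{w\in\Ne(v)\cap B}(P^{i-1})_{w,v}\le|\Ne(v)\cap B|/d(v)$. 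A double-counting identity $\sum_v|\Ne(v)\cap B|=\sum_{w\in B}d(w)\le\delta' R_3 K n\log n$, combined with Markov's inequality on $v\mapsto|\Ne(v)\cap B|$, shows that all but an $O(\sqrt{\delta'})$-fraction of vertices satisfy $|\Ne(v)\cap B|\le\sqrt{\delta'}R_1 K\log n$. For such $v$ that are also outside $B$ (so $d(v)\ge R_1K\log n$) one obtains $\randomwalk_{v,i}\le 1/(R_1 K\log n)+\sqrt{\delta'}$, which can be made smaller than any prescribed $\ep/\sqrt{k}$ by shrinking $\delta'$ and enlarging $n$; the union bound over $i=2,\dots,k$ then delivers the desired bound on $\lVert\randomwalk_k(v)\rVert$.

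The anticipated main obstacle is exactly this logarithmic-growth case. The naive bound $P^i_{v,v}\le 1/\min_{w\in\Ne(v)}d(w)$ fails since there is no uniform lower bound on all degrees, and a random $v$ typically \emph{does} have bad neighbours (their expected number is of order $\delta'\log n\to\infty$). The key idea to overcome this is to combine the reversibility identity, which absorbs a factor of $1/d(v)$ and cancels the troublesome $1/d(w)$, with edge double-counting that controls the total mass of edges landing on $B$; this is a standard but essential trick for random-walk arguments on graphs with heterogeneous degrees.
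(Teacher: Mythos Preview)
Your argument is correct and takes a genuinely different route from the paper. The paper argues in all three regimes by \emph{walk counting}: it upper-bounds the number of closed length-$k$ walks from $v$ by $D_{\max}^{k-1}$ (any such walk is determined by its first $k{-}1$ steps, each with at most $D_{\max}$ choices) and lower-bounds the total number of length-$k$ walks from $v$ by roughly $D_{\min}^k$, then takes the ratio; in the logarithmic-growth case the lower count is restricted to walks through the $(1-\delta)$-fraction of well-behaved vertices. You instead work with the transition kernel via the one-step decomposition $\randomwalk_{v,i}=\sum_{w\in\Ne(v)}(P^{i-1})_{v,w}/d(w)$. In the dense and root-growth cases this yields the one-line bound $\randomwalk_{v,i}\le 1/D(n)$, which is slicker than the walk-counting ratio. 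In the logarithmic-growth case your reversibility identity---replacing $(P^{i-1})_{v,w}/d(w)$ by $(P^{i-1})_{w,v}/d(v)$---is exactly what absorbs the dangerous $1/d(w)$ factors contributed by low-degree neighbours; coupling this with the edge double-count $\sum_v|\Ne(v)\cap B|=\sum_{w\in B}d(w)\le \delta' R_3 K n\log n$ and Markov's inequality deals with heterogeneous degrees more explicitly than the paper's ratio argument, which does not isolate the contribution of walks that pass through low-degree vertices. One small bookkeeping slip: the Markov threshold comes out as $\sqrt{\delta'}\,R_3 K\log n$ (not $R_1$), so the bad-part bound for good $v$ is $\sqrt{\delta'}\,R_3/R_1$ rather than $\sqrt{\delta'}$; this only changes the constant and the conclusion is unaffected.
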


\begin{proof}
    %
    %%%%%%%%%%%%%%%%%%%%%%%%%%%%%%%%%%
    % Density
    %%%%%%%%%%%%%%%%%%%%%%%%%%%%%%%%%%
    We start with the \textbf{Dense case}. Let $p$ converge to $\tilde p > 0$. Take $\ep, \delta, \theta > 0$. By Hoeffding's Inequality (\cref{thm:Hoeffding's Inequality for bounded random variables}) and taking a union bound, there is $N$ such that for all $n \geq N$ with probability at least $1-\theta$ we have that:
    \begin{equation*}
        \abs{d(v) - \tilde p n}
        <
        \ep n
    \end{equation*}
    Condition on this event.
     
    For any node $v$ the number of length-$k$ walks from $v$ is at least:
    \begin{equation*}
        \left(
            (\tilde p - \ep) n
        \right)^k
    \end{equation*}
    
    By removing the last node of the walk, the number of length-$k$ walks from $v$ to itself is at most the number of length-$(k-1)$ walks from $v$.

    The number of length-$(k-1)$ walks from $v$ is at most:
    \begin{equation*}
        \left(
            (\tilde p + \ep) n
        \right)^{k-1}
    \end{equation*}
    
    Thus the proportion of length-$k$ walks from $v$ which return to $v$ is at
    most:
    \begin{equation*}
        \frac 
            {\left(
                (\tilde p + \ep) n
            \right)^{k-1}}
            {\left(
                (\tilde p - \ep) n
            \right)^k}
        =
        \frac 
            {(\tilde p + \ep)^{k-1}}
            {(\tilde p - \ep)^k}
        n^{-1}
    \end{equation*}

    This tends to $0$ an $n$ tends to infinity.

    %%%%%%%%%%%%%%%%%%%%%%%%%%%%%%%%%%
    % Root growth
    %%%%%%%%%%%%%%%%%%%%%%%%%%%%%%%%%%

    We now argue this for the \textbf{Root growth case}. Let $p = K n^{-\beta}$.
   
    As in the proof of \cref{lem:ER profile regularity}, by Chernoff's Inequality (\cref{thm:Chernoff's Inequality}) and taking a union bound, there are $0 < R_1 < 1 < R_2$ and $N$ such that for all $n \geq N$ with probability at least $1-\theta$ we have that for all $v$:
    \begin{equation*}
        R_1 K n^{1-\beta} < d(v) < R_2 K n^{1-\beta}
    \end{equation*}
     
    Then, as in the dense case, the proportion of length-$k$ walks from $v$ which return to $v$ is at most:
    \begin{equation*}
        \frac 
            {\left(
                R_2 K n^{1-\beta}
            \right)^{k-1}}
            {\left(
                R_1 K n^{1-\beta}
            \right)^k}
        =
        \frac 
            {R_2^{k-1}}
            {R_1^k K}
        n^{\beta-1}
    \end{equation*}
    which tends to $0$ as $n$ tends to infinity.

    %%%%%%%%%%%%%%%%%%%%%%%%%%%%%%%%%%
    % Logarithmic growth
    %%%%%%%%%%%%%%%%%%%%%%%%%%%%%%%%%%

    Finally, we argue this for the \textbf{Logarithmic growth case}. Let $p = K \log (n)$. Take $\ep, \delta, \theta > 0$.
    
    As in the proof of \cref{lem:ER profile regularity}, by Chernoff's Inequality (\cref{thm:Chernoff's Inequality}), there are $0 < R_1 < 1 < R_2$ and $N$ such that for all $n \geq N_1$ with probability at least $1-\theta$ we have that for at least $1 - \delta$ proportion of nodes $v$:
    \begin{equation}
        \label{eq:degree bounds; proof:RW zero}
        R_1 K \log (n) < d(v) < R_2 K \log (n)
        \tag{$\bigstar$}
    \end{equation}
  
    Moreover, by Chernoff's Inequality and a union bound, there is $R_3 > 0$ such that for all $n \geq N_2$ with probability at least $1-\theta$ we have that for all $v$:
    \begin{equation*}
        d(v) < R_3 K \log (n)
    \end{equation*}
    Let $N \coloneqq \max(N_1, N_2)$.
        Take $n \geq N$.
    We will condition on the event that the above inequalities hold.
    
    Take any node $v$ such that \eqref{eq:degree bounds; proof:RW zero} holds. Then, the number of length-$k$ walks from $v$ is at least:
    \begin{equation*}
        R_1 K \log (n) ((1-\delta)(R_1 K \log (n)))^{k-1}
    \end{equation*}
    
    The number of length-$k$ walks from $v$ to itself is at most:
    \begin{equation*}
        (R_3 K \log (n))^{k-1}
    \end{equation*}
   
    Thus the proportion of length-$k$ walks from $v$ which return to $v$ is at most:
    \begin{equation*}
        \frac 
            {(R_3 K \log (n))^{k-1}}
            {R_1 K \log (n) ((1-\delta)(R_1 K \log (n)))^{k-1}}
        =
        \frac 
            {R_3^{k-1}}
            {R_1^k (1-\delta)^{k-1}}
        \cdot
        \frac 
            1
            {\log (n)}
    \end{equation*}
    This tends to $0$ as $n$ tends to infinity.
\end{proof}

Finally, we need a simple lemma about division, which has a straightforward proof:

\begin{lemma}
    \label{lem:distance between fractions}
    Let $x, y, z, w \in \R$ and $\zeta, \xi, \nu, \Omega > 0$ with $\nu > \xi$ be such
    that $\abs{x - y} < \zeta$ and $\abs{z - w} < \xi$ while $\abs x, \abs z < \Omega$
    and $z > \nu$. Then:
    \begin{equation*}
        \abs*{
            \frac x z - \frac y w
        }
        < 
        \frac {\Omega(\zeta + \xi)} {\nu(\nu - \xi)}
    \end{equation*}
\end{lemma}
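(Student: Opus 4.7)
The plan is to use the standard ``common denominator'' manipulation to rewrite the difference of fractions as a single fraction, then bound the numerator and the denominator separately.

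First, I would write
\begin{equation*}
    \frac{x}{z} - \frac{y}{w}
    =
    \frac{xw - yz}{zw},
\end{equation*}
and then insert and subtract $xz$ in the numerator to telescope the two hypotheses:
\begin{equation*}
    xw - yz = x(w - z) + z(x - y).
\end{equation*}
Applying the triangle inequality together with $|x|, |z| < \Omega$, $|x - y| < \zeta$, and $|z - w| < \xi$ yields $|xw - yz| < \Omega \xi + \Omega \zeta = \Omega(\zeta + \xi)$.

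Next, I would bound the denominator from below. Since $z > \nu > 0$, the reverse triangle inequality gives $|w| \geq z - |w - z| > z - \xi > \nu - \xi$, which is strictly positive because the hypothesis $\nu > \xi$ was included precisely for this step. Multiplying yields $zw > \nu(\nu - \xi) > 0$, and dividing the bound on the numerator by this positive lower bound on the denominator produces exactly the claimed inequality.

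The only real content of the argument is the telescoping identity in the numerator and the use of $\nu > \xi$ to keep the denominator bounded away from zero; there is no genuine obstacle beyond bookkeeping. I expect this to be a short, direct calculation with no induction or probabilistic ingredients, serving purely as a deterministic arithmetic tool to be invoked later (for example, when controlling ratios such as weighted mean aggregates whose numerator and denominator have each been independently approximated).
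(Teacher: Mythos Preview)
Your argument is correct and is precisely the straightforward calculation the paper has in mind; in fact the paper does not spell out a proof at all, merely stating that the lemma ``has a straightforward proof,'' so your write-up fills in exactly the omitted details. One cosmetic point: after the reverse triangle inequality you jump to $zw > \nu(\nu-\xi)$, which tacitly uses $w>0$; this is immediate from $w > z-\xi > \nu-\xi > 0$, but you may want to say it explicitly rather than going through $|w|$.
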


%%%%%%%%%%%%%%%%%%%%%%%%%%%%%%%%%%%%%%%%%%%%%%%%%%%%%%%%%%%%%%%%%%%%%%%%%%%%%%%%%%%%%%%%
%% §.§ Proving the inductive invariant for the non-sparse cases, and proving the
%% convergence theorem for these cases
%%%%%%%%%%%%%%%%%%%%%%%%%%%%%%%%%%%%%%%%%%%%%%%%%%%%%%%%%%%%%%%%%%%%%%%%%%%%%%%%%%%%%%%%

\subsection{Proving the inductive invariant for the non-sparse cases, and proving the convergence theorem for these cases}

With the growth lemmas for the non-sparse cases in hand, we return to presenting the main proof of \cref{thm:main result denser cases}, the convergence theorem for non-sparse ER distributions.

Throughout the following subsection, for notational convenience we allow empty tuples.

For a tuple $\bar u$ in a graph let $\GrTp(\bar u)$ be the graph  type of $\bar u$. For $k \in \N$ let $\GrTpSp_k$ be the set of such types with $k$ free variables. For any $t \in \GrTpSp_k$ let:
\begin{equation*}
    \Ext(t) \coloneqq \{t'(\bar u, v) \in \GrTpSp_{k+1} \mid t'(\bar u, v) \vD t(\bar u)\}
\end{equation*}
For any $u_i$ free in $t$ let:
\begin{equation*}
    \Ext_{u_i}(t) \coloneqq \{t'(\bar u, v) \in \GrTpSp_{k+1} \mid t'(\bar u, v) \vD t(\bar u) \wedge u_i E v\}
\end{equation*}

We now are ready to begin the proof of \cref{thm:main result} for the non-sparse cases. This will involve \emph{defining controllers} and \emph{proving that they satisfy the inductive invariant}.

Let $\Fe$ be the probability distribution of the node features, which, for the sake of convenience, we assume to have domain $[0, 1]^d$. The domain $[0, 1]^d$ can be replaced by the more general domain $[a, b]^d$ in the results and proofs without further modification. But we note that the domain $[0, 1]^d$ is already sufficient for the results to hold in the general case. Indeed, suppose $\tau(\bar x)$ is a term which we apply to a graph distribution $\mathcal D$ with features in $[a, b]^d$. Modify this distribution to $\mathcal D'$ by applying the function $\bar z \mapsto (\bar z - a)/(b - a)$ to the features. This is now a distribution with features in $[0,1]^d$. Modify $\tau$ to $\tau'$ by replacing each $\mathrm H(x)$ by $F(\mathrm H(x))$, where $F$ is the function $\bar z \mapsto (b-a)\bar z + a$. Then evaluating $\tau'$ on $\mathcal D'$ is equivalent to evaluating $\tau$ on $\mathcal D$.

Note that in each case the probability function $p(n)$ converges to some $\tilde p$ (which is $0$ in the root growth and logarithmic growth cases).

Recall from the body of the paper that a \emph{$(\bar u,d)$ feature-type controller} is a Lipschitz function taking as input pairs consisting of a $d$-dimensional real vector and a $\bar u$ graph type. Recall from Theorem \ref{thm:inductivenonsparse} that for each term $\pi$, we need to define a controller $e_\pi$ that approximates it. We first give the construction of $e_\pi$ and then recall and verify what it means for $e_\pi$ to approximate $\pi$.

Take $\bar\va \in ([0, 1]^d)^k$ and $t \in \GrTpSp_k$, where $k$ is the number of free variables in $\pi$.
          
When $\pi = \feature(u)$ define:
\begin{equation*}
    e_\pi(\bar\va, t) \coloneqq \bar\va
\end{equation*}

When $\pi = \vc$, a constant, define:
\begin{equation*}
    e_\pi(\bar\va, t) \coloneqq \vc
\end{equation*}

When $\pi = \randomwalk(u)$ define:
\begin{equation*}
    e_\pi(\bar\va, t) \coloneqq {\bf 0}
\end{equation*}

When $\pi = f(\rho_1, \dots, \rho_r)$ define:
\begin{equation*}
    e_\pi(\bar\va, t) 
    \coloneqq 
    f(e_{\rho_1}(\bar\va, t), \dots, e_{\rho_r}(\bar\va, t))
\end{equation*}
    
We start with the construction for the global weighted mean. For any $t(\bar u) \in \GrTpSp_k$ and $t'(\bar u, v) \in \Ext(t)$, define $\alpha(t, t')$ as follows. As an extension type, $t'$ specifies some number, say $r$, of edges between the nodes $\bar u$ and the new node $v$. Define:
\begin{equation*}
    \alpha(t, t') \coloneqq \tilde p^ r (1-\tilde p)^{k-r}
\end{equation*}
Note that in both the root growth and logarithmic growth cases, $\alpha(t, t')$ is non-zero precisely when $t'$ specifies no edges between $\bar u$ and $v$.
These \emph{relative atomic type weights} will play a key role in the construction.

Now consider a term $\pi = \sum_v \rho(\bar u, v) \star h(\eta(\bar u, v))$. 

Recall that the semantics are:
\begin{align*}
    \frac 
        {\sum\limits_{v \in G} \dsb {\rho(\bar u, v)}_{G}  h(\dsb {\eta(\bar u, v)}_{G})} 
        {\sum\limits_{v \in G} h(\dsb {\eta(\bar u, v)}_{G})}
\end{align*}
    
Define:
\begin{equation*}
    e_\pi(\bar\va, t) 
    \coloneqq 
    \frac {f_\pi(\bar\va, t) } {g_\pi(\bar\va, t) }
\end{equation*}
where:
\begin{equation*}
    f_\pi(\bar\va, t) 
    \coloneqq 
    \sum_{t' \in \Ext(t)}
        \alpha(t, t')
        \Ex_{\vb \sim \Fe}\left[
            e_\rho(\bar\va, \vb, t') 
            h(e_\eta(\bar\va, \vb, t'))
        \right]
\end{equation*}
and:
\begin{equation*}
    g_\pi(\bar\va, t) 
    \coloneqq 
    \sum_{t' \in \Ext(t)}
        \alpha(t, t')
        \Ex_{\vb \sim \Fe}\left[
            h(e_\eta(\bar\va, \vb, t'))
        \right]
\end{equation*}

Note that when $\tilde p = 0$ (as in the root growth and logarithmic growth cases), we have that:
\begin{equation*}
    \alpha(t, t') = 
    \begin{cases}
        1 & \text{if $t'$ specifies no edges between $\bar u$ and $v$} \\
        0 & \text{otherwise}
    \end{cases}
\end{equation*}
Therefore the controller becomes, letting $t^\varnothing$ be the type with no edges between $\bar u$ and $v$:
\begin{equation*}
    e_\pi(\bar\va, t)
    =
    \frac
        {
            \Ex_{\vb \sim \Fe}\left[
                e_\rho(\bar\va, \vb, t^\varnothing) 
                h(e_\eta(\bar\va, \vb, t^\varnothing))
            \right]
        }
        {
            \Ex_{\vb \sim \Fe}\left[
                h(e_\eta(\bar\va, \vb, t^\varnothing))
            \right]
        }
\end{equation*}

For the local weighted mean case, given $t(\bar u) \in \GrTpSp_k$ and $t'(\bar u, v) \in \Ext_{u_i}(t)$, we define $\alpha_{u_i}(t, t')$ as follows. Let $r$ be the number of edges specified by $t'$ between $\bar u \setminus u_i$ and $v$. Define:
\begin{equation*}
    \alpha_{u_i}(t, t') \coloneqq \tilde p^ r (1-\tilde p)^{k - r - 1}
\end{equation*}

Consider $\pi = \sum_{v \in N(u_i)} \rho(\bar u, v) \star h(\eta(\bar u, v))$. 
Define:
\begin{equation*}
    e_\pi(\bar\va, t) 
    \coloneqq 
    \frac {f_\pi(\bar\va, t) } {g_\pi(\bar\va, t) }
\end{equation*}
where:
\begin{equation*}
    f_\pi(\bar\va, t) 
    \coloneqq 
    \sum_{t' \in \Ext_{u_i}(t)}
        \alpha_{u_i}(t, t')
        \Ex_{\vb \sim \Fe}\left[
            e_\rho(\bar\va, \vb, t') 
            h(e_\eta(\bar\va, \vb, t'))
        \right]
\end{equation*}
and:
\begin{equation*}
    g_\pi(\bar\va, t) 
    \coloneqq 
    \sum_{t' \in \Ext_{u_i}(t)}
        \alpha_{u_i}(t, t')
        \Ex_{\vb \sim \Fe}\left[
            h(e_\eta(\bar\va, \vb, t'))
        \right]
\end{equation*}

With the controllers $e_\pi$ defined, we now prove that they satisfy Theorem (\cref{thm:inductivenonsparse}). We state the strengthened version of this result here using the notation of $\wedge$-decriptions from Definition \ref{def:wedgedesc}:

\begin{lemma}
    For every $\ep, \delta, \theta > 0$ and $\wedge$-description $\psi(\bar u)$, there is $N \in \N$ such that for all $n \geq N$, with probability at least $1- \theta$ in the space of graphs of size $n$, out of all the tuples $\bar u$ such that $\psi(\bar u)$, at least $1- \delta$ satisfy:
    \begin{equation*}
        \norm*{
            e_\pi(\feature(\bar u), \GrTp(\bar u))
            -
            \dsb{\pi(\bar u)}
        }
        < \ep
    \end{equation*}
\end{lemma}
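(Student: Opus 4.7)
The plan is to prove the strengthened statement by induction on the structure of $\pi$. The strengthening from "all tuples" to "tuples satisfying an arbitrary $\wedge$-description $\psi$" is crucial for the induction: when handling a weighted mean, the hypothesis has to be applied to the subterm $\rho(\bar u, v)$ conditioned on $(\bar u, v)$ realising some extension type $t' \in \Ext(t)$ (and, for the local variant, also on $u_i E v$), which is expressible as a $\wedge$-description. Throughout, the controllers $e_\pi$ take values in a fixed compact set determined by the feature domain $[0,1]^d$ and the finitely many atomic types under consideration, so Lipschitz constants of the function symbols and of $h$ can be treated as absolute constants at each induction step.

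The base cases $\pi = \feature(x_i)$ and $\pi = \vc$ hold trivially since $e_\pi(\feature(\bar u), t) = \dsb{\pi(\bar u)}_G$ exactly. For $\pi = \randomwalk(x_i)$ the controller is $\mathbf{0}$ and \cref{lem:RW zero} supplies the concentration (the restriction to tuples satisfying $\psi$ only rescales $\delta$ by a factor depending on $\psi$, using \cref{lem:unbounded neighbourhoods per case} to lower-bound the number of such tuples). For a function symbol $\pi = F(\rho_1, \ldots, \rho_r)$, apply the inductive hypothesis to each $\rho_j$ with parameters scaled by the Lipschitz constant $L$ of $F$ on the relevant compact set; a union bound and Lipschitz continuity deliver the claim.

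The weighted mean case is where the real work happens, and I carry it out in three coupled steps. Fix a graph type $t(\bar u)$ compatible with $\psi$ and partition the candidate vertices $v$ according to the extension type $t'(\bar u, v) \in \Ext(t)$ they realise with $\bar u$. Step (i): by Hoeffding's Inequality applied to the Bernoulli indicators that $(\bar u, v)$ realises $t'$, the empirical proportion of $v$ of each type concentrates around $\alpha(t, t')$; in the root and logarithmic growth cases, all extension types specifying any edge between $\bar u$ and $v$ concentrate to zero, matching the formula for $\alpha$. Step (ii): apply the inductive hypothesis to $\rho$ and $\eta$ under the strengthened description $\psi(\bar u) \wedge t'(\bar u, v)$ to see that $\dsb{\rho(\bar u, v)}$ and $\dsb{\eta(\bar u, v)}$ lie within a prescribed tolerance of $e_\rho(\feature(\bar u, v), t')$ and $e_\eta(\feature(\bar u, v), t')$ for most such pairs. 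Step (iii): for each fixed $\bar u$, since the features $\feature(v)$ are i.i.d.\@ from $\Fe$, Hoeffding concentrates the empirical averages of $e_\rho(\feature(\bar u), \feature(v), t') \cdot h(e_\eta(\feature(\bar u), \feature(v), t'))$ and of $h(e_\eta(\feature(\bar u), \feature(v), t'))$ on their $\vb \sim \Fe$ expectations, which are exactly the building blocks of $f_\pi(\feature(\bar u), t)$ and $g_\pi(\feature(\bar u), t)$. Summing over the finitely many $t' \in \Ext(t)$ and applying \cref{lem:distance between fractions} bounds the ratio, using that $h > 0$ on a compact set gives a uniform lower bound on $g_\pi$.

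The main obstacle is the local weighted mean: step (ii) only delivers approximation for most \emph{pairs} $(\bar u, v)$ satisfying the strengthened description, whereas what is needed is closeness for most $v \in \Ne(u_i)$ for most $\bar u$, and step (iii) requires each $u_i$ to have enough neighbours for Hoeffding to be effective. The latter is supplied by the log-neighbourhood bound \cref{lem:unbounded neighbourhoods per case}, and the pair-to-neighbourhood upgrade is exactly the content of the Regularity Lemma \cref{lem:ER profile regularity}, whose application costs only a controllable factor in $\delta$. The remaining subtlety is the careful accounting of inductive parameters $\ep', \delta', \theta'$, chosen as functions of $\ep, \delta, \theta$, of the Lipschitz constants of $F$, $h$, and of the finitely many extension types, so that all high-probability events intersect under a single union bound delivering the stated bound.
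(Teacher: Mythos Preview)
Your proposal is correct and follows the paper's approach, with the same inductive structure and the same key tools (Hoeffding's Inequality, the Regularity Lemma \cref{lem:ER profile regularity}, the log-neighbourhood bound \cref{lem:unbounded neighbourhoods per case}, and \cref{lem:distance between fractions}). One small wrinkle: in your step (ii) you condition the inductive hypothesis on the full extension type $t'(\bar u, v)$, which contains negated edge literals and so is not a $\wedge$-description as defined; the paper avoids this by applying the hypothesis only under $\psi(\bar u)$ (global case) or $\psi(\bar u) \wedge u_i E v$ (local case) and letting the controller's built-in $\GrTp(\bar u, v)$ argument absorb the type --- a cleaner bookkeeping choice but not a substantive difference.
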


\begin{proof}
    Naturally, we prove the lemma by induction.
    \begin{itemize}

        \item \textbf{Base cases $\feature(v)$ and constant $\vc$}. 
        These are immediate from the definition.

        \item \textbf{Base case }$\randomwalk(v)$.
        This follows by \cref{lem:RW zero}.

        \item \textbf{Induction step for Lipschitz functions $f(\rho_1, \dots, \rho_r)$}. 
        
        Take $\ep, \delta, \theta > 0$ and $\wedge$-description $\psi(\bar u)$. By the induction hypothesis there is $N \in \N$ such that for all $n \geq N$ and for every $i \leq r$, with probability at least $1- \theta$, we have that out of all the tuples $\bar u$ such that $\psi(\bar u)$, at least $1- \delta$ satisfy:
        \begin{equation*}
            \norm*{
                e_{\rho_i}(\feature(\bar u), \GrTp(\bar u))
                -
                \dsb{\rho_i(\bar u)}
            }
            < \ep
        \end{equation*}
        Hence, with probability at least $1- r\theta$, out of all the tuples $\bar u$ such that $\psi(\bar u)$, at least $1- r\delta$ satisfy this for every $i \leq r$. Condition on this event.
        
        Take any such tuple $\bar u$.
            Then by Lipschitzness of $f$ we have that the normed distance between:
        \begin{equation*}
            e_\pi(\feature(\bar u), \GrTp(\bar u))
            =
            f(e_{\rho_1}(\feature(\bar u), \GrTp(\bar u)), \dots, e_{\rho_r}(\feature(\bar u), \GrTp(\bar u)))
        \end{equation*}
        and:
        \begin{equation*}
            \dsb{\pi(\bar u)} = f(\dsb{\rho_1(\bar u)}, \dots, \dsb{\rho_r(\bar u)})
        \end{equation*}
        is at most $L_f \ep$, where $L_f$ is the Lipschitz constant of $f$. Since $L_f$ is a constant, this case follows.
        
        \item \textbf{Inductive step  for $\sum_v \rho(\bar u, v) \star h(\eta(\bar u, v))$}.
        
        Take $\ep, \delta, \theta > 0$. Take a $\wedge$-description $\psi(\bar u)$.
        
        By the induction hypothesis, there is $N_1$ such that for all $n \geq N_1$, with probability at least $1-\theta$, out of all the tuples $(\bar u, v)$ which satisfy $\psi(\bar u)$, at least $1-\delta$ satisfy:
        \begin{equation*}
            \label{eq:rho close to e_rho; proof:ER denseish}
            \norm*{
                e_\rho(\feature(\bar u, v), \GrTp(\bar u, v))
                -
                \dsb{\rho(\bar u, v)}
            }
            < \ep
            \tag{$\dagger_\rho$}
        \end{equation*}
        and:
        \begin{equation*}
            \label{eq:eta close to e_eta; proof:ER denseish}
            \norm*{
                e_\eta(\feature(\bar u, v), \GrTp(\bar u, v))
                -
                \dsb{\eta(\bar u, v)}
            }
            < \ep
            \tag{$\dagger_\eta$}
        \end{equation*}
          
        Take $\gamma > 0$. We will choose how small $\gamma$ is later.
            
        By \cref{lem:ER profile regularity} there is $N_\gamma > N_1$ such that for all $n \geq N_\gamma$, with probability at least $1-\theta$, out of all the tuples $\bar u$ such that $\psi(\bar u)$, at least $1-\gamma$, at
        least $1- (\delta + \delta^2)$ of the $v$'s satisfy \eqref{eq:rho close to e_rho; proof:ER denseish} and \eqref{eq:eta close to e_eta; proof:ER denseish}.
    
        Now consider $t(\bar u) \in \GrTpSp_k$ and take any $t'(\bar u, v) \in \Ext(t)$. For any tuple $\bar u$ of nodes such that $\GrTp(\bar u) = t$, define:
        \begin{equation*}
            \dsb{t'(\bar u)} \coloneqq \{v \mid t'(\bar u, v)\}
        \end{equation*}
        Since $p(n)$ converges to $\tilde p$ and $\Ext(t)$ is finite, there is $N_3$ such that for all $n \geq N_3$ with probability at least $1-\theta$ we have that for every pair $(t, t')$ and tuple $\bar u$ of nodes such that $t(\bar u)$:
        \begin{equation*}
            \label{eq:extension proportion asymptotic; proof:ER denseish}
            \abs*{
                \frac {\abs{\dsb{t'(\bar u)}}} n
                -
                \alpha(t, t')
            }
            < \frac 1 {2^{k+1}}\ep
            \tag{$\ast$}
        \end{equation*}

        Next, for any tuple $\bar u$ consider the function:
        \begin{gather*}
            f^\circ_\pi(\bar u, v)
            \coloneqq
            e_\rho(H_G(\bar u, v), \GrTp(\bar u, v))
            h(e_\eta(H_G(\bar u, v), \GrTp(\bar u, v)))
        \end{gather*}
        Note that the function:
        \begin{equation*}
            (\bar\va, \vb, t') \mapsto
            e_\rho(\bar\va, \vb, t')
            h(e_\eta(\bar\va, \vb, t'))
        \end{equation*}
        is bounded. Let $\Lam$ be the diameter of its range. This also bounds $f^\circ_\pi$.

        We will now apply Hoeffding's Inequality to $\sum_{v \mid t'(\bar u, v)} f^\circ_\pi(\bar u, v)$. Note that the number of summands is $\abs{\dsb{t'(\bar u)}}$ and the summands are bounded by $\Lam$. Furthermore, the summands are independent and have expected value:
        \begin{equation*}
            \Ex_{\vb \sim \Fe}\left[
                e_\rho(H_G(\bar u, \vb), t') 
                h(e_\eta(H_G(\bar u, \vb), t'))
            \right]
        \end{equation*}
        Hence, by Hoeffding's Inequality (\cref{thm:Hoeffding's Inequality for bounded random variables}) for any $\bar u$ the probability that:
        \begin{equation*}  \label{eqn:complicated}
            \norm*{
                \frac 1 n
                \sum_{v \mid t'(\bar u, v)} 
                    f^\circ_\pi(\bar u, v)
                -
                \frac {\abs{\dsb{t'(\bar u)}}} n
                \Ex_{\vb \sim \Fe}\left[
                    e_\rho(H_G(\bar u, \vb), t') 
                    h(e_\eta(H_G(\bar u, \vb), t'))
                \right]
            }
            \geq \frac 1 {2^{k+1}}\ep
            \tag{$\heartsuit$}
        \end{equation*}
        is at most:
        \begin{equation*}
            2d \exp\left(
                - \frac {\ep^2 \abs{\dsb{t'(\bar u)}}} {2 \Lam^2}
            \right)
        \end{equation*}
        By \eqref{eq:extension proportion asymptotic; proof:ER denseish} there
        is $N_4$ such that for all $n \geq N_4$, with probability at least $1-\theta$ for all $t'$ such that $\alpha(t, t') > 0$ we have that for every tuple $\bar u$:
        \begin{equation}
            \label{eq:theta-delta bound; proof:ER denseish}
            2d \exp\left(
                - \frac {\ep^2 \abs{\dsb{t'(\bar u)}}} {2 \Lam^2}
            \right)
            < \theta\delta
            \tag{$\clubsuit$}
        \end{equation}
        
        Let $B_n$ be the proportion, out of all tuples $\bar u$ for which $\psi(\bar u)$ holds\footnote{Recall that $\psi$ is the $\wedge$-description on which we condition the tuples $\bar u$ in the inductive invariant.}, such that:
        \begin{equation*}
            \norm*{
                \frac 1 n
                \sum_{v \mid t'(\bar u, v)} 
                    f^\circ_\pi(\bar u, v)
                -
                \frac {\abs{\dsb{t'(\bar u)}}} n
                \Ex_{\vb \sim \Fe}\left[
                    e_\rho(H_G(\bar u, \vb), t') 
                    h(e_\eta(H_G(\bar u, \vb), t'))
                \right]
            }
            \geq \frac 1 {2^{k+1}}\ep
        \end{equation*}
        
        Note that the property above is exactly the event whose probability is bounded in \eqref{eqn:complicated}. We can express $B_n$ as the mean of a set of indicator variables, one for each tuple $\bar u$ such that $\psi(\bar u)$ holds. Each indicator variable has expected value at most $\theta \delta$ by \eqref{eqn:complicated} and \eqref{eq:theta-delta bound; proof:ER denseish}.
        
        Then by linearity of expectation, $\Ex[B_n] \leq \theta \delta$ for all $n
        \geq N_4$,
        and hence by Markov's Inequality (\cref{thm:Markov's
        Inequality}):
        \begin{equation*}
            \Pr\left(
                B_n
                \geq
                \delta
            \right)
            \leq
            \frac {\theta \delta} \delta
            = \theta
        \end{equation*}
        
        Therefore, for all $n \geq \max(N_3, N_4)$, with probability at least $1-\theta$ for at least $1-\delta$ of the tuples $\bar u$ for which $\psi(\bar u)$ holds we have that:
        \begin{equation*}
            \norm*{
                \frac 1 n
                \sum_{v \mid t'(\bar u, v)} 
                    f^\circ_\pi(\bar u, v)
                -
                \frac {\abs{\dsb{t'(\bar u)}}} n
                \Ex_{\vb \sim \Fe}\left[
                    e_\rho(H_G(\bar u, \vb), t') 
                    h(e_\eta(H_G(\bar u, \vb), t'))
                \right]
            }
            < \frac 1 {2^{k+1}}\ep
        \end{equation*}
        and therefore, by \eqref{eq:extension proportion asymptotic; proof:ER denseish} and the definition of $f_\pi$:
        \begin{equation}
            \label{eq:f circ mean vs f pi; proof:ER denseish}
            \norm*{
                \frac 1 n
                \sum_{v}
                    f^\circ_\pi(\bar u, v)
                -
                f_\pi(H_G(\bar u), \GrTp(\bar u))
            }
            <
            \ep
            \tag{$\triangle_f$}
        \end{equation}
           
        Similarly, for any tuple $\bar u$ consider the function:
        \begin{equation*}
            g^\circ_\pi(\bar u, v)
            \coloneqq
            h(e_\eta(H_G(\bar u, v), \GrTp(\bar u, v)))
        \end{equation*}
          
        As above there is $N_5 \geq N_3$ such that for all $n \geq N_5$ with probability at least $1-\theta$ we have that the proportion out of tuples $\bar u$ for which $\psi(\bar u)$ holds such that:
        \begin{equation}
            \label{eq:g circ mean vs f pi; proof:ER denseish}
            \norm*{
                \frac 1 n
                \sum_{v}
                    g^\circ_\pi(\bar u, v)
                -
                g_\pi(H_G(\bar u), \GrTp(\bar u))
            }
            <
            \ep
            \tag{$\triangle_g$}
        \end{equation}
        is at least $1-\delta$.
      
        Take $n \geq \max(N_1, N_\gamma, N_3, N_4, N_5)$. For such an $n$, these events above hold with probability at least $1-5\theta$. So from now on we will condition on the event that they hold.
    
        Take any such tuple $\bar u$. Let $t \coloneqq \GrTp(\bar u)$. It suffices to show, using the definition of the interpretation of the weighted mean operator, that:
        \begin{equation*}
            \norm*{
                \frac 
                    {\sum_{v} \dsb {\rho(\bar u, v)} h(\dsb {\eta(\bar u, v)})} 
                    {\sum_{v} h(\dsb {\eta(\bar u, v)})}
                -
                \frac
                    {f_\pi(H_G(\bar u), t)}
                    {g_\pi(H_G(\bar u), t)}
            }
            < \iota(\ep, \delta, \gamma)
        \end{equation*}
        for $\iota$ which we can make arbitrarily small by choosing $\ep, \delta, \gamma$ small enough.
         
        By \cref{lem:distance between fractions} it suffices to find $\zeta(\ep, \delta, \gamma), \xi(\ep, \delta, \gamma) > 0$ which we can make arbitrarily small and constants $\nu, \Omega > 0$ such that:
        \begin{gather}
            \label{eq:zeta; proof:ER denseish}
            \norm*{
                \frac 1 n
                \sum_{v} \dsb {\rho(\bar u, v)} h(\dsb {\eta(\bar u, v)})
                -
                f_\pi(H_G(\bar u), t)
            }
            < \zeta(\ep, \delta, \gamma) \\
            \label{eq:xi; proof:ER denseish}
            \norm*{
                \frac 1 n
                \sum_{v} h(\dsb {\eta(\bar u, v)})
                -
                g_\pi(H_G(\bar u), t)
            }
            < \xi(\ep, \delta, \gamma) \\
            \label{eq:x Omega; proof:ER denseish}
            \norm*{
                \sum_{v} \dsb {\rho(\bar u, v)} h(\dsb {\eta(\bar u, v)})
            }
            < \Omega n \\
            \label{eq:z Omega; proof:ER denseish}
            \norm*{
                \sum_{v} h(\dsb {\eta(\bar u, v)})
            }
            < \Omega n \\
            \label{eq:z nu; proof:ER denseish}
            \forall i \leq d \colon 
            \left[
                \sum_{v} h(\dsb {\eta(\bar u, v)})
            \right]_i
            > \nu n
        \end{gather}
         
        For \eqref{eq:x Omega; proof:ER denseish} and \eqref{eq:z Omega; proof:ER denseish} we can use the fact that $\dsb {\rho(\bar u, v)}$ and $\dsb {\eta(\bar u, v)}$ are bounded and that $h$ is Lipschitz on bounded domains. For \eqref{eq:z nu; proof:ER denseish} we use the fact that $\dsb {\eta(\bar u, v)}$ is bounded and that the codomain of $h$ is $(0, \infty)^d$.
    
        The proofs of \eqref{eq:zeta; proof:ER denseish} and \eqref{eq:xi; proof:ER denseish} are very similar. We prove \eqref{eq:xi; proof:ER denseish} since it is slightly notationally lighter. Let:
        \begin{equation*}
            g^*_\pi(\bar u)
            \coloneqq
            \frac 1 {n}
            \sum_{v} 
                h(e_\eta(H_G(\bar u, v), t))
        \end{equation*}
        
        Let $\kappa$ be a bound on the norms of $h(e_\eta(\bar\va, \vb, t'))$ and $h(\dsb {\eta(\bar u, v)})$. By \eqref{eq:eta close to e_eta; proof:ER denseish} we have that:
        \begin{gather*}
            \norm*{
                \frac 1 {n}
                \sum_{v}
                    h(\dsb {\eta(\bar u, v)})
                -
                g^*_\pi(\bar u)
            }
            < 
            \left(1 - (\delta + \delta^2)\right) \ep
            + (\delta + \delta^2) 2 \kappa^2
        \end{gather*}
        
        We can make the right-hand-side as small as we like by taking $\ep$ and $\delta$ sufficiently small.
          
        Now note that:
        \begin{equation*}
            g^*_\pi(\bar u)
            =
            \frac 1 {n}
            \sum_{v} 
                g^\circ_\pi(\bar u, v)
        \end{equation*}
        Hence by \eqref{eq:g circ mean vs f pi; proof:ER denseish} we have that:
        \begin{equation*}
            \norm*{
                g^*_\pi(\bar u)
                -
                g_\pi(H_G(\bar u), t)
            }
            <
            \ep
        \end{equation*}
        
        Therefore:
        \begin{gather*}
            \norm*{
                g_\pi(H_G(\bar u), t)
                -
                \frac 1 {n}
                \sum_{v}
                    h(\dsb {\eta(\bar u, v)})
            }
            < 
            \left(1 - (\delta + \delta^2)\right) \ep
            + (\delta + \delta^2) 2 \kappa^2
            + \ep
        \end{gather*}
        which we can make as small as we like by taking $\ep$ and $\delta$ small enough.

        \item \textbf{Inductive step  for $\sum_{v \in \Ne(u_i)} \rho(\bar u, v) \star h(\eta(\bar u, v))$}.
        
        We proceed similarly to the global weighted mean case, this time making use of the conditioning $\wedge$-description in the inductive invariant. Indeed, notice that when we apply the inductive invariant below, we add $u_i E v$ to our condition.
    
        Take $\ep, \delta, \theta > 0$. Take a $\wedge$-description $\psi(\bar u)$.
        
        By the induction hypothesis, there is $N_1$ such that for all $n \geq N_1$, with probability at least $1-\theta$, out of all the tuples $(\bar u, v)$ which satisfy $\psi(\bar u) \wedge u_iEv$, at least $1-\delta$ satisfy:
        \begin{equation*}
            \label{eq:rho close to e_rho local; proof:ER denseish}
            \norm*{
                e_\rho(H_G(\bar u, v), \GrTp(\bar u, v))
                -
                \dsb{\rho(\bar u, v)}
            }
            < \ep
            \tag{$\dagger_\rho^{\textrm loc}$}
        \end{equation*}
        and:
        \begin{equation*}
            \label{eq:eta close to e_eta local; proof:ER denseish}
            \norm*{
                e_\eta(H_G(\bar u, v), \GrTp(\bar u, v))
                -
                \dsb{\eta(\bar u, v)}
            }
            < \ep
            \tag{$\dagger_\eta^{\textrm loc}$}
        \end{equation*}
          
        Take $\gamma > 0$. We will choose how small $\gamma$ is later.
            
        By \cref{lem:ER profile regularity} there is $N_\gamma > N_1$ such that for all $n \geq N_\gamma$, with probability at least $1-\theta$ the following event happens. Out of all the tuples $\bar u$ such that $\psi(\bar u) \wedge \exists v \colon u_iEv$, a proportion of at least $1-\gamma$ of them satisfy the following. At least $1- (\delta + \delta^2)$ of the nodes $v$ for which $\psi(\bar u) \wedge u_iEv$ holds also satisfy \eqref{eq:rho close to e_rho local; proof:ER denseish} and \eqref{eq:eta close to e_eta local; proof:ER denseish}.

        Now consider $t(\bar u) \in \GrTpSp_k$ and take any $t'(\bar u, v) \in \Ext_{u_i}(t)$. For any tuple $\bar u$ of nodes such that $\GrTp(\bar u) = t$, define:
        \begin{equation*}
            \dsb{t'(\bar u)} \coloneqq \{v \in \Ne(u_i) \mid t'(\bar u, v)\}
        \end{equation*}

        By \cref{lem:unbounded neighbourhoods per case} we have that in all the non-sparse cases there is $R > 0$ and $N_3$ such that for all $n \geq N_3$ with probability at least $1-\theta$, the proportion of all tuples $\bar u$ for which $\psi(\bar u)$ holds such that:
        \begin{equation*}
            d(u_i) \geq R \log(n)
        \end{equation*}
        is at least $1 - \delta$. Then, as before, there is $N_4 \geq N_3$ such that for all $n \geq N_4$, with
        probability at least $1-\theta$, for all pairs $(t, t')$ the proportion of all tuples $\bar u$ for which $\psi(\bar u)$ and $t(\bar u)$ hold such that:
        \begin{equation*}
            \abs*{
                \frac {\abs{\dsb{t'(\bar u)}}} {d(u_i)}
                -
                \alpha(t, t')
            }
            < \frac 1 {2^{k+1}}\ep
        \end{equation*}
    
        For any tuple $\bar u$ define the function:
        \begin{gather*}
            f^\circ_\pi(\bar u, v)
            \coloneqq
            e_\rho(H_G(\bar u, v), \GrTp(\bar u, v))
            h(e_\eta(H_G(\bar u, v), \GrTp(\bar u, v)))
        \end{gather*}

        Taking $\Lam$ as before, by Hoeffding's Inequality for any $\bar u$ the probability that:
        \begin{gather*}
            \norm*{
                \frac 1 {d(u_i)}
                \sum_{v \in \Ne(u_i) \mid t'(\bar u, v)} 
                    f^\circ_\pi(\bar u, v)
                -
                \frac {\abs{\dsb{t'(\bar u)}}} {d(u_i)}
                \Ex_{\vb \sim \Fe}\left[
                    e_\rho(H_G(\bar u, \vb), t') 
                    h(e_\eta(H_G(\bar u, \vb), t'))
                \right]
            } \\
            \geq \frac 1 {2^{k+1}}\ep
        \end{gather*}
        is at most:
        \begin{equation*}
            2d \exp\left(
                - \frac {\ep^2 \abs{\dsb{t'(\bar u)}}} {2 \Lam^2}
            \right)
        \end{equation*}
        
        Using a similar argument to the global weighted mean case, there is $N_5$ such that for all $n \geq N_5$, with probability at least $1-\theta$ for all $t'$ such that $\alpha(t, t') > 0$ we have that for at least $1-\delta$ of the tuples $\bar u$ such that $\psi(\bar u)$ and $t(\bar u)$ hold:
        \begin{equation*}
            2d \exp\left(
                - \frac {\ep^2 \abs{\dsb{t'(\bar u)}}} {2 \Lam^2}
            \right)
            < \theta\delta
        \end{equation*}
    
        We now proceed as in the global weighted mean case, creating a random variable similar to $B_n$ and making using of linearity of expectation. We get that for all $n \geq \max(N_3, N_4, N_5)$, with probability at least $1-\theta$ for at least $1-\delta$ of the tuples $\bar u$ for which $\psi(\bar u)$ holds we have that:
        \begin{equation}
            \label{eq:f circ mean vs f pi local; proof:ER denseish}
            \norm*{
                \frac 1 n
                \sum_{v}
                    f^\circ_\pi(\bar u, v)
                -
                f_\pi(H_G(\bar u), \GrTp(\bar u))
            }
            <
            \ep
            \tag{$\triangle_f^{\textrm{loc}}$}
        \end{equation}
    
        Similarly, with $g^\circ_\pi$ defined as above, there is $N_6$ such that for all $n \geq N_6$, with probability at least $1-\theta$ for at least $1-\delta$ of the tuples $\bar u$ for which $\psi(\bar u)$ holds we have that:
        \begin{equation}
            \label{eq:g circ mean vs g pi local; proof:ER denseish}
            \norm*{
                \frac 1 n
                \sum_{v}
                    g^\circ_\pi(\bar u, v)
                -
                g_\pi(H_G(\bar u), \GrTp(\bar u))
            }
            <
            \ep
            \tag{$\triangle_g^{\textrm{loc}}$}
        \end{equation}
    
        With equations (\ref{eq:rho close to e_rho local; proof:ER denseish}), (\ref{eq:eta close to e_eta local; proof:ER denseish}), (\ref{eq:f circ mean vs f pi local; proof:ER denseish}) and (\ref{eq:g circ mean vs g pi local; proof:ER denseish}) established, we can now proceed as before, making use of \cref{lem:distance between fractions}.
    
        This completes the proof of Theorem \ref{thm:inductivenonsparse}.\qedhere
    \end{itemize}
\end{proof}

\paragraph{Applying the lemma to prove the theorem.} 
To prove \cref{thm:main result denser cases}, note that $e_\tau$ is only a function of the $\GrTpSp_0$ (since $\tau$ is a closed term), while $\GrTpSp_0$ consists of a single type, $\top$, which is satisfied by all graphs. So $e_\tau$ is a constant.

%%%%%%%%%%%%%%%%%%%%%%%%%%%%%%%%%%%%%%%%%%%%%%%%%%%%%%%%%%%%%%%%%%%%%%%%%%%%%%%%%%%%%%%
%%%%%%%%%%%%%%%%%%%%%%%%%%%%%%%%%%%%%%%%%%%%%%%%%%%%%%%%%%%%%%%%%%%%%%%%%%%%%%%%%%%%%%%%
%% § Sparse Erdős-Rényi graphs
%%%%%%%%%%%%%%%%%%%%%%%%%%%%%%%%%%%%%%%%%%%%%%%%%%%%%%%%%%%%%%%%%%%%%%%%%%%%%%%%%%%%%%%%
%%%%%%%%%%%%%%%%%%%%%%%%%%%%%%%%%%%%%%%%%%%%%%%%%%%%%%%%%%%%%%%%%%%%%%%%%%%%%%%%%%%%%%%%

\section{Sparse Erdős-Rényi and Barabási-Albert}

We now give the proof for the sparse Erdős-Rényi and Barabási-Albert cases of Theorem \ref{thm:main result}. In addition, we extend the result to the language $\Aggo{\wmeanop, \GCNop, \RW}$ defined in \cref{sec:gcn}. We state the full result here.

\begin{theorem}
    \label{thm:main result sparser cases}
    Consider $(\mu_n)_{n \in \N}$ sampling a graph $G$ from either of the following models and node features independently from i.i.d. bounded distributions on $d$ features.
    \begin{enumerate}
        \item The Erdős-Rényi distribution $\ER(n, p(n))$ where $p$ satisfies \textbf{Sparsity}: for some $K > 0$ we have: $p(n) = Kn^{-1}$.
        \item The Barabási-Albert model $\BA(n, m)$ for any $m \geq 1$.
    \end{enumerate}
    Then for every $\Aggo {\wmeanop, \GCNop, \RW}$ term converges a.a.s.\@ with respect to $(\mu_n)_{n \in \N}$.
\end{theorem}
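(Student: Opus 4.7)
The plan is to prove \cref{thm:inductivesparse} by induction on term structure, and then combine it with \cref{lem:single node local type isomorphism convergence} (weak local convergence) to conclude convergence of closed terms to constants. The extension to the GCN aggregator is handled as a new inductive case.

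For the inductive argument, I would proceed by term depth. The base cases are immediate: for $\feature(x)$ the feature-type controller simply reads off the feature of $x$ from $H_G(\Ne_0(\bar u))$; for constants it returns the constant; and for $\randomwalk(x)$ the value is fully determined by the local neighborhood up to radius $k$ (the random-walk length), which is included in $\Ne_\ell(\bar u)$ provided we fix $\ell$ to be at least the random walk length plus the aggregator depth. For Lipschitz function symbols $F(\rho_1,\ldots,\rho_r)$, the controller is the composition $F(e_{\rho_1}^T, \ldots, e_{\rho_r}^T)$, and the approximation error is controlled by $F$'s Lipschitz constant.

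For the aggregator cases (local weighted mean, GCN, and global weighted mean), the crux is that in sparse graphs the $\ell$-neighborhood $\Ne_\ell(\bar u)$ is a.a.s.\@ a finite tree-like graph whose isomorphism type fully determines the structural part of the computation. For the local weighted mean $\sum_{v\in\Ne(x_i)}\rho(\bar u,v)\star h(\eta(\bar u,v))$, the sum ranges over a bounded number of neighbors $v$ (asymptotically, for all but vanishing fraction of tuples with a given type); each such $v$ extends the neighborhood to a larger $k{+}1$-rooted graph, whose type we can enumerate, and by the inductive hypothesis applied to $\rho(\bar u,v)$ and $\eta(\bar u,v)$, the inner values are Lipschitz functions of the features in the enlarged neighborhood. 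Summing and normalizing gives the required Lipschitz controller $e_\pi^T$. The GCN aggregator case is entirely analogous, with the weight $1/\sqrt{|\Ne(u_i)||\Ne(v)|}$ being a Lipschitz function of the local neighborhood structure (since both degrees are visible in $\Ne_\ell(\bar u)$ once $\ell\ge 1$), so the same induction goes through.

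The global weighted mean $\sum_y \rho(\bar u,y)\star h(\eta(\bar u,y))$ is the hardest step and is where weak local convergence enters. The sum now ranges over \emph{all} nodes, not just neighbors, so we cannot fold everything into a bounded neighborhood. The strategy is to partition the global sum by the $\ell$-neighborhood type of $(\bar u, y)$: for each $k{+}1$-rooted graph type $(U,\bar w w)$ extending $(T,\bar w)$, group together all $y$ such that $\Ne_\ell(\bar u, y)\cong(U,\bar w w)$. For a.a.s.\@ nodes $y$, the neighborhoods of $y$ and of $\bar u$ will be disjoint (since they are both bounded a.a.s.\@ but there are $n$ of them), so the extension type is determined by $y$'s own $\ell$-neighborhood type, and then by \cref{lem:single node local type isomorphism convergence} the proportion of such $y$ converges to the limiting probability $q_U/q_T$. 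Combined with a Hoeffding-type concentration argument over the features $H_G(y)$ within each type class (using independence of features) and the inductive Lipschitz control, this yields a Lipschitz function $e_\pi^T$ depending only on $H_G(\Ne_\ell(\bar u))$ and the limit weights $q_U$. Finally, the theorem follows by applying \cref{thm:inductivesparse} to a closed term $\tau$: since $\tau$ has no free variables, $e_\tau^{(\emptyset,\emptyset)}$ is a constant, and the approximation says that $\dsb{\tau}$ is within $\ep$ of this constant a.a.s.

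The main obstacle is the global weighted mean case combined with the BA model, where the events ``$\Ne_\ell(y)\cong U$'' for different $y$ are not independent (unlike in sparse ER, where they are asymptotically so); here one must rely on the precise form of weak local convergence for BA, which itself requires a nontrivial Pólya-urn analysis, and show that the proportion of each local type concentrates tightly enough to push through the concentration step for the feature average.
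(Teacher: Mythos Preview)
Your proposal is correct and follows essentially the same route as the paper: induction on term structure to establish \cref{thm:inductivesparse}, with the local aggregators (weighted mean and GCN) handled by enumerating the finitely many neighbours determined by the fixed rooted type $T$ and composing the subterm controllers through a Lipschitz function, and the global weighted mean handled by partitioning over extension types $(T',\bar w,y)$, invoking weak local convergence (\cref{lem:single node local type isomorphism convergence}, promoted to tuples) for the proportion of each type, and applying Hoeffding over the independent node features within each type class. One small inaccuracy: the limiting proportion of $y$'s with a given extension type is not $q_U/q_T$ but simply $q_{\Ne_{\ell}^{T'}(y)}$, the limit probability of $y$'s own local type (since almost all $y$ have neighbourhood disjoint from $\bar u$'s, the extension type is determined by $y$'s type alone, and there is no conditioning on $T$ to divide out); this does not affect the argument.
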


As discussed in the body, this requires us to analyze  neighbourhood isomorphism types, which implicitly quantify over local neighbourhoods, rather than atomic types as in the non-sparse cases.

\begin{remark}
    Formally, a $k$-rooted graph is a tuple $(T, \bar u)$. At various points below to lighten the notation we drop the `$\bar u$' and refer simply to a $k$-rooted graph $T$.
\end{remark}

\subsection{Combinatorial tools for the sparse case} \label{subsec:toolssparse}
As in the dense case, before we turn to analysis of our term language, we prove some results
about the underlying random graph model that we can use as tools.

The key combinatorial tool we will use is the following, which states that for any tuple of nodes $\bar u$, the proportion of nodes $v$ such that $(\bar u, v)$ has any particular neighbourhood type converges.

\begin{definition}
    Let $(T, \bar w)$ be a $k$-rooted graph and let $(T', \bar w, y)$ be a $(k+1)$-rooted graph. Then $T'$ \emph{extends} $T$ if $T$ is a subgraph of $T'$. Let $\Ext(T)$ be the set of all $(k+1)$-rooted graphs extending $(T, \bar w)$.
\end{definition}

\begin{theorem}
    \label{thm:weak local convergence tuples}
    Consider sampling a graph $G$ from either the sparse Erdős-Rényi or Barabási-Albert distributions. Let $(T, \bar w)$ be a $k$-rooted graph and take $\ell \in \N$. Then for every $(k+1)$-rooted graph $(T', \bar w, y)$ which extends $(T, \bar w)$ there is $q_{T' \mid T} \in [0,1]$ such that for all $\ep, \theta > 0$ there is $N \in \N$ such that for all $n \geq N$ with probability at least $1-\theta$ we have that for all $k$-tuples of nodes $\bar u$ such that $\Ne^G_\ell(\bar u) \cong (T, \bar w)$:
    \begin{equation*}
        \abs*{
            \frac {\abs{\{v \mid \Ne^G_\ell(\bar u, v) \cong (T', \bar w, y) \}}} n
            -
            q_T
        }
        <
        \ep
    \end{equation*}
    Moreover, we have that:
    \begin{equation*}
        \sum_{T' \in \Ext(T)} q_{T' \mid T} = 1
    \end{equation*}
\end{theorem}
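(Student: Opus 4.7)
The plan is to reduce this conditional convergence statement to the marginal single-node weak local convergence result (\cref{lem:single node local type isomorphism convergence}), via a conditioning-and-factorization argument.

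First I would identify the limiting values $q_{T' \mid T}$ explicitly. Since the $k$-rooted graph $(T, \bar w)$ has only a bounded number of vertices, only $O(1)$ nodes $v$ can lie within distance at most $2\ell + 1$ of some node of $\bar u$; asymptotically all $v$ contribute an $\ell$-neighbourhood that is disjoint from $\Ne_\ell(\bar u)$. For such $v$, the joint neighbourhood $\Ne_\ell(\bar u, v)$ decomposes as the disjoint union of $(T, \bar w)$ with the rooted $\ell$-neighbourhood $(S, y)$ of $v$. I would therefore define $q_{T' \mid T} := q_S$ exactly when $(T', \bar w, y)$ has this disjoint-union form and $q_{T' \mid T} := 0$ otherwise; the ``moreover'' statement $\sum_{T' \in \Ext(T)} q_{T' \mid T} = \sum_S q_S = 1$ then follows directly from the corresponding identity in the single-node case.

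Next, for a fixed tuple $\bar u$ with $\Ne_\ell(\bar u) \cong (T, \bar w)$, I would estimate $\#\{v : \Ne_\ell(\bar u, v) \cong T'\}$. In sparse ER, conditioning on the local structure of $\bar u$ leaves the edges among the remaining vertices distributed as a sparse ER on those $n - O(1)$ vertices; hence the count of vertices $v$ far from $\bar u$ with $\Ne_\ell(v) \cong S$ concentrates around $n \cdot q_S$ via a standard second-moment computation using that $\ell$-neighbourhoods of two well-separated vertices are nearly independent. Uniformity over $\bar u$ follows from the observation that replacing $\bar u$ by a different tuple $\bar u'$ of the same type changes the count by only $O(1)$: the two counts share the same bulk contribution coming from $v$ that are far from both base tuples, and differ only through the bounded-size local regions around each tuple.

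The main obstacle is the Barabási-Albert case. BA is not exchangeable and preferential attachment induces long-range correlations, so the clean conditional factorisation used for sparse ER fails. Here I would invoke the established local weak convergence of BA to its Pólya-point-tree limit to supply the marginal rates $q_S$, and then adapt a martingale argument on the sequential graph-building process, combined with a coupling argument establishing asymptotic independence of $\ell$-neighbourhoods of well-separated vertices, to obtain the joint and uniform refinement. Propagating these asymptotic-independence estimates uniformly over all valid base tuples $\bar u$, while carefully accounting for the preferential-attachment correlations, is the most delicate technical step.
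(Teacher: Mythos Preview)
Your identification of $q_{T' \mid T}$ is exactly right, and the case split into ``$T'$ is a disjoint union of $(T,\bar w)$ with some rooted piece $(S,y)$'' versus ``otherwise'' matches the paper. The ``moreover'' clause also follows as you say.

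Where you diverge from the paper is in the second half, and the divergence is entirely in the direction of overkill. You propose, for sparse ER, a conditioning-on-$\bar u$ plus second-moment argument, and for BA, a martingale/coupling argument to propagate asymptotic independence uniformly in $\bar u$, calling the latter ``the most delicate technical step.'' None of this is needed. The paper's observation is that once $\Ne_\ell(\bar u) \cong T$ is fixed and $y$ lies in a separate component of $T'$ from $\bar w$, the set $\{v : \Ne_\ell(\bar u,v) \cong T'\}$ coincides with the \emph{global} set $\{v : \Ne_\ell(v) \cong S\}$ up to removing the nodes of $\Ne_\ell(\bar u)$, a correction of size exactly $\abs{T}$. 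This gives a deterministic sandwich
\[
\frac{\abs{\{v : \Ne_\ell(v) \cong S\}}}{n} - \frac{\abs{T}}{n}
\ \le\
\frac{\abs{\{v : \Ne_\ell(\bar u,v)\cong T'\}}}{n}
\ \le\
\frac{\abs{\{v : \Ne_\ell(v) \cong S\}}}{n},
\]
whose outer terms do not depend on $\bar u$ at all. Single-node weak local convergence (\cref{lem:single node local type isomorphism convergence}), cited as known for both models, handles the left-hand quantity; the correction $\abs{T}/n$ vanishes trivially. Uniformity in $\bar u$ and the entire BA case therefore fall out for free, with no conditioning, second moments, couplings, or martingales.

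So your plan would work, but you are reproving concentration from scratch in each model where the paper simply reduces to the black-box single-node statement via a set-theoretic inclusion.
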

We refer to the $q_{T' \mid T}$ as \emph{relative neighborhood weights}. They will play a role in defining the controllers,
analogous to that of the relative atomic type weights $\alpha(t,t')$ in the non-sparse case.

Theorem \ref{thm:weak local convergence tuples} follows from what is known as ``weak local convergence'' in the literature. It is essentially a non-parametrised version of \cref{thm:weak local convergence tuples}.

\begin{definition}
    \label{def:weak local convergence}
    A sequence $(\mu_n)_{n \in \N}$ of graph distributions has \emph{weak local convergence} if for every $k$-rooted graph $(T, \bar w)$ and $\ell \in \N$ there is $q_T \in [0,1]$ such that for all $\ep, \theta > 0$ there is $N \in \N$ such that for all $n \geq N$ with probability at least $1-\theta$ we have that:
    \begin{equation*}
        \abs*{
            \frac {\abs{\{\bar u \mid \Ne^G_\ell(\bar u) \cong (T, \bar w)\}}} n
            -
            q_T
        }
        <
        \ep
    \end{equation*}
    and moreover:
    \begin{equation*}
        \sum_{T} q_{T} = 1
    \end{equation*}
\end{definition}

\begin{theorem}
    The sparse Erdős-Rényi and Barabási-Albert distributions have weak local convergence.
\end{theorem}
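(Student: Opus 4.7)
The plan is to prove each case via a standard first moment plus concentration argument. For each $k$-rooted graph $(T, \bar w)$ and depth $\ell$, I define the random variable $X_T^n = |\{\bar u : \Ne^G_\ell(\bar u) \cong (T, \bar w)\}|/n$. The goal is to show that $\mathbb{E}[X_T^n]$ converges to some $q_T$, and that $X_T^n$ concentrates around its mean, after which Chebyshev's inequality plus a union bound over the (countably many but, up to the bound $\ell$, effectively finitely many relevant) types finishes the job. The constraint $\sum_T q_T = 1$ will follow from the fact that for each $n$ the counts partition the set of tuples, combined with a tightness argument on the sizes of neighbourhoods in both models.

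For sparse Erd\H{o}s-R\'enyi, the first moment $\mathbb{E}[X_T^n]$ can be computed directly: the probability that a fixed tuple $\bar u$ has a specified local neighbourhood is a product of edge probabilities $(K/n)$ and non-edge probabilities $(1-K/n)$, balanced by automorphism factors. As $n \to \infty$ this limit is exactly the probability that the depth-$\ell$ neighbourhood of the root(s) of a Galton--Watson tree with $\mathrm{Poisson}(K)$ offspring distribution is isomorphic to $(T, \bar w)$. This gives the candidate $q_T$. For concentration, I use the second moment method: the variance of $\sum_{\bar u} \mathbb{1}[\Ne_\ell(\bar u) \cong T]$ splits into contributions from pairs $(\bar u, \bar u')$ whose $\ell$-neighbourhoods are disjoint (where independence of edges makes the covariance zero) and pairs whose neighbourhoods meet. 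The latter set is small, since in sparse ER the expected number of pairs of tuples at distance at most $2\ell$ is $O(n^{2k-1})$, whereas the main term is $\Theta(n^{2k})$; hence $\mathrm{Var}(X_T^n) = o(1)$, and Chebyshev's inequality yields the desired concentration with probability $1-\theta$.

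For Barab\'asi--Albert, I would appeal to the known local weak convergence of $\BA(n,m)$ to the Pólya-point tree, proved by Berger, Borgs, Chayes and Saberi. This directly supplies the limit $q_T = \mathbb{P}[\text{root neighbourhood of the Pólya-point tree} \cong (T, \bar w)]$ for the first moment. Concentration is again obtained by a second moment argument, although here the analysis is more delicate because edges are not independent in the BA model. The standard route is to exploit a joint coupling of two roots chosen uniformly at random to two independent copies of the Pólya-point tree, as in the Berger--Borgs--Chayes--Saberi work: conditionally on neither root being among the initial $m$ seed vertices and the two roots being chosen far apart in the construction order, the joint law of their $\ell$-neighbourhoods factorises in the limit. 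This factorisation of second moments, together with a tightness estimate showing that ``bad'' pairs form a negligible fraction, gives $\mathrm{Var}(X_T^n) \to 0$.

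The main obstacle is clearly the BA case: verifying joint independence of local neighbourhoods under preferential attachment. This is genuinely nontrivial, and the cleanest route is to cite the Pólya urn representation of BA and the local weak limit theorem of Berger--Borgs--Chayes--Saberi rather than re-prove it. Once that is cited, extending from the rooted (single-vertex) case of local weak convergence to the $k$-rooted statement in \cref{def:weak local convergence} follows by iterated application: one can condition on the neighbourhood of $u_1$, then sample the neighbourhood of $u_2$ from the conditional distribution, and so on; weak local convergence at each step propagates because the $\ell$-neighbourhood of $(u_1, \ldots, u_k)$ is determined by the $\ell$-neighbourhoods of the individual $u_i$ together with the (negligibly rare) event that some of them are close enough to interact. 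The same propagation argument also delivers the relative version in \cref{thm:weak local convergence tuples}, with $q_{T' \mid T}$ obtained as the appropriate conditional probability in the limiting tree.
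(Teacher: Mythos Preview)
The paper's proof of this statement is nothing more than two literature citations: Theorem~2.18 of van der Hofstad for sparse Erd\H{o}s--R\'enyi and Theorem~4.2.1 of Garavaglia's thesis for Barab\'asi--Albert. No argument is given in the paper itself.

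Your sketch is therefore strictly more than what the paper does. The first/second moment argument you outline for sparse ER is correct and is essentially the content of the cited reference. For BA you also end up deferring to the literature (Berger--Borgs--Chayes--Saberi on the P\'olya-point tree), which is the same move the paper makes, just with a different source; your concern that the joint-independence step is ``genuinely nontrivial'' and best cited rather than reproved is exactly the paper's attitude.

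Two remarks on scope. First, the paper only ever invokes this theorem with $k=1$: the proof of \cref{thm:weak local convergence tuples} explicitly applies it ``with $k=1$''. So the standard single-root local weak convergence results in the cited references already suffice, and your discussion of lifting to $k$-tuples via iterated conditioning is unnecessary for the present statement. Second, the relative version $q_{T'\mid T}$ you sketch at the end is precisely what the paper establishes separately as \cref{thm:weak local convergence tuples}, deriving it \emph{from} the present theorem. The paper's reduction there is a bit cleaner than your iterated-conditioning idea: it simply splits on whether the new node $y$ is connected to $\bar w$ in $T'$ (in which case $v$ lies in the fixed-size set $\Ne_\ell(\bar u)$, so the proportion tends to $0$) or not (in which case the count reduces to a single-root neighbourhood count, handled by the $k=1$ case).
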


\begin{proof}
    See \cite[Theorem 2.18]{Hofstad_2024} for the sparse Erdős-Rényi distribution and \cite[Theorem 4.2.1]{Garavaglia-phd} for the Barabási-Albert distribution.
\end{proof}

\begin{proof}[Proof of \cref{thm:weak local convergence tuples}]
    Take $T' \in \Ext(T)$. There are two cases.
       
    \paragraph{Case 1.} There is a path from $y$ to some node $w_i$ of $\bar w$ in $T'$. 
    
    Set $q_{T' \mid T} = 0$. Note that in this case, for all tuples $\bar u$ such that $\Ne^G_\ell(\bar u) \cong T$ we have that:
    \begin{equation*}
        \{v \mid \Ne^G_\ell(\bar u, v) \cong (T', \bar w, y)\} \sse \Ne^G_\ell(u_i)\}
    \end{equation*}
    Since $\Ne^G_\ell(u_i)$ is determined by $T$ and is thus of fixed size, the proportion:
    \begin{equation*}
        \abs*{
            \frac {\abs{\{v \mid \Ne^G_\ell(\bar u, v) \cong (T', \bar w, y)\}}} n
        }
    \end{equation*}
    tends to $0$ as $n$ grows.
    This completes the proof for Case 1.
    
    \paragraph{Case 2.} There is no path from $y$ to any node of $\bar w$ in $T'$.
    Then for all tuples $\bar u$ such that $\Ne^G_\ell(\bar u) \cong T$ we have that:
    \begin{equation*}
        \{v \mid \Ne^G_\ell(\bar u, v) \cong (T', \bar w, y)\} 
        = 
        \{v \mid \Ne^G_\ell(v) \cong \Ne^{T'}_\ell(y)\} \setminus \Ne^G_\ell(\bar u)
    \end{equation*}
    Hence:
    \begin{equation*}
        \frac {\abs{\{v \mid \Ne^G_\ell(v) \cong \Ne^{T'}_\ell(y)\}}} n 
        - 
        \frac {\abs{\Ne^G_\ell(\bar u)}} n
        \leq
        \frac {\abs{\{v \mid \Ne^G_\ell(\bar u, v) \cong (T', \bar w, y)\}}} n
    \end{equation*}
    Also:
    \begin{equation*}
        \frac {\abs{\{v \mid \Ne^G_\ell(\bar u, v) \cong (T', \bar w, y)\}}} n
        \leq
        \frac {\abs{\{v \mid \Ne^G_\ell(v) \cong \Ne^{T'}_\ell(y)\}}} n 
    \end{equation*}
 
    By \cref{lem:single node local type isomorphism convergence} (with $k=1$) we have that:
    \begin{equation*}
        \frac {\abs{\{v \mid \Ne^G_\ell(v) \cong \Ne^{T'}_\ell(y)\}}} n
    \end{equation*}
    converges to some limit $q_{T \mid T'} \coloneqq q_{\Ne^{T'}_\ell(y)}$ asymptotically almost surely, while:
    \begin{equation*}
        \frac {\abs{\Ne^G_\ell(\bar u)}} n
        =
        \frac {\abs{T}} n
    \end{equation*}
    converges to $0$.
    This completes the proof for Case 1.

    Finally, to show that $\sum_{T' \in \Ext(T)} q_{T' \mid T} = 1$, note the set:
    \begin{equation*}
        \{\Ne^{T'}_\ell(y) \mid (T', \bar w, y) \in \Ext(T')\}
    \end{equation*}
    contains all $1$-rooted graphs up to isomorphism which can be the $\ell$-neighbourhood of a node. Therefore, by the ``moreover'' part of \cref{def:weak local convergence} we have that:
    \begin{equation*}
        \sum_{T' \in \Ext(T)} q_{T' \mid T}
        =
        \sum_{T' \in \Ext(T)} q_{\Ne^{T'}_\ell(y)}
        =
        1
    \end{equation*}
\end{proof}

\subsection{Proving the inductive invariant for the sparse cases, and proving the convergence theorem for these cases}

We now begin the proof of \cref{thm:main result sparser cases}. Recall from the body that we will do this via \cref{thm:inductivesparse}, which requires us to define some Lipschitz controllers on neighbourhood types that approximate a given term $\pi$ relative to a neighbourhood type.

Throughout the following, it will be convenient to assume for every $k$-rooted graph isomorphism type ($T, \bar u$) a canonical ordering nodes $T = \{s_1, \ldots, s_{\abs T}\}$ such that:
\begin{itemize}
    \item the first $k$ nodes in the ordering are $\bar u$ and
    \item for any graph $G$, tuple of nodes $\bar u$ and $\ell, \ell' \in \N$ with $\ell' < \ell$ the canonical ordering of $\Ne_{\ell'}(\bar u)$ is an initial segment of the canonical ordering of $\Ne_{\ell}(\bar u)$.
\end{itemize}

Again we will use $\Fe$ for the probability distribution of the node features. For each subterm $\pi$ of $\tau$, its \emph{reach}, denoted $\Reach(\pi)$  is a natural number defined as follows:
\begin{align*}
    \Reach(\feature(u)) &= 0 \\
    \Reach({\vc}) &= 0 \\
    \Reach({\randomwalk(v)}) &= d \\
    \Reach(f(\rho_1, \dots, \rho_r)) &= \max_{i \leq r} \Reach(\rho_i) \\
    \Reach\left(\sum_{v \in \Ne(u)} \rho \star h(\eta)\right) &= \max(\Reach(\rho),
    \Reach(\eta)) + 1 \\
    \Reach\left(\sum_v \rho \star h(\eta)\right) &= 0
\end{align*}
        
Take a subterm $\pi$ of $\tau$. Let $k$ be the number of free variables in $\pi$. We now define the controller at $\pi$ for every $k$-rooted graph $(T, \bar w)$. The controller will be of the form:
\begin{equation*}
    e_\pi^T \colon ([0, 1]^d)^{\abs{T}} \to \R^d
\end{equation*}
Note that when $\abs{T} = 0$ this is a constant.

Recall from \cref{thm:inductivesparse} that our goal is to ensure the following correctness property for our controllers $e_\pi^T$.

\begin{property}
    Let $\pi$ be any subterm of $\tau$ and let $\ell = \Reach(\pi)$. Consider sampling a graph $G$ from either the sparse Erdős-Rényi or Barabási-Albert distributions. For every $k$-rooted graph $(T, \bar w)$ and $\ep, \theta > 0$, there is $N \in \N$ such that for all $n \geq N$ with probability at least $1-\theta$ we have that for every $k$-tuple of nodes $\bar u$ in the graph such that $\Ne_\ell(\bar u) \cong (T, \bar w)$, taking the canonical ordering $\Ne_\ell(\bar u) = \{s_1, \ldots, s_{\abs{T}}\}$:
    \begin{equation*}
        \norm*{
            e_\pi^T(H_G(s_1), \ldots, H_G(s_{\abs T}))
            -
            \dsb{\pi(\bar u)}
        }
        < \ep
    \end{equation*}
\end{property}

For notational convenience, we allow each $e_\pi^T$ to take additional arguments as input, which it will ignore.

\begin{proof}[Proof of \cref{thm:inductivesparse}]
    We give the construction of the controllers in parallel with the proof of correctness. 
    \begin{itemize}

        %%%%%%%%%%%%%%%%%%%%%%%%%%%%%%%%%%
        % Node features
        %%%%%%%%%%%%%%%%%%%%%%%%%%%%%%%%%%

        \item \textbf{Base case} $\pi = \feature(u_i)$. 
        
        Here $\ell = \Reach(\pi) = 0$. Let $e_\pi^T(\bar\va) \coloneqq \va_i$, the feature value of the $i$\textsuperscript{th} node in the tuple. Note that for any $k$-tuple of nodes $\bar u$ in the graph we have:
        \begin{align*}
            e_\pi^T(H_G(s_1), \ldots, H_G(s_{\abs T}))
            &=
            H_G(s_i) \\
            &=
            \dsb{\pi(\bar u)}
        \end{align*}

        %%%%%%%%%%%%%%%%%%%%%%%%%%%%%%%%%%
        % Constant
        %%%%%%%%%%%%%%%%%%%%%%%%%%%%%%%%%%
        
        \item \textbf{Base case} $\pi = \vc$.
        
        Let $e_\pi^T(\bar\va) \coloneqq \vc$.

        %%%%%%%%%%%%%%%%%%%%%%%%%%%%%%%%%%
        % Random walk embedding
        %%%%%%%%%%%%%%%%%%%%%%%%%%%%%%%%%%
        
        \item \textbf{Base case} $\pi = \randomwalk(u_i)$.
        
        Then $\ell = \Reach(\pi) = d$. Note that the random walk embeddings up to length $d$ are entirely determined by the $d$-neighbourhood. Therefore, given rooted graph $(T, \bar w)$, there is $\vr_T$ such that:
        \begin{equation*}
            \dsb{\randomwalk(u_i)} = \vr_T
        \end{equation*}
        for all tuples $\bar u$ such that $\Ne_\ell(\bar u) = T$. So set:
        \begin{equation*}
            e_\pi^T(\bar\va) = \vr_T
        \end{equation*}
        for any $\bar\va$.

        %%%%%%%%%%%%%%%%%%%%%%%%%%%%%%%%%%
        % Lipschitz function
        %%%%%%%%%%%%%%%%%%%%%%%%%%%%%%%%%%
        
        \item \textbf{Inductive step for Lipschitz functions} $\pi = F(\rho_1, \ldots, \rho_r)$.
        
        Note that $\ell = \max_{j \leq r} \Reach(\rho_i)$. Take a $k$-rooted graph $(T, \bar w)$. For each $i \leq r$ let $T_j \coloneqq \Ne^T_{\Reach(\rho_j)}(\bar w)$. Define:
        \begin{equation*}
            e_\pi^T(\bar\va) \coloneqq
            F(
                e_{\rho_1}^{T_1}(\bar\va),
                \ldots,
                e_{\rho_r}^{T_r}(\bar\va)
            )
        \end{equation*}
        
        Now take $\ep, \theta > 0$. By the induction hypothesis, there is $N$ such that for all $n \geq N$ with probability at least $1-\theta$ we have that for every $k$-tuple of nodes $\bar u$ in the graph, letting $T = \Ne_\ell(\bar u)$, for every $j \leq r$:
        \begin{equation*}
            \norm*{
                e_{\rho_j}^{T_j}(H_G(s_1), \ldots, H_G(s_{\abs{T_j}}))
                -
                \dsb{\rho_j(\bar u)}
            }
            < \ep
        \end{equation*}
        Hence, by the Lipschitzness of $F$ we have:
        \begin{align*}
            &\norm*{
                e_\pi^T(H_G(s_1), \ldots, H_G(s_{\abs T}))
                -
                \dsb{\pi(\bar u)}
            } 
            < L_F \ep
        \end{align*}
        where $L_F$ is the Lipschitz constant of $F$.

        %%%%%%%%%%%%%%%%%%%%%%%%%%%%%%%%%%
        % Local weighted mean
        %%%%%%%%%%%%%%%%%%%%%%%%%%%%%%%%%%

        \item \textbf{Inductive step for local weighted mean} $\pi = \sum_{v \in \Ne(u_i)} \rho \star h(\eta)$.
        
        In this step we use that the value of a local aggregator is determined by the values of the terms it is aggregating in the local neighbourhood.
        
        Take a $k$-rooted graph $(T, \bar w)$, where $k$ is the number of free variables in $\pi$. Note that $\ell = \max(\Reach(\rho), \Reach(\eta)) + 1$. 
    
        When $w_i$ has no neighbours in $T$, define:
        \begin{equation*}
            e_\pi^T(\bar\va) \coloneqq {\bf 0}
        \end{equation*}
        and note that for any $k$-tuple of nodes $\bar u$ in the graph such that
        $\Ne_\ell(\bar u) = T$ we have:
        \begin{equation*}
            \dsb{\pi(\bar u)} = {\bf 0} = e_\pi^T(H_G(s_1), \ldots, H_G(s_{\abs T}))
        \end{equation*}
    
        So suppose that $w_i$ has some neighbours in $T$. Enumerate the neighbours as:
        \begin{equation*}
            \Ne^T(w_i) = \{y_1, \ldots, y_r\}
        \end{equation*}
        Define the Lipschitz function:
        \begin{equation*}
            {\WMean}_T(\va_1, \ldots, \va_r, \vb_1, \ldots, \vb_r)
            \coloneqq
            \frac
                {\sum_{j=0}^r \va_j  h(\vb_j)}
                {\sum_{j=0}^r h(\vb_j)}
        \end{equation*}
        Note that whenever $\Ne_\ell^G(\bar u) \cong (T, \bar w)$, letting $\Ne^G(u_i) = \{v_1, \ldots, v_r\}$ be the enumeration of the neighbourhood of $u_i$ given by the isomorphism, we have that:
        \begin{equation}
            \label{eq:interpretation WMean; proof:ER very sparse}
            \dsb{\pi(\bar u)}
            =
            {\WMean}_T (
                \dsb{\rho(\bar u, v_1)}, \ldots, \dsb{\rho(\bar u, v_r)}, 
                \dsb{\eta(\bar u, v_1)}, \ldots, \dsb{\eta(\bar u, v_r)}
            )
            \tag{$\square$}
        \end{equation}
        
        For any $y_j \in \Ne(w_i)$, let:
        \begin{gather*}
            T_j \coloneqq \Ne_{\ell - 1}^T(\bar w, y_j) \\
            T^\rho_j \coloneqq \Ne_{\Reach(\rho)}^T(\bar w, y_j) \\
            T^\eta_j \coloneqq \Ne_{\Reach(\eta)}^T(\bar w, y_j)
        \end{gather*}
        Further, for any $\bar\va \in ([0, 1]^d)^{\abs{T}}$ let $\bar\va^\rho_j$ and $\bar\va^\eta_j$ be the tuple of elements of $\bar\va$ corresponding to the nodes of $T^\rho_j$ and $T^\eta_j$ respectively.

        Let:
        \begin{equation*}
            e_\pi^T(\bar\va)
            \coloneqq 
            {\WMean}_T \left(
                e_\rho^{T^\rho_1}(\bar\va^\rho_1), \ldots, e_\rho^{T^\rho_r}(\bar\va^\rho_r), 
                e_\eta^{T^\eta_1}(\bar\va^\eta_1), \ldots, e_\eta^{T^\eta_r}(\bar\va^\eta_r), 
            \right)
        \end{equation*}
            
        Take $\ep, \theta > 0$.  Applying the induction hypothesis to the term $\rho(\bar u, v)$ and to $\eta(\bar u, v)$, which have $\Reach$ at most $\ell-1$,  and using the fact that $\Ne^T(w_i)$ is finite, there is $N$ such that for all $n \geq N$ with probability at least $1-\theta$, for every $y_j \in \Ne^T(w_i)$ and every $(k+1)$-tuple of nodes $(\bar u, v)$ such that $\Ne^G_{\ell - 1}(\bar u, v) \cong (T_j, \bar w, y_j)$ we have that both:
        \begin{equation*}
            \norm*{
                e_\rho^{T^\rho_j}(H_G(s_1), \ldots, H_G(s_{\abs{T^\rho_j}}))
                -
                \dsb{\rho(\bar u, v)}
            }
            < \ep
        \end{equation*}
        and:
        \begin{equation*}
            \norm*{
                e_\eta^{T^\eta_j}(H_G(s_1), \ldots, H_G(s_{\abs{T^\eta_j}}))
                -
                \dsb{\eta(\bar u, v)}
            }
            < \ep
        \end{equation*}
        Under this event, by \eqref{eq:interpretation WMean; proof:ER very sparse} and the Lipschitzness of ${\WMean}_T$ we have:
        \begin{equation*}
            \norm*{
                e_\pi^T(H_G(s_1), \ldots, H_G(s_{\abs T}))
                -
                \dsb{\pi(\bar u)}
            }
            < L_T\ep
        \end{equation*}
        where $L_T$ is the Lipschitz constant of ${\WMean}_T$.

        \item \textbf{Inductive step for the GCN aggregator} $\pi = \GCNop_{v \in \Ne(u_i)}\rho$.
        
        This step is very similar to the previous, where we again use the fact that the value of a local aggregator is determined by the values of the terms it is aggregating in the local neighbourhood. The only difference is that we now have a different Lipschitz function:
        \begin{equation*}
            {\GCNop}_T(\va_1, \ldots, \va_r)
            \coloneqq
            \sum_{j=1}^r \frac 1 {\abs*{\Ne^T(u_i)} \abs*{\Ne^T(y_j)}} \va_j
        \end{equation*}
        The rest of the proof is as before.

        %%%%%%%%%%%%%%%%%%%%%%%%%%%%%%%%%%
        % Global weighted mean
        %%%%%%%%%%%%%%%%%%%%%%%%%%%%%%%%%%
        
        \item \textbf{Inductive step for global weighted mean} $\pi = \sum_v \rho \star h(\eta)$.
        
        Note that $\ell = 0$. Let $\ell' = \max(\Reach(\rho), \Reach(\eta))$. 
        Take a $k$-rooted graph $(T, \bar w)$.
        For any tuple $\bar u$ and $(k+1)$-rooted graph $(T', \bar w, y)$ extending $(T, \bar w)$ let:
        \begin{equation*}
            \dsb{T'(\bar u)} \coloneq \abs{\{v \mid \Ne^G_{\ell'}(\bar u, v) \cong (T', \bar w, y)\}}
        \end{equation*}

        By the parameterized Weak Local Convergence result, \cref{thm:weak local convergence tuples}, for every such $T'$ extending there is a relative neighborhood weight $q_{T \mid T'} \in [0,1]$ such that for all $\ep, \theta > 0$ there is $N \in \N$ such that for all $n \geq N$ with probability at least $1-\theta$, for every $k$-tuple $\bar u$ such that $\Ne_\ell(\bar u) = T$ we have that:
        \begin{equation*}
            \abs*{
                \frac {\dsb{T'(\bar u)}} n
                -
                q_{T \mid T'}
            }
            <
            \ep
        \end{equation*}
        and moreover the relative neighborhood weights sum to one:
        \begin{equation*}
            \sum_{T' \in \Ext(T)} q_{T \mid T'} = 1
        \end{equation*}

        Consider any $(T', \bar w, y) \in \Ext(T)$. In order to define the controller, we need to identify the nodes in the canonical enumeration corresponding to $\Ne_{\ell'-1}^{T}(\bar u)$. Let:
        \begin{equation*}
            r_{T} \coloneqq \abs*{\Ne_{\ell'-1}^{T}(\bar u)}
        \end{equation*}
        Given tuples of $\R^d$-vectors $\bar\va = (\va_1, \ldots, \va_{r_{T}})$ and $\bar\vb = (\vb_{r_{T}+1}, \ldots, \vb_{\abs{T'}})$, let $\Arrange_{T'}(\bar\va, \bar\vb)$ be the $\abs{T'}$-tuple of vectors obtained by assigning the $\va_i$'s to the nodes in $\Ne_{\ell'-1}^{T}(\bar u)$ and the $\vb_i$'s to the remaining nodes in $T'$, according to the canonical enumeration of $T'$.
            
        Define the controller:
        \begin{equation*}
            e_\pi^T(\bar\va) \coloneqq
            \frac {f_\pi^T(\bar\va)} {g_\pi^T(\bar\va)}
        \end{equation*}
        where:
        \begin{gather*}
            f_\pi^T(\bar\va)
            \coloneqq
            \sum_{T' \in \Ext(T)}
                \Ex_{\bar \vb \sim \Fe}
                \left[
                    e_\rho^{T'}(\Arrange_{T'}(\bar\va, \bar\vb))
                    \cdot
                    h(e_\eta^{T'}(\Arrange_{T'}(\bar\va, \bar\vb)))
                \right]
                \cdot
                q_{T \mid T'}
        \end{gather*}
        and:
        \begin{gather*}
            g_\pi^T(\va_1, \ldots, \va_k)
            \coloneqq
            \sum_{T' \in \Ext(T)}
                \Ex_{\bar \vb \sim \Fe}
                \left[
                    h(e_\eta^{T'}(\Arrange_{T'}(\bar\va, \bar\vb)))
                \right]
                \cdot
                q_{T \mid T'}
        \end{gather*}
        where in both cases the expectation is taken over $\vb_i$'s sampled independently from node feature distribution $\Fe$.
    
        Since:
        \begin{equation*}
            e_\rho^{T'}(\va_1, \ldots, \va_{\abs{T'}})
            h(e_\eta^{T'}(\va_1, \ldots, \va_{\abs{T'}}))
        \end{equation*}
        and:
        \begin{equation*}
            h(e_\eta^{T'}(\va_1, \ldots, \va_{\abs{T'}}))
        \end{equation*}
        are bounded, and $\sum_{T' \in \Ext(T)} q_{T \mid T'}$ converges, these sums converges absolutely, and hence the controller is well-defined.
        
        This completes the definition of the controller. We now present the proof that it is correct.
        
        Take $\ep, \theta > 0$. Take a finite $S \sse \Ext(T)$ such that:
        \begin{equation}
            \label{eq:sum q_t; proof:ER very sparse}
            \sum_{T' \in S} q_{T \mid T'} > 1 - \ep
            \tag{$\bigcirc$}
        \end{equation}
        and each $q_{T \mid T'} > 0$ for $T' \in S$. 
    
        The guarantee given when we applied the parameterized version of Weak Local Convergence, \cref{thm:weak local convergence tuples}, is that there is $N_1$ such that for all $n \geq N_1$ with probability at least $1-\theta$, for all $T' \in S$ and every $k$-tuple $\bar u$ such that $\Ne_\ell(\bar u) = T$ we have that:
        \begin{equation}
            \label{eq:proportion close; proof:ER very sparse}
            \abs*{
                \frac {\dsb{T'(\bar u)}} n
                -
                q_{T \mid T'}
            }
            <
            \ep
            \tag{$\ddagger$}
        \end{equation}

        Define the function:
        \begin{equation*}
            f_\pi^\circ(\bar\va, \bar\vb)
            \coloneqq
            e_\rho^{T'}(\Arrange_{T'}(\bar\va, \bar\vb))
            \cdot
            h(e_\eta^{T'}(\Arrange_{T'}(\bar\va, \bar\vb)))
        \end{equation*}
        and similarly:
        \begin{equation*}
            g_\pi^\circ(\bar\va, \bar\vb)
            \coloneqq
            h(e_\eta^{T'}(\Arrange_{T'}(\bar\va, \bar\vb)))
        \end{equation*}
        Note that these are just the integrands used in $f_\pi^T(\bar\va)$ and $g_\pi^T(\bar\va)$.

        Now, for any $(k+1)$-tuple of nodes $(\bar u, v)$ let $\bar\va^{\bar u}$ be the tuple of node features of $\Ne^{G}_{\ell'-1}(\bar u)$ and $\bar\vb^{(\bar u, v)}$ be the tuple of node features of the remaining nodes in $\Ne^{G}_{\ell'}(\bar u, v)$, ordered as in the canonical enumeration of $\Ne^{G}_{\ell'}(\bar u, v)$.
        
        Note that $f_\pi^\circ$ is bounded, say by $\Lam$. Then for every $\gamma > 0$, by Hoeffding's Inequality (\cref{thm:Hoeffding's Inequality for bounded random variables}) for any tuple $\bar u$ such that $\Ne_\ell(\bar u) = T$ and $T' \in S$ we have that:
        \begin{gather*}
            \Pr\left(
                \norm*{
                    \sum_{\substack{v \\ \Ne^G_{\ell'}(\bar u, v) \cong T'}} 
                        f_\pi^\circ(\bar\va^{\bar u}, \bar\vb^{(\bar u, v)})
                    -
                    \abs{\dsb{T'(\bar u)}}
                    \cdot
                    \Ex_{\bar\vb \sim \Fe} \left[f_\pi^\circ(\bar\va^{\bar u}, \bar\vb)\right]
                }
                \geq
                \abs{\dsb{T'(\bar u)}}
                \gamma
            \right) \\
            \leq
            2 \exp\left(
                - 
                \frac 
                    {2 \dsb{T'(\bar u)} \gamma^2} 
                    {\Lam^2}
            \right) \\
            \leq
            2 \exp\left(
                - \frac {2 n (q_T' - \ep) \gamma^2} {\Lam^2}
            \right)
        \end{gather*}
        where in the last inequality we use \eqref{eq:proportion close; proof:ER very sparse}. Hence, taking a union bound, there is $N_2$ such that for all $n \geq N_2$ with probability at least $1-\theta$, for all $\bar u$ such that $\Ne_\ell(\bar u) = T$ and $T' \in \Ext(T)$ we have:
        \begin{equation*}
            \norm*{
                \sum_{\substack{v \\ \Ne^G_{\ell'}(\bar u, v) \cong T'}} 
                    f_\pi^\circ(\bar\va^{\bar u}, \bar\vb^{(\bar u, v)})
                -
                \abs{\dsb{T'(\bar u)}}
                \cdot
                \Ex_{\bar\vb \sim \Fe} \left[f_\pi^\circ(\bar\va^{\bar u}, \bar\vb)\right]
            }
            <
            \abs{\dsb{T'(\bar u)}}
            \gamma
        \end{equation*}
    
       If we divide both sides of the inequality through by $n$, take a $\gamma$ below $1$, and apply \eqref{eq:proportion close; proof:ER very sparse} again, we conclude that there is $N_4$ such that for all $n \geq N_4$ with probability at least $1-\theta$, for all $\bar u$ such that $\Ne_\ell(\bar u) = T$ and $T' \in \Ext(T)$ we have: 
        \begin{equation}
            \label{eq:f circ mean vs f pi; proof:ER very sparse}
            \norm*{
                \frac 1 n
                \sum_{\substack{v \\ \Ne^G_{\ell'}(\bar u, v) \cong T'}} 
                    f_\pi^\circ(\bar\va^{\bar u}, \bar\vb^{(\bar u, v)})
                -
                q_{T \mid T'}
                \Ex_{\bar\vb \sim \Fe} \left[f_\pi^\circ(\bar\va^{\bar u}, \bar\vb)\right]
            }
            <
            \ep
            \tag{$\triangle_f$}
        \end{equation}
    
        Similarly, there is $N_5$ such that for all $n \geq N_5$ with probability at least $1-\theta$, for all $\bar u$ such that $\Ne_\ell(\bar u) = T$ and $T' \in \Ext(T)$ we have:
        \begin{equation}
            \label{eq:g circ mean vs f pi; proof:ER very sparse}
            \norm*{
                \frac 1 n
                \sum_{\substack{v \\ \Ne^G_{\ell'}(\bar u, v) \cong T'}} 
                    g_\pi^\circ(\bar\va^{\bar u}, \bar\vb^{(\bar u, v)})
                -
                q_{T \mid T'}
                \Ex_{\bar\vb \sim \Fe} \left[g_\pi^\circ(\bar\va^{\bar u}, \bar\vb)\right]
            }
            <
            \ep
            \tag{$\triangle_g$}
        \end{equation}
    
        By the induction hypothesis, there is $N_6$ such that for all $n \geq N_6$ with probability at least $1-\theta$, for every $T' \in S$ and every $(k+1)$-tuple of nodes $(\bar u, v)$ such that $\Ne^G_{\ell'}(\bar u, v) \cong T'$ we have that both: 
        \begin{equation}
            \label{eq:rho close to e_rho 2; proof:ER very sparse}
            \norm*{
                e_\rho^{T'}(H_G(s_1), \ldots, H_G(s_{\abs{T'}}))
                -
                \dsb{\rho(\bar u, v)}
            }
            < \ep
            \tag{$\dagger_\rho$}
        \end{equation}
        and:
        \begin{equation}
            \label{eq:eta close to e_eta 2; proof:ER very sparse}
            \norm*{
                e_\eta^{T'}(H_G(s_1), \ldots, H_G(s_{\abs{T'}}))
                -
                \dsb{\eta(\bar u, v)}
            }
            < \ep
            \tag{$\dagger_\eta$}
        \end{equation}

        Let $N \coloneqq \max(N_1, N_2, N_3, N_4, N_5, N_6)$ and take $n \geq N$. Then for such $n$ the probability that these six events happen is at least $1 - 6 \theta$. We will condition on this.
        
        Take any $k$-tuple of nodes $\bar u$ such that $\Ne_\ell(\bar u) = T$. It suffices to show, using the definition of the interpretation of the weighted mean operator, that:
        \begin{equation*}
            \norm*{
                \frac 
                    {\sum_{v} \dsb {\rho(\bar u, v)} h(\dsb {\eta(\bar u, v)})} 
                    {\sum_{v} h(\dsb {\eta(\bar u, v)})}
                -
                \frac
                {
                    f_\pi^T(
                        H_G(u_1), \ldots, H_G(u_k)
                    )
                }
                {
                    g_\pi^T(
                        H_G(u_1), \ldots, H_G(u_k)
                    )
                }
            }
            < \iota
        \end{equation*}
        for some $\iota$ which we can make arbitrarily small by choosing $\ep$ small
        enough.
    
        As above it suffices to find $\zeta, \xi > 0$ which we can make arbitrarily small and constants $\nu, \Omega > 0$ such that:
        \begin{gather}
            \label{eq:zeta 2; proof:ER very sparse}
            \norm*{
                \frac 1 {n} 
                \sum_{v} \dsb {\rho(\bar u, v)} h(\dsb {\eta(\bar u, v)})
                -
                f_\pi^T(H_G(s_1), \ldots, H_G(s_{\abs T}))
            }
            < \zeta \\
            \label{eq:xi 2; proof:ER very sparse}
            \norm*{
                \frac 1 {n} 
                \sum_{v} h(\dsb {\eta(\bar u, v)})
                -
                g_\pi^T(H_G(s_1), \ldots, H_G(s_{\abs T}))
            }
            < \xi \\
            \label{eq:x Omega 2; proof:ER very sparse}
            \norm*{
                \sum_{v} \dsb {\rho(\bar u, v)} h(\dsb {\eta(\bar u, v)})
            }
            < \Omega n \\
            \label{eq:z Omega 2; proof:ER very sparse}
            \norm*{
                \sum_{v} h(\dsb {\eta(\bar u, v)})
            }
            < \Omega n \\
            \label{eq:z nu 2; proof:ER very sparse}
            \forall i \leq d \colon 
            \left[
                \sum_{v} h(\dsb {\eta(\bar u, v)})
            \right]_i
            > \nu n
        \end{gather}
    
        \Eqref{eq:x Omega 2; proof:ER very sparse}, \eqref{eq:z Omega 2; proof:ER very sparse} and \eqref{eq:z nu 2; proof:ER very sparse} follow as before. We show \eqref{eq:xi 2; proof:ER very sparse}, the proof of \eqref{eq:zeta 2; proof:ER very sparse} being similar.
        
        By \eqref{eq:eta close to e_eta 2; proof:ER very sparse} we have that for every $v$:
        \begin{equation*}
            \norm*{
                h(\dsb{\eta(\bar u, v)})
                -
                g_\pi^\circ(\bar\va^{\bar u}, \bar\vb^{(\bar u, v)})
            }
            < \ep
        \end{equation*}
        Hence:
        \begin{equation*}
            \norm*{
                \frac 1 {n} 
                \sum_{v} h(\dsb {\eta(\bar u, v)})
                -
                \frac 1 {n}
                \sum_{v} g_\pi^\circ(\bar\va^{\bar u}, \bar\vb^{(\bar u, v)})
            }
            <
            \ep
        \end{equation*}
        
        Now:
        \begin{equation*}
            \frac 1 {n}
            \sum_{v} g_\pi^\circ(\bar\va^{\bar u}, \bar\vb^{(\bar u, v)})
            =
            \frac 1 {n}
            \sum_{T' \in \Ext(T)}
                \sum_{\substack{v \\ \Ne^G_{\ell'}(\bar u, v) \cong T'}}
                    g_\pi^\circ(\bar\va^{\bar u}, \bar\vb^{(\bar u, v)})
        \end{equation*}
        Letting $\Lam$ be a bound on the norm of $g_\pi^\circ(\bar\va^{\bar u}, \bar\vb^{(\bar u, v)})$, by \eqref{eq:sum q_t; proof:ER very sparse} we have that:
        \begin{equation*}
            \norm*{
                \frac 1 {n}
                \sum_{v} g_\pi^\circ(\bar\va^{\bar u}, \bar\vb^{(\bar u, v)})
                -
                \frac 1 {n}
                \sum_{T' \in S}
                    \sum_{\substack{v \\ \Ne^G_{\ell'}(\bar u, v) \cong T'}}
                        g_\pi^\circ(\bar\va^{\bar u}, \bar\vb^{(\bar u, v)})
            }
            <
            \ep\Lam
        \end{equation*}
        
        By \eqref{eq:g circ mean vs f pi; proof:ER very sparse} we have that:
        \begin{equation*}
            \norm*{
                \frac 1 {n}
                \sum_{T' \in S}
                    \sum_{\substack{v \\ \Ne^G_{\ell'}(\bar u, v) \cong T'}}
                        g_\pi^\circ(\bar\va^{\bar u}, \bar\vb^{(\bar u, v)})
                -
                \frac 1 {n}
                \sum_{T' \in S}
                    q_{T \mid T'}
                    \Ex_{\bar\vb \sim \Fe} \left[g_\pi^\circ(\bar\va^{\bar u}, \bar\vb)\right]
            }
            \leq
            \ep
        \end{equation*}
        
        Finally, by \eqref{eq:sum q_t; proof:ER very sparse} again we have that:
        \begin{equation*}
            \norm*{
                \frac 1 {n}
                \sum_{T' \in S}
                    q_{T \mid T'}
                    \Ex_{\bar\vb \sim \Fe} \left[g_\pi^\circ(\bar\va^{\bar u}, \bar\vb)\right]
                -
                g_\pi^T(\bar\va^{\bar u})
            }
            <
            \ep\Lam
        \end{equation*}

    \end{itemize}

    This completes the proof of the inductive construction of the controllers, thus proving Theorem \ref{thm:inductivesparse}.
\end{proof}

\paragraph{Application to prove the final theorem, \cref{thm:main result} for the sparse Erdős-Rényi and Barabási-Albert cases.}
To complete the proof, we note that the term $\tau$ has no free variables, and as a subterm of itself has reach $0$. The controller $e_\tau^\varnothing$ of is a constant $\vz$, and hence by the induction hypothesis applied to it, for every $\ep, \theta > 0$ there is $N \in \N$ such that for all $n \geq N$ with probability at least $1-\theta$ we have that:
\begin{equation*}
    \norm*{
        \dsb{\tau}
        -
        \vz
    }
    <
    \ep
\end{equation*}

%%%%%%%%%%%%%%%%%%%%%%%%%%%%%%%%%%%%%%%%%%%%%%%%%%%%%%%%%%%%%%%%%%%%%%%%%%%%%%%%%%%%%%%%
%%%%%%%%%%%%%%%%%%%%%%%%%%%%%%%%%%%%%%%%%%%%%%%%%%%%%%%%%%%%%%%%%%%%%%%%%%%%%%%%%%%%%%%%
%% § Proof for the stochastic block model
%%%%%%%%%%%%%%%%%%%%%%%%%%%%%%%%%%%%%%%%%%%%%%%%%%%%%%%%%%%%%%%%%%%%%%%%%%%%%%%%%%%%%%%%
%%%%%%%%%%%%%%%%%%%%%%%%%%%%%%%%%%%%%%%%%%%%%%%%%%%%%%%%%%%%%%%%%%%%%%%%%%%%%%%%%%%%%%%%

\section{Proof for the stochastic block model}

We now prove the convergence result for the stochastic block model. The proof follows the same structure as the density case in \cref{thm:main result}, except that the notion of graph type is augmented slightly. We therefore only indicate the differences in the proof.

For this is it helpful to be able to remember the community to which each node belongs. Given any graph $G$ generated by the stochastic block model and node $v$, let $C(v) \in \{1, \ldots, m\}$ be the community to which $v$ belongs.

To prove the result, we first need that the random work positional encodings converge, as in \cref{lem:RW zero}. In fact they converge to $\bm 0$.

\begin{lemma}
    \label{lem:RW zero SBM}
    Let $n_1, \ldots, n_m \colon \N \to \N$ and $\mP$ be as in \cref{thm:main result}. Then for every $k \in \N$ and $\ep, \theta > 0$ there is $N \in \N$ such that for all $n \geq N$ with probability at least $1-\theta$, for all nodes $v$ we have that:
    \begin{equation*}
        \norm*{
            \randomwalk_k(v)
        }
        <
        \ep
    \end{equation*}
\end{lemma}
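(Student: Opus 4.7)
The plan is to mirror the dense Erdős-Rényi argument of \cref{lem:RW zero}: once we show that, with high probability, every non-isolated node has degree $\Theta(n)$, a one-line transition-matrix estimate gives $\randomwalk_{v,k} = O(1/n)$ uniformly in $v$. The only novelty compared to the i.i.d.\@ case is tracking the different expected degrees of the communities.

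First, I would establish a uniform degree lower bound. For a node $v$ with community index $i = C(v)$, the degree $d(v)$ is a sum of $n-1$ independent Bernoulli random variables with parameters $\mP_{i, C(u)}$, having mean $\mu_i(n) = \sum_j n_j(n) \mP_{i,j} - \mP_{i,i}$. Since each $n_j(n)/n$ converges to some $\alpha_j$, the limit $\tilde \mu_i \coloneqq \sum_j \alpha_j \mP_{i,j}$ exists. Partition the community indices into $I^+ \coloneqq \{i : \tilde \mu_i > 0\}$ and its complement. For nodes in communities in $I^+$, Hoeffding's Inequality (\cref{thm:Hoeffding's Inequality for bounded random variables}) combined with a union bound over the at most $n$ such nodes gives, with probability at least $1-\theta/2$ and for all sufficiently large $n$, that $d(v) > cn$, where $c \coloneqq \tfrac 1 2 \min_{i \in I^+} \tilde \mu_i$. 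For a community $i \notin I^+$, either $\mP_{i,j} = 0$ for all $j$ and its nodes are almost surely isolated (in which case $\randomwalk_k(v) = \bm 0$ by convention), or the expected degree still diverges through edges into communities whose size grows sub-linearly; in that latter case the same Hoeffding argument, applied per community, still yields a (slower but divergent) lower bound on the degree.

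Next, I would apply the transition-matrix estimate. Letting $P_{uw} \coloneqq \mathbf{1}[uw \in E]/d(u)$ be the random-walk transition matrix on non-isolated nodes, we have $\randomwalk_{v,k} = (P^k)_{vv}$, and expanding one step while using that the rows of $P^{k-1}$ sum to $1$,
\begin{equation*}
    (P^k)_{vv} = \sum_w (P^{k-1})_{vw} P_{wv} \leq \max_w P_{wv} \leq \frac 1 {\min_w d(w)}.
\end{equation*}
Conditioning on the event from the previous step, the right-hand side is at most $1/(cn)$, which is below $\ep$ for $n$ large enough, uniformly in $v$. The norm bound on $\randomwalk_k(v)$ then follows by applying this coordinate-wise over the finitely many walk lengths $1, \ldots, k$.

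The main obstacle is the bookkeeping around communities with $\tilde \mu_i = 0$: if $\alpha_j = 0$ while $n_j(n) \to \infty$ and some $\mP_{i,j} > 0$, degrees in community $i$ grow only sub-linearly and must be handled by a separate per-community concentration bound rather than folded into the global constant $c$. Once every non-isolated node has a degree tending to infinity uniformly, the transition-matrix estimate concludes the proof.
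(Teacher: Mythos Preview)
Your argument follows the paper's closely: concentrate all degrees via Hoeffding's inequality plus a union bound over the $n$ nodes, separate off the communities whose limiting relative degree vanishes, and then bound the return probability using that the remaining degrees are $\Theta(n)$. Two small differences are worth noting.

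For the return-probability estimate the paper reuses the walk-counting from the dense case of \cref{lem:RW zero}: the number of length-$k$ closed walks from $v$ is at most the total number of length-$(k{-}1)$ walks, giving a bound of order $d_{\max}^{k-1}/d_{\min}^k = O(1/n)$. Your row-stochastic estimate $(P^k)_{vv}\le \max_w P_{wv}\le 1/\min_w d(w)$ reaches the same conclusion in one line and is in fact sharper.

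For the degenerate communities the paper is blunter than you: it simply asserts that $r_{C(v)}=\sum_j q_j\mP_{C(v),j}=0$ forces $v$ to be isolated, whence $\randomwalk_k(v)=\bm 0$. This is immediate once every limiting proportion $q_j$ is positive, since then each $\mP_{C(v),j}$ must vanish. Under that reading your sub-linear-degree bookkeeping is unnecessary --- and just as well, because your global $\min_w d(w)$ bound would not cope if some node reachable from $v$ had bounded degree. The isolation observation simply removes the obstacle you flag at the end.
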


\begin{proof}
    For each $j \in \{1, \ldots, m\}$, let $\frac{n_j} n$ converge to $q_j$. Let:
    \begin{equation*}
        r_j \coloneqq q_1 \mP_{1, j} + \cdots + q_m \mP_{m, j}
    \end{equation*}
    Note that $nr_j$ is the expected degree of a node in community $j$. By Hoeffding's Inequality (\cref{thm:Hoeffding's Inequality for bounded random variables}) and a union bound, there is $N$ such that for all $n \geq N$ with probability at least $1-\theta$, for all $v$ we have that:
    \begin{equation*}
        \abs*{
            \frac {d(v)} n
            -
            r_{C(v)}
        }
        <
        \ep
    \end{equation*}
    Take $n \geq N$ and condition on this event. Take any node $v$. When $r_{C(v)} = 0$, the node $v$ has no neighbours, and so $\randomwalk_k(v) = {\bf 0}$.

    So assume $r_{C(v)} > 0$. Let $j = C(v)$. Then as in the proof of \cref{lem:RW zero} we can show that the proportion of random walks of length $k$ starting at $v$ is at most:
    \begin{equation*}
        \frac 
            {(\tilde p + \ep)^{k-1}}
            {(\tilde p - \ep)^k}
        n^{-1}
    \end{equation*}
    which converges to $0$ as $n \to \infty$.
\end{proof}

We now follow the structure of the proof for the Erd\H{o}s-R\'{e}nyi dense cases. This time we augment our vocabulary for atomic types with a predicate $P_j$ for each community $j$, so that $P_j(v)$ holds if and only if $v$ belongs to community $j$.

With this we define the community atomic type of a tuple of nodes $\bar u$ in graph, notation $\ComTp(\bar u)$ to be the atomic formulas satisfied by $\bar u$ in the language augmented with the $P_j$ predicates. For $k \in \N$ let $\ComTpSp_k$ be the set of all complete community atomic types with $k$ free variables.

For each $j \in \{1, \ldots, m\}$, let $\frac{n_j} n$ converge to $q_j$. For any type $t(\bar u)$ and $t'(\bar u, v) \in\Ext(t)$, let:
\begin{equation*}
    \alpha(t, t') \coloneqq q_{C(v)}
\end{equation*}
Further, for any type $t(\bar u)$, free variable $u_i$ in $t$ and $t'(\bar u, v) \in \Ext_{u_i}(t)$, let:
\begin{equation*}
    \alpha_{u_i}(t, t') \coloneqq \mP_{C(u_i), C(v)} q_{C(v)}
\end{equation*}

For any term $\pi$ with $k$ free variables, the \emph{feature-type controller}:
\begin{equation*}
    e_\pi \colon ([0, 1]^d)^k \times \ComTpSp_k \to [0, 1]^d
\end{equation*}
is defined exactly as as in the proof of the density case of \cref{thm:main result}, using the extension proportions $\alpha(t, t')$ and $\alpha_{u_i}(t, t')$ just defined.

We show by induction that every $\ep, \delta, \theta > 0$ and $\wedge$-desc $\psi(\bar u)$, there is $N \in \N$ such that for all $n \geq N$, with probability at least $1- \theta$ in the space of graphs of size $n$, out of all the tuples $\bar u$ such that $\psi(\bar u)$, at least $1- \delta$ satisfy:
\begin{equation*}
    \norm*{
        e_\pi(H_G(\bar u), \GrTp(\bar u))
        -
        \dsb{\pi(\bar u)}
    }
    < \ep
\end{equation*}

We then proceed as in the proof of the non-sparse Erdős-Rényi cases of \cref{thm:main result}. The only difference is that when showing \eqref{eq:f circ mean vs f pi; proof:ER denseish} and \eqref{eq:g circ mean vs f pi; proof:ER denseish} we use the fact that the expected proportion of type extensions $t'(\bar u, v)$ of a type $t(\bar u)$ at a node $u_j$ is $\alpha(t, t')$.
\section{Additional experiments}
\label{sec:additional_experiments}

In this section we provide additional experiments using MeanGNN, GCN \cite{kipf2017semisupervised}, GAT \cite{Velickovic2018}, and GPS+RW with random walks of length up to 5, over the distributions $\regularER$, $\lognER$, $\sparseER$ and $\BAm$. We also experiment with an SBM of 10 communities with equal size, where an edge between nodes within the same community is included with probability $0.7$ and an edge between nodes of different communities is included with probability $0.1$.

\textbf{Setup.} Our setup is carefully designed to eliminate confounding factors:
\begin{enumerate}[label={\large\textbullet}]
    \item We consider 5 models with the same architecture, each having randomly initialized weights, utilizing a $\operatorname{ReLU}$ non-linearity, and applying a softmax function to their outputs. Each model uses a hidden dimension of $128$, $3$ layers and an output dimension of $5$.
    \item We draw graphs of sizes up to 10,000, where we take 100 samples of each graph size. Node features are independently drawn from $U[0, 1]$ and the initial feature dimension is $128$.
\end{enumerate}

Further details are available in the experiments repository at \url{\githubrepo}.

Much like in \cref{sec:experiments}, the convergence of class probabilities is apparent across all models and graph distributions, in accordance with our main theorems (\textbf{Q1}). We again observe that attention-based models such as GAT and GPS+RW exhibit delayed convergence and greater standard deviation in comparison to MeanGNN and GCN, which further strengthening our previous conclusions.

% ER
\begin{figure*}
    \centering
    \includegraphics[width=0.4\linewidth]{plots_main/class_scores/without_y_label/ER_MeanGNN.jpg}
    \hfill
    \includegraphics[width=0.4\linewidth]{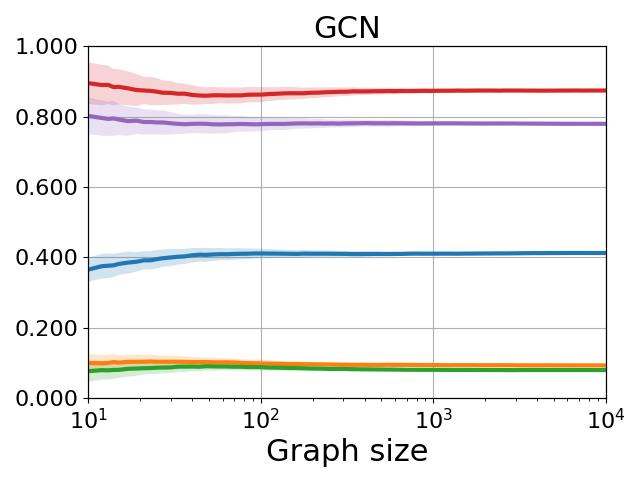}
    \\
    \includegraphics[width=0.4\linewidth]{plots_main/class_scores/without_y_label/ER_GAT.jpg}
    \hfill
    \includegraphics[width=0.4\linewidth]{plots_main/class_scores/without_y_label/ER_GPS+RW.jpg}
    \label{fig:sparse_er}
    \caption{The 5 class probabilities (in different colours) of a $\meangnn$, GCN, GAT and GPS+RW model initialization over the $\regularER$ graph distributions, as we draw increasingly larger graphs.}
\end{figure*}

% SparseER
\begin{figure*}
    \centering
    \includegraphics[width=0.4\linewidth]{plots_main/class_scores/without_y_label/SparseER_MeanGNN.jpg}
    \hfill
    \includegraphics[width=0.4\linewidth]{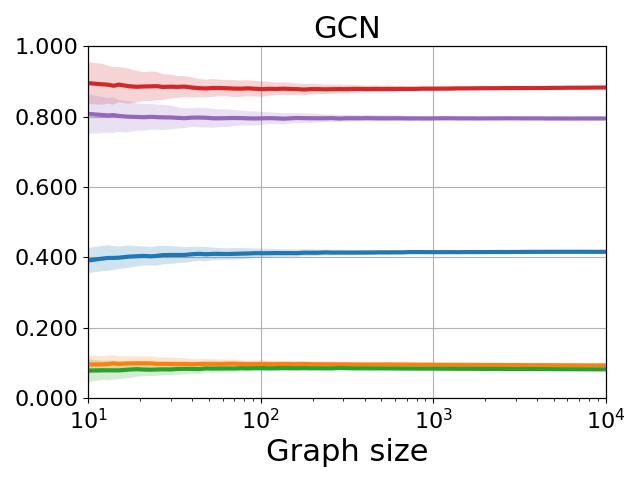}
    \\
    \includegraphics[width=0.4\linewidth]{plots_main/class_scores/without_y_label/SparseER_GAT.jpg}
    \hfill
    \includegraphics[width=0.4\linewidth]{plots_main/class_scores/without_y_label/SparseER_GPS+RW.jpg}
    \label{fig:er}
    \caption{The 5 class probabilities (in different colours) of a $\meangnn$, GCN, GAT and GPS+RW model initialization over the $\lognER$ graph distributions, as we draw increasingly larger graphs.}
\end{figure*}

% VerySparseER
\begin{figure*}
    \centering
    \includegraphics[width=0.4\linewidth]{plots_main/class_scores/without_y_label/VerySparseER_MeanGNN.jpg}
    \hfill
    \includegraphics[width=0.4\linewidth]{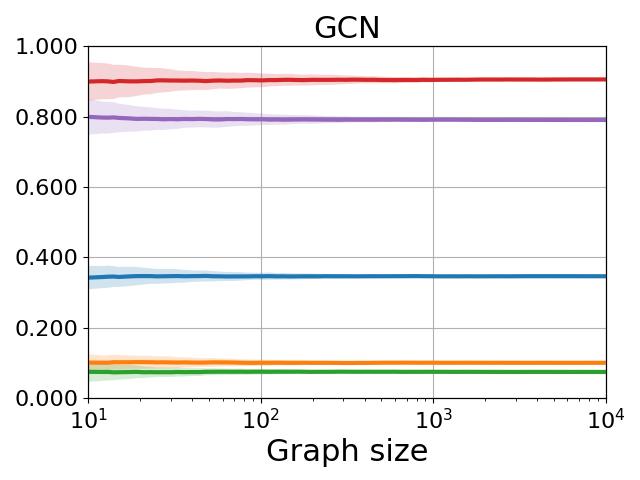}
    \\
    \includegraphics[width=0.4\linewidth]{plots_main/class_scores/without_y_label/VerySparseER_GAT.jpg}
    \hfill
    \includegraphics[width=0.4\linewidth]{plots_main/class_scores/without_y_label/VerySparseER_GPS+RW.jpg}
    \label{fig:very_sparse_er}
    \caption{The 5 class probabilities (in different colours) of a $\meangnn$, GCN, GAT and GPS+RW model initialization over the $\sparseER$ graph distributions, as we draw increasingly larger graphs.}
\end{figure*}

% SBM
\begin{figure*}
    \centering
    \includegraphics[width=0.4\linewidth]{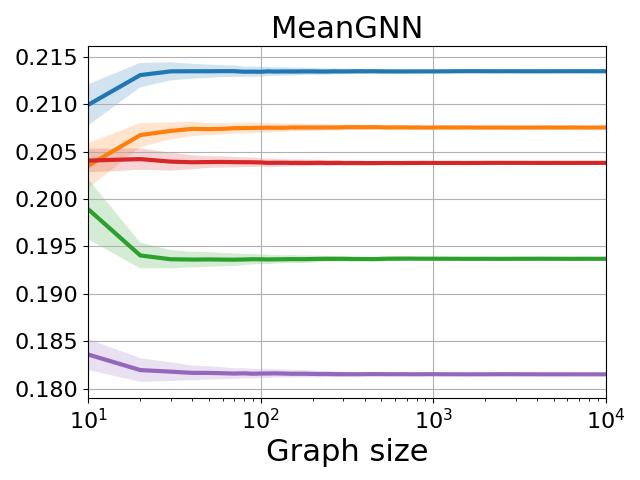}
    \hfill
    \includegraphics[width=0.4\linewidth]{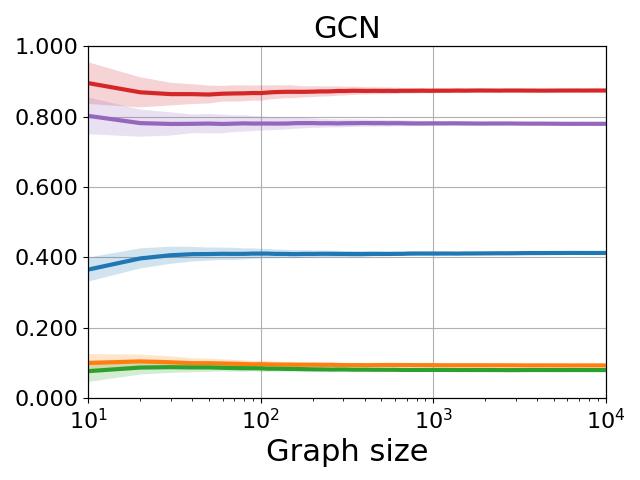}
    \\
    \includegraphics[width=0.4\linewidth]{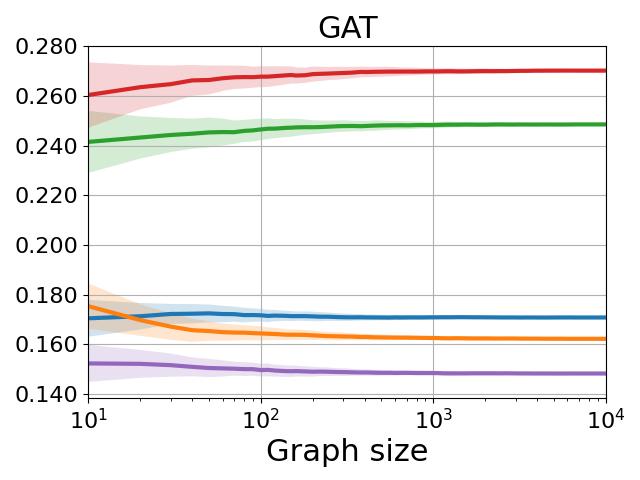}
    \hfill
    \includegraphics[width=0.4\linewidth]{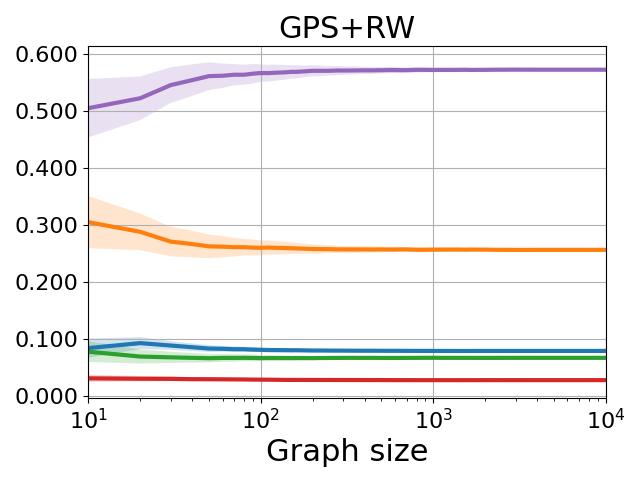}
    \label{fig:sbm}
    \caption{The 5 class probabilities (in different colours) of a $\meangnn$, GCN, GAT and GPS+RW model initialization over the SBM graph distributions, as we draw increasingly larger graphs.}
\end{figure*}

% BA
\begin{figure*}
    \centering
    \includegraphics[width=0.4\linewidth]{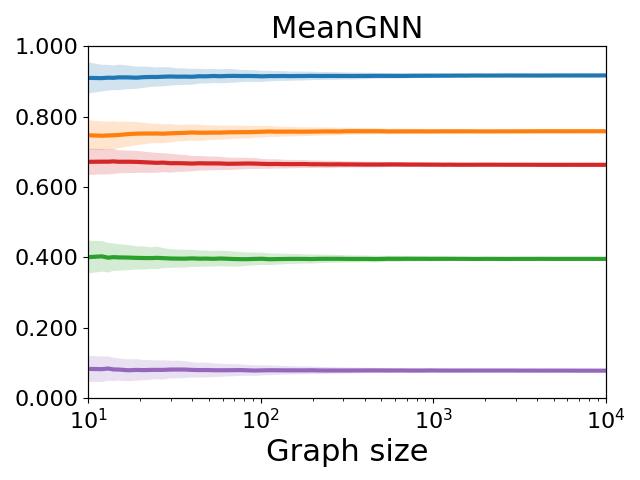}
    \hfill
    \includegraphics[width=0.4\linewidth]{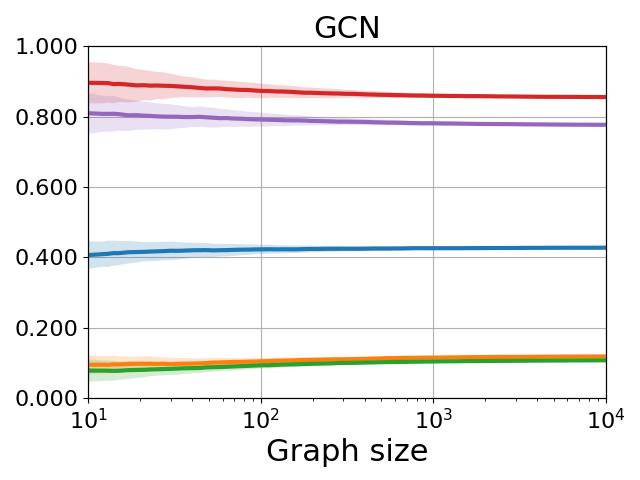}
    \\
    \includegraphics[width=0.4\linewidth]{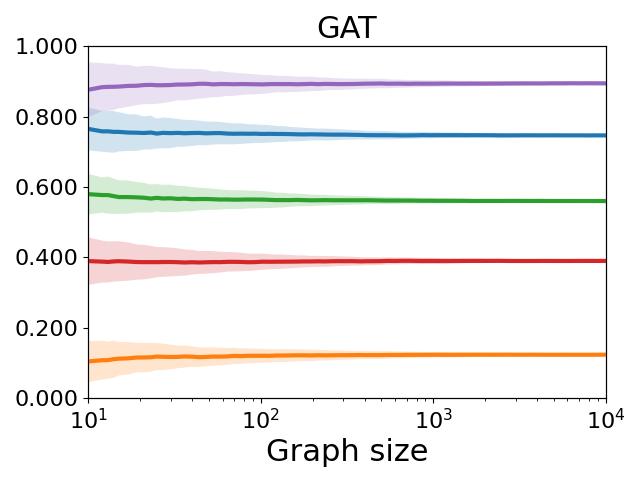}
    \hfill
    \includegraphics[width=0.4\linewidth]{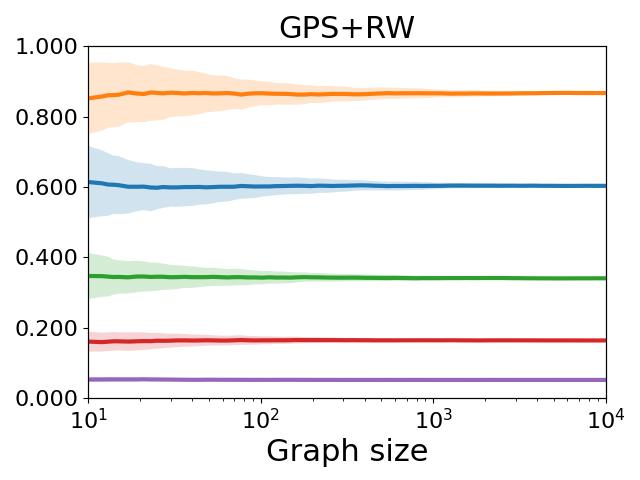}
    \label{fig:ba}
    \caption{The 5 class probabilities (in different colours) of a $\meangnn$, GCN, GAT and GPS+RW model initialization over the $\BAm$ graph distributions, as we draw increasingly larger graphs.}
\end{figure*}

% ENZYMES
\begin{figure*}
    \centering
    \includegraphics[width=0.6\linewidth]{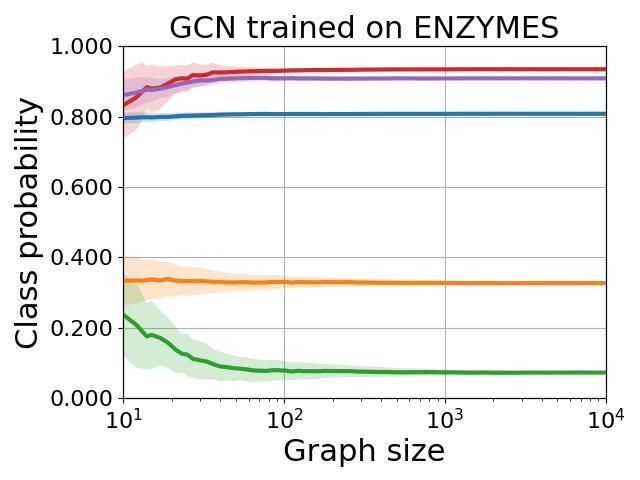}
    \label{fig:enzymes}
    \caption{A three-layer GCN with hidden dimension 128 is trained on the ENZYMES dataset with one class removed so that there are five output classes. This model is then run on graphs drawn from $\ER(n, p(n) = 0.1)$ for increasing sizes $n$, and the mean output probabilities are recorded, along with standard deviation}
\end{figure*}
\section{Acknowledgements}
This research was funded in whole or in part by EPSRC grant EP/T022124/1. For the purpose of Open Access, the authors have applied a CC BY public copyright license to any Author Accepted Manuscript (AAM) version arising from this submission.
% \input{neuripschecklistcameraready}

%%%%%%%%%%%%%%%%%%%%%%%%%%%%%%%%%%%%%%%%%%%%%%%%%%%%%%%%%%%%%%%%%%%%%%%%%%%%%%%
%%%%%%%%%%%%%%%%%%%%%%%%%%%%%%%%%%%%%%%%%%%%%%%%%%%%%%%%%%%%%%%%%%%%%%%%%%%%%%%

\end{document}